\theoremstyle{plain}
\newtheorem{theorem}{Theorem}[section]
\newtheorem{lemma}[theorem]{Lemma}
\theoremstyle{definition}
\theoremstyle{remark}
\title{On the Role of Hidden States of Modern Hopfield Network in Transformer}
\author{%
Tsubasa Masumura$^{*}$ \quad
Masato Taki$^{*}$ \\
Graduate School of Artificial Intelligence and Science, Rikkyo University, Japan\\
\texttt{\{t\_masumura,taki\_m\}@rikkyo.ac.jp}
}
\begin{document}

\maketitle
\def\thefootnote{*}\footnotetext{These authors contributed equally to this work}
\def\thefootnote{\arabic{footnote}}

\begin{abstract}
Associative memory models based on Hopfield networks and self-attention based on key-value mechanisms have been popular approaches in the study of memory mechanisms in deep learning. It has been pointed out that the state update rule of the modern Hopfield network (MHN) in the adiabatic approximation is in agreement with the self-attention layer of Transformer. In this paper, we go beyond this approximation and investigate the relationship between MHN and self-attention. Our results show that the correspondence between Hopfield networks and Transformers can be established in a more generalized form by adding a new variable, the hidden state derived from the MHN, to self-attention. This new attention mechanism, modern Hopfield attention (MHA), allows the inheritance of attention scores from the input layer of the Transformer to the output layer, which greatly improves the nature of attention weights. In particular, we show both theoretically and empirically that MHA hidden states significantly improve serious problem of deep Transformers known as rank collapse and token uniformity. We also confirm that MHA can systematically improve accuracy without adding training parameters to the Vision Transformer or GPT. Our results provide a new case in which Hopfield networks can be a useful perspective for improving the Transformer architecture.
\end{abstract}

\section{Introduction}

\label{intro}

The relationship between associative memory in Hopfield networks \cite{hopfield1982neural,amari1972learning}, which has attracted interest from neuroscientists, and Transformers \cite{vaswani2017attention} based on key-value memory that have been studied in machine learning has attracted interest from the research community \cite{tyulmankov2021biological,
salvatori2021associative,
bietti2023birth,
whittingtonrelating,
hoover2023memory}.
One of the most interesting results is the finding in \cite{ramsauerhopfield, krotov2021large} that translating modern Hopfield networks into neural networks yields the Transformer architecture that has been very successful in natural language processing \cite{radford2018improving, devlin2019bert} and computer vision \cite{dosovitskiyimage}. What, then, do more general modern Hopfield networks imply for deep learning? This paper gives a concrete answer to this question.

Hopfield networks \cite{hopfield1982neural,amari1972learning} are a class of models for associative memory.
Despite these interesting properties, classical Hopfield networks have the limitation of small storage capacity.
Recently, \cite{krotov2016dense} proposed Dense Associative Memory which can achieve storage capacity that scales exponentially or power-wise with respect to the number of neurons by introducing high nonlinearity \cite{demircigil2017model}.
These models with large storage capacity are collectively referred to as modern Hopfield networks \cite{ramsauerhopfield, krotov2021large}.

\begin{figure}[h]
\centering
    \includegraphics[width=0.95\columnwidth]{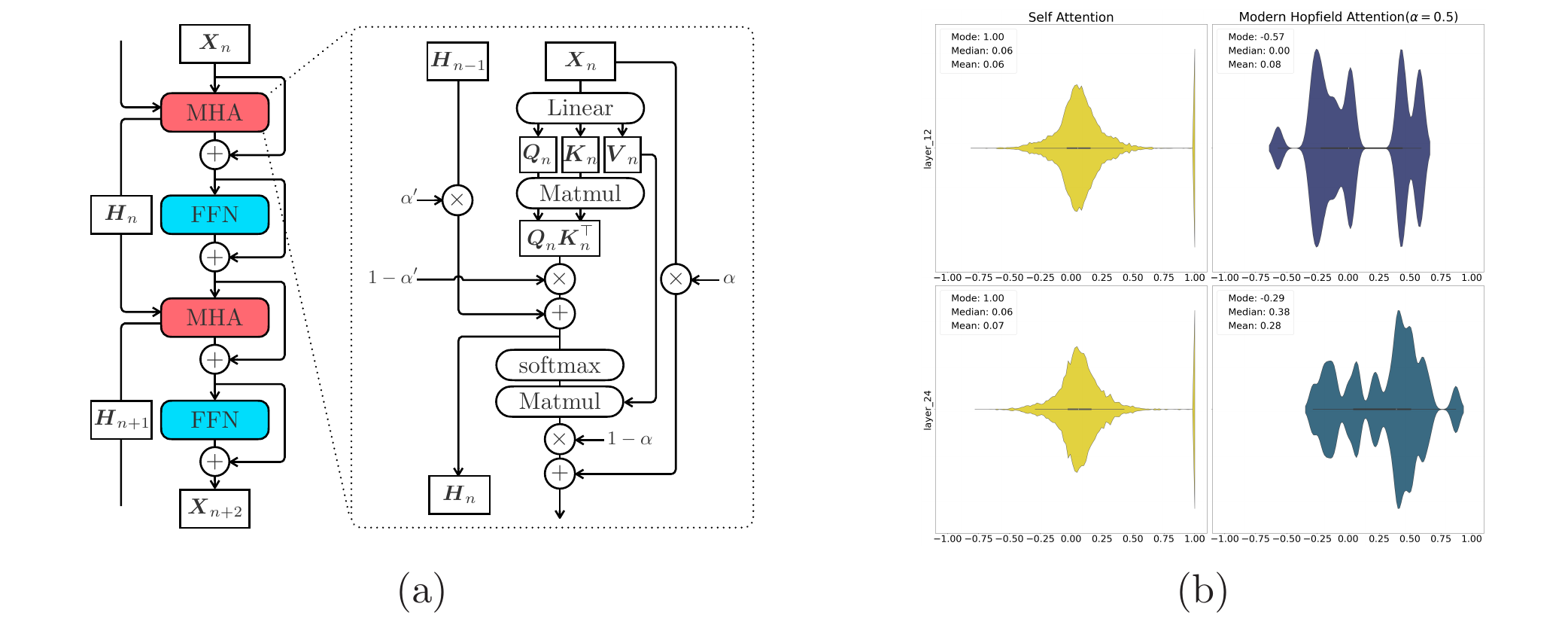}
    \caption{(a) The left figure shows the layer structure of Transformer architecture using modern Hopfield attention (MHA). As the hidden state $\boldsymbol{H}_n$ propagates through each attention layer, information from the upper layer's attention scores is reused in the lower layers. Attention score $\boldsymbol{Q}_n\boldsymbol{K}_n^\top$ is accumulated in the hidden state of each layer,  and this value is used for attention calculation. (b) A visualization of the token uniformity in layers 12 and 24 of GPT-2 (Medium) trained on the Wikitext103 dataset, showing a violin plot of the cosine similarity between the tokens. For GPT-2 in the left column, there is a strong peak at similarity 1, and both layers have a mode of 1. On the other hand, in the case of GPT-2 with MHA in the right column, the cosine similarity is kept low and the uniformity of the tokens is dramatically improved.}
    \label{fig:1}
\end{figure}

Recent advances in Transformer architecture, including its application to language models, have led to significant advances in the study of self-attention mechanisms. These advances have also shed new light on Hopfield networks. In \cite{ramsauerhopfield}, it was shown that the state update rules of modern continuous Hopfield networks (MCHNs) have a mathematical structure exactly equivalent to that of the self-attention mechanism.
Furthermore, \cite{krotov2021large} developed this relationship theoretically, pointing out that the self-attention layer of Transformer coincides with the adiabatic limit of generic modern Hopfield networks. Thus, it is expected that there is a deep relationship between the Transformer's architectural design and associative memory. Still the connection, however, lack fundamental understanding.


Therefore, we consider the question: is it possible to interpret modern Hopfield networks without adiabatic approximation in terms of Transformers?
The adiabatic limit approximation removes the hidden state dynamics from MCHN.
In this paper, we show that maintaining this dynamics introduces a hidden state on the Transformer side, thereby creating a mechanism for the propagation of attention score information from the upper to the lower layers.

By implementing this new attention mechanism into Transformer architectures, this paper introduces a new type of self-attention layer called modern Hopfield attention ({\bf{MHA}}), as shown in Figure \ref{fig:1}(a).
MHA does not require additional parameters,
and the increase in computational complexity is very small. Nevertheless, simply using MHA instead of the usual attention layer, performance gains can be obtained in various natural language processing and image recognition tasks.

Furthermore, we found that MHA effectively solves the problem known as rank collapse, or token uniformity, where Transformer's tokens lose diversity as Figure \ref{fig:1}(b). These results indicate that ideas derived from the Hopfield network may provide a new perspective for Transformer research.

In summary, our contributions are as follows:
\begin{itemize}

\item By investigating the relationship between MCHN and Transformer beyond the adiabatic approximation, we showed that this correspondence can be further generalized. 
Based on the correspondence with MCHN, we proposed a new type of attention mechanism with hidden state, MHA.
The MHA-based Transformer improves the nature of attention weights by sharing attention score information across layers.

\item By training Transformers with MHA, we experimentally showed that the MHA mechanism also contributes to the performance of the Transformers. In particular, we investigated image recognition with Vision Transformer and text generation tasks with GPT-2, and confirmed that MHA actually improves their performance. This method does not generate any additional parameters, and thus can lead to performance gains with only a small increase in computational complexity.

\item Theoretically and experimentally, we showed that the reason why MHA works so well as an alternative to self-attention is related to  rank collapse. The hidden state of MHA is likely to enhance its performance by cleverly improving Transformer's rank collapse as Figure \ref{fig:1}(b).
Our results suggest that the Hopfield Networks can provide guidance for improving the Transformer architectures.

\end{itemize}

\section{Related Works}
\label{related}

The relationship between the modern Hopfield network and the Transformer was investigated in \cite{ramsauerhopfield, krotov2021large}, and various improvements and extensions have been made to the modern Hopfield network
\cite{millidge2022universal,
zhang2022out,
iatropoulos2022kernel,
hu2023sparse,
saha2023end,
burnssimplicial,
hoover2023memory,
hu2024computational,
hu2024outlier,
wu2024uniform,
hofmannenergy,
he2024camelot}.

In \cite{ramsauerhopfield}, the authors demonstrate the fast convergence of the Modern Hopfield Network (MHN) and justify its use as a conventional module in Transformer-related architectures. On the other hand, in this study's MHA, we propose a dynamic structure that maintains and updates hidden states across layers.
More specifically, 
our MHA naturally incorporates Hopfield recursion into the Transformer layer structure, as “state accumulation and updating” are performed in each layer.

Research on improving the design of Transformer architecture using this relationship has also been conducted in \cite{hoover2023energy}. Unlike these studies, this paper focuses on the effect of keeping hidden state dynamics of MCHN.

In this paper, we saw that hidden states lead to reuse of attention scores across layers.
Attention score reuse has been studied \cite{he2021realformer, diko2024revit} from technical perspective, including improvements to the Pre-LN Transformer.
These studies are focus only on encoder architecture and do not consider the special combination of moving average with $\alpha'$ and skip connection modification with $\alpha$ as in MHA. 
On the other hand, this paper showed that the more extended attention mechanisms in MHA can be understood in terms of modern Hopfield networks, and examines its effects including the decoder Transformer. Furthermore, the essential role of MHA is clarified theoretically and experimentally in terms of rank collapse.

\section{Method: Transformers from Hopfield Network}
\label{method}

In this chapter, we review the methods \cite{ramsauerhopfield, krotov2021large} used to derive Transformer from MCHN and give a careful treatment of discretization, which has been ignored in previous studies. As a result, we show that
hidden state of MCHN leads to significant changes in the mechanisms of self-attention.

\subsection{Self-Attention Mechanism}

Let $T$ be the number of input tokens with dimension $d$.
$\boldsymbol{X}_{n}\in\mathbb{R}^{T\times d}$ is the feature obtained by concatenating the input token vectors $\boldsymbol{x}_{n}$ of the $n$-th attention layer.
The attention weight of Transformer is given by the row-wise softmax value of the attention score, which is given by the inner product of the query and the key, and the dot-product self-attention is calculated by weighting and adding the value vectors together 
as
$
\boldsymbol{X}_{n+1}=\textrm{softmax}\left(
\boldsymbol{Q}_n
\boldsymbol{K}_n^\top
\right)\boldsymbol{V}_n$,
where the query, key, value are given by linear projections of the input $\boldsymbol{X}_{n}$ as $\boldsymbol{Q}_{n}=\boldsymbol{X}_{n}\boldsymbol{W}_{Q}$,
$\boldsymbol{K}_{n}=\boldsymbol{X}_{n}\boldsymbol{W}_{K}$ and
$\boldsymbol{V}_{n}=\boldsymbol{X}_{n}\boldsymbol{W}_{V}$
\cite{vaswani2017attention}.
Each token vector is a slice $\boldsymbol{x}_{n}=(\boldsymbol{X}_{n})_{t,:}$ of the feature tensor.
Then the formula for attention mechanism for each token is $\boldsymbol{x}_{n+1}=\textrm{softmax}\left(
\boldsymbol{q}_n
\boldsymbol{K}_n^\top
\right)\boldsymbol{V}_n$.

\subsection{Modern Continuous Hopfield Network and its Discretization}

MCHN is a network model with bipartite graph connectivity connecting two dynamic variables $\boldsymbol{x}$ and $\boldsymbol{h}$. The connections are given by the network's weights $\boldsymbol{W}$, in which the memories to be associated are stored. $\boldsymbol{x}$ is called the visible state or feature neuron, and $\boldsymbol{h}$ is called the hidden state or memory neuron.
In the context of the associative memory model, given a collapsed $\boldsymbol{x}$ as an initial configuration, the complete $\boldsymbol{x}$ is reproduced by association through the time evolution of the state. The time evolution of MCHN is given by the following update rule
\cite{krotov2021large}:
\begin{align}
\tau_v\frac{d\boldsymbol{x}}{dt}
=\boldsymbol{f}\left(\boldsymbol{h}\right)\boldsymbol{W}_1^\top-\boldsymbol{x},\quad
\tau_h\frac{d\boldsymbol{h}}{dt}
=\boldsymbol{g}\left(\boldsymbol{x}\right)\boldsymbol{W}_2-\boldsymbol{h},
\end{align}
where $\tau_{v,h}$ are the time constants of the dynamic system\footnote{In the following discussion, we do not assume the tying of $\boldsymbol{W}_1$ and $\boldsymbol{W}_2$. This breaking of the symmetry of the memory matrix violates the assumption of monotonically decreasing energy function in the mathematical discussion of \cite{krotov2021large}. Interpreting the energy function of MHA in asymmetric settings is a very interesting theoretical challenge for future research.}.
The activation functions $\boldsymbol{f}(\cdot)$ and $\boldsymbol{g}(\cdot)$ are given by the Lagrangian functions $L_{h,v}$ for $\boldsymbol{h}$ and $\boldsymbol{x}$
\begin{align}
\boldsymbol{f}\left(\boldsymbol{h}\right)=\frac{\partial L_h}{\partial \boldsymbol{h}},\quad
\boldsymbol{g}\left(\boldsymbol{x}\right)=\frac{\partial L_v}{\partial \boldsymbol{x}}.
\end{align}
In this paper, the vectors
$\boldsymbol{h}=\begin{pmatrix}h_a\end{pmatrix},\,
\boldsymbol{f}\left(\boldsymbol{h}\right)=\begin{pmatrix}f_a\left(\boldsymbol{h}\right)\end{pmatrix},\,
\boldsymbol{x}=\begin{pmatrix}x_i\end{pmatrix}$ and $
\boldsymbol{g}\left(\boldsymbol{x}\right)=\begin{pmatrix}g_i\left(\boldsymbol{x}\right)\end{pmatrix}$
are all row vectors.
In order to see the correspondence with Transformer below, let us derive discrete time counterpart of MCHN. 
We then discretize this update rule with a finite difference $\Delta t=t_{n+1}-t_n$ as follows
\begin{align}
\frac{\tau_v}{\Delta t}\left(\boldsymbol{x}_{n+1}-\boldsymbol{x}_{n}\right)
=\boldsymbol{f}\left(\boldsymbol{h}_n\right)\boldsymbol{W}_1^\top-\boldsymbol{x}_n,\quad
\frac{\tau_h}{\Delta t}\left(\boldsymbol{h}_{n+1}-\boldsymbol{h}_{n}\right)
=\boldsymbol{g}\left(\boldsymbol{x}_n\right)\boldsymbol{W}_2-\boldsymbol{h}_n,
\end{align}
where
$\boldsymbol{x}_n=\boldsymbol{x}(t_{n})$ and $\boldsymbol{h}_n=\boldsymbol{h}(t_{n})$.
Introducing the ratio between the discretization step width and the time constant as
$\frac{\Delta t}{\tau_v}=1-\alpha$
and
$\frac{\Delta t}{\tau_h}=1-\alpha'$,
we
obtain
\begin{align}
\boldsymbol{x}_{n+1}=\alpha\boldsymbol{x}_{n}+(1-\alpha)\boldsymbol{f}\left(\boldsymbol{h}_n\right)\boldsymbol{W}_1^\top,\quad
\boldsymbol{h}_{n+1}=\alpha'\boldsymbol{h}_{n}+(1-\alpha')\boldsymbol{g}\left(\boldsymbol{x}_n\right)\boldsymbol{W}_2.
\end{align}
In past studies, the effect of the discretization step $\alpha$ was ignored as negligible, but the precise derivation here leads to an interesting modification of Transformer. In this paper, we give empirical and theoretical results in which $\alpha$ and $\alpha'$'s effect is extremely important.
 
\subsection{Adiabatic Limit and Self-Attention}

Specific MCHN model is determined by explicitly selecting the Lagrangians.
Model B of \cite{krotov2021large} is given by the following choice of Lagrangians
\begin{align}
L_h=\log\left(\sum_a e^{h_a} \right),\quad
L_v=\frac{1}{2}\|\boldsymbol{x}\|_2^2.
\end{align}
These Lagrangians give the activation functions
\begin{align}
\label{model-b}
f_a=\textrm{softmax}\left(h_a\right),\quad
g_i=x_i.
\end{align}
The adiabatic limit in \cite{krotov2021large}
$\tau_h\approx0$
implies
$\boldsymbol{h}_n=\boldsymbol{x}_n\boldsymbol{W}_2$.
The update rule for  (\ref{model-b}) is then given by
\begin{align}
\label{SAfromMHN}
&\boldsymbol{x}_{n+1}=\alpha\boldsymbol{x}_n
+
(1-\alpha)
\textrm{softmax}\left(\boldsymbol{x}_n\boldsymbol{W}_2\right)
\boldsymbol{W}_1^\top.
\end{align}
Translating (\ref{SAfromMHN}) by the rule 
$\boldsymbol{q}_{n}=\boldsymbol{x}_{n}\boldsymbol{W}_Q,\, \boldsymbol{W}_1^\top=\boldsymbol{X}_n\boldsymbol{W}_V=\boldsymbol{V},\, \boldsymbol{W}_2^\top= \boldsymbol{X}_n \boldsymbol{W}_K \boldsymbol{W}_Q^\top = \boldsymbol{K} \boldsymbol{W}_Q^\top$ according to \cite{ramsauerhopfield, krotov2021large}, we obtain $\boldsymbol{x}_{n+1}=
\alpha\boldsymbol{x}_{n}
+(1-\alpha)
\textrm{softmax}\left(\boldsymbol{q}_n\boldsymbol{K}^\top\right)
\boldsymbol{V}
$, where
$\boldsymbol{X}_n$ is the concatenated tensor of the embedding vectors of all tokens 
$\boldsymbol{x}_{n1},\cdots,\boldsymbol{x}_{nT}$.
When $\alpha=0$, i.e., $\Delta t=\tau_v$, this update rule is exactly a usual self-attention mechanism in \cite{vaswani2017attention}. In the following, we consider general $\alpha$ and $\alpha'$ to investigate the effect of the hidden state dynamics, which are ignored in the  adiabatic limit above.

\subsection{Hidden State Dynamics and Modern Attention Attention}

If the adiabatic limit is not taken and a finite $\frac{\Delta t}{\tau_h}$ is kept, the dynamics of the hidden state is
\begin{align}
\frac{\tau_h}{\Delta t}\left(\boldsymbol{h}_{n+1}-\boldsymbol{h}_{n}\right)
=\boldsymbol{g}\left(\boldsymbol{x}_{n+1}\right)\boldsymbol{W}_2-\boldsymbol{h}_{n+1}.
\end{align}
In the following, we use the new parameterization $\frac{\Delta t}{\tau_h}=\frac{1-\alpha'}{\alpha'}$ to obtain a simple formula.
The dynamics of Model B is then
\begin{align}
\boldsymbol{x}_{n+1}=\alpha\boldsymbol{x}_{n}+(1-\alpha)\textrm{softmax}\left(\boldsymbol{h}_n\right)\boldsymbol{W}_1^\top,\quad
\boldsymbol{h}_{n+1}=\alpha'\boldsymbol{h}_{n}+(1-\alpha')\boldsymbol{x}_{n+1}\boldsymbol{W}_2.
\end{align}
Using the same translation rules as before, we get the following novel modification of attention layer
\begin{align}
\label{hsd1}
\boldsymbol{x}_{n+1}=\alpha\boldsymbol{x}_{n}+(1-\alpha)\textrm{softmax}\left(\boldsymbol{h}_n\right)\boldsymbol{V}_n,\quad
\boldsymbol{h}_{n}=\alpha'\boldsymbol{h}_{n-1}+(1-\alpha')\boldsymbol{q}_{n}\boldsymbol{K}_{n}^\top.
\end{align}
Thus, if the dynamics of the hidden state in the MHN is maintained and mapped to the self-attention layer, a new variable $\boldsymbol{h}$, determined by the value of the attention scores, is added to the self-attention layer. This variable continues to accumulate the value of the attention score in each layer in the form of an exponential moving average across layers. Through this variable, the attention weights of each layer of the Transformer will have a coordinated behavior. In the following, we will investigate the effect of adding this hidden state on the attention layer from the Transformer's perspective.
In this paper, this extended attention mechanism with hidden states will be referred to as {\bf{M}}odern {\bf{H}}opfield {\bf{A}}ttention ({\bf{MHA}}).
Compared to the cost $O(dT^2)$ of computing the dot product of self-attention, the computational complexity added by updating the hidden state is about $O(T^2)$. For Transformer that uses more than several hundred dimensions of $d$, this is a small increase in computational complexity.

\section{Empirical Results}
\label{exp}

In this chapter, we experimentally investigate how the performance of the model changes when MHA is actually used in place of Transformer's self-attention module.
We take the Vision Transformer (ViT) as a representative example of an encoder Transformer model and the GPT-2\cite{radford2019language} architecture as a representative example of a decoder Transformer model, and confirm that MHA does indeed lead to systematic performance improvements in several experiments.

\subsection{Architecture with MHA}

In the following, we will focus on the simplest case $\alpha=\alpha'$. It is straightforward to choose both parameter independently, but consider only this case to reduce the hyperparameters. 
By using our update rule (\ref{hsd1})
instead of the attention layer, a new tensor called the hidden state $\boldsymbol{H}_{\ell}$ propagates across the layers. This tensor accumulates the attention score $\boldsymbol{Q}_{\ell}\boldsymbol{K}_{\ell}^\top$ in each layer in the form of an exponential moving average. It is not the original attention score that gives the attention weight, but the softmax of the hidden state.
At the same time, a skip connection with the weight $(1-\alpha)$ linked to the coefficient $\alpha$ of the exponential moving average of the hidden state is added according to equation (\ref{hsd1}), and the balance between the two effects, controlled by $\alpha$, is considered to determine the behavior of the MHA.
The detailed structure of the architecture corresponding to (\ref{hsd1}) is illustrated in Figure \ref{fig:1}(a).

In the following experiments, we will employ scaled dot-product attention according to the usual Transformer design and introduce the coefficient $\frac{1}{\sqrt{d_k}}$ in the argument of the softmax function.

\subsection{Text Generation: GPT-2}
\label{resources}
To determine the impact of MHA on Transformer performance, we first trained GPT-2 Small(124M) and Medium(350M) \cite{radford2019language} on text generation task and tested their performance.
The dataset used was WikiText103 \cite{merity2017pointer}.
The following experiments in this paper were conducted using up to eight A100 GPUs.
The detailed training settings are described in the supplemental material.

To fairly compare the effectiveness of MHA, we trained the GPT-2 architecture and an architecture in which the self-attention layers of GPT-2 are replaced by MHA in the same setting from scratch and compared their perplexity.
Table \ref{pplgpt2} shows the results.
The interest of this paper is not to create a SOTA model with detailed hyperparameter tuning, etc., but to see the robustness of the MHA effect, so $\alpha$ was simply set to $0.5$ based on rough hyperparameter search.

As Table \ref{pplgpt2}  shows, there is a clear improvement in perplexity in both the Small and Medium MHA models. 
Hopfield networks have often been experimented with in comparison to encoder Transformers \cite{ramsauerhopfield}, but our result shows that such comparisons is also useful for decoders.

\begin{table}[h]
\caption{Comparison of the perplexity of GPT-2 and its MHA counterpart trained on the WikiText103 dataset for two cases: GPT-2 Small with 124M parameters and GPT-2 Medium with 350M parameters. In both cases, the introduction of MHA improved the perplexity.}
\label{pplgpt2}
\centering
\begin{tabular}{cc|cc}

\hline
    \multicolumn{2}{c|}{Small(124M)}&\multicolumn{2}{c}{Medium(350M)}   \\
\hline
    $\textrm{self-attention}$&$\textrm{MHA}(\alpha=0.5)$&$\textrm{self-attention}$&$\textrm{MHA}(\alpha=0.5)$ \\
\hline\hline
    $22.87$&$\bm{20.70}$&$20.85$&$\bm{19.61}$ \\
\hline

\end{tabular}
\end{table}

\subsection{Text Generation: LLaMA Architecture}

To evaluate the effectiveness of Modern Hopfield Attention in more practical text generation architectures, we conducted additional experiments on LLaMA, in addition to GPT-2, using the miniLLaMA implementation. Furthermore, besides WikiText-103, we individually examined cases where CNN DailyMail \cite{hermann2015teaching} and BookCorpus \cite{zhu2015aligning} were used as training datasets. The results are summarized in Table \ref{tab:llama}. Even in practical architectures such as LLaMA, whose refined design aims to enhance performance, MHA was found to exert a consistent improvement in perplexity, demonstrating its systematic effectiveness beyond simpler baseline models.
\begin{table}[ht]
    \centering
    \begin{tabular}{c|cc}
        \hline
        dataset           & self-attention       & MHA      \\
        \hline\hline
        WikiText-103          & $14.49$      & $\bm{14.29}$  \\
        \hline
        DailyMail          & $19.36$      & $\bm{18.97}$  \\
        \hline
        BoocCorpus          & $23.76$      & $\bm{23.50}$  \\
    \end{tabular}
    \caption{Comparison of the perplexity of LLaMA and its MHA counterpart ($\alpha=0.5$) trained on various datasets. In all cases, the introduction of MHA led to improved perplexity.}
    \label{tab:llama}
\end{table}

\subsection{Image Recognition: ViT}

Next, the Vision Transformer (ViT) was employed as the Transformer decoder model, and again to fairly compare the effect of MHA, two architectures, the ViT architecture and the architecture in which the self-attention layers of ViT are replaced by MHA, were trained in the same configuration. 
We trained these models in image recognition tasks.

The model used in this study is ViT \cite{dosovitskiyimage}, and the data sets used are CIFAR10/CIFAR100 \cite{Krizhevsky09learningmultiple} and ImageNet-1k \cite{5206848}.
The detailed training setup is shown in the supplemental material.

\begin{table}[ht]
    \centering
    \begin{tabular}{c|c|cc}
        \hline
        model size & model type           & CIFAR10       & CIFAR100      \\
        \hline\hline
        ViT-Tiny(5.5M)       & self-attention       & $93.265$      & $\bm{73.080}$  \\
        \quad      & MHA($\alpha=0.5$) & $93.015$      & $72.030$       \\
        \quad      & MHA($\alpha=0.7$) & $\bm{93.775}$ & $72.570$      \\
        \hline
        ViT-Small(22M)      & self-attention       & $\bm{95.450}$ & $74.485$                       \\
        \quad      & MHA($\alpha=0.5$) & $95.335$      & $75.420$                          \\
        \quad      & MHA($\alpha=0.7$) & $95.440$      & $\bm{75.590}$                     \\
        \hline
        ViT-Base(86M)      & self-attention       & $96.190$      & $75.360$                 \\
        \quad      & MHA($\alpha=0.5$) & $96.175$      & $\bm{76.215}$          \\
        \quad      & MHA($\alpha=0.7$) & $\bm{96.490}$ & $75.590$            \\
        \hline
        ViT-Large(303M)     & self-attention       & $96.310$      & $72.910$                      \\
        \quad      & MHA($\alpha=0.5$) & $96.500$      & $\bm{75.775}$                     \\
        \quad      & MHA($\alpha=0.7$) & $\bm{96.690}$ & $75.365$                          \\
    \end{tabular}
    \caption{Experimental results are shown for ViTs and their MHA counterparts. For simple tasks such as CIFAR10, performance is close to saturation and there is no clear effect of MHA. On the other hand, for CIFAR100, the performance improvement due to MHA is clear for the larger model. This is a common property of $\alpha=0.5$ and $\alpha=0.7$.}
    \label{tab:result_cifar10}
\end{table}

\subsubsection{CIFAR10/100}

First, as a simple case, we review the results for CIFAR10 in the left column of the Table \ref{tab:result_cifar10}; for CIFAR10, the effect of MHA is not clearly visible, partly because the performance is basically close to saturation due to the ease of the task. However, it is interesting to note that the effect of MHA is starting to appear in the Base and Large models, which have a high learning capacity. In any case, CIFAR10 is not a sufficient task for the purpose of observing changes in Transformer performance with scratch training.

So let's look at the results for CIFAR100, where the task is more difficult: as shown in Table \ref{tab:result_cifar10}, the larger the model, the larger and clearer the improvement compared to the baseline ViT. Interestingly, in both cases of the two $\alpha$ choices shown here, the performance improvement relative to ViT can be seen when the model is larger than the Small model.

\subsubsection{ImageNet-1k}

In the experiments on ImageNet-1k, due to computational resource constraints, we adopt ViT-B (86M) as a model of good enough size to obtain nontrivial training results. 
The results of 300-epoch training of ViT-B and its MHA counterpart from scratch with ImageNet-1k are shown in Table \ref{tab:result_in1k}.
Following the standard training setup, AdamW\cite{loshchilovdecoupled} was used for optimizer and cosine decay for learning rate scheduling. Random erasing\cite{zhong2020random}, mixup\cite{zhang2018mixup}, cutmix\cite{Yun_2019_ICCV}, and RandomAugment\cite{cubuk2020randaugment} were used for augmentation. For details, please refer to the supplemental material.

\begin{table}[h]
\centering
\caption{Classification validation accuracies for ViT-B($86\textrm{M}$) and its MHA counterpart.}
\begin{tabular}{c|c|c|c}
\hline
Data set & self-attention & MHA($\alpha=0.5$) & MHA($\alpha=0.7$)\\
\hline\hline
ImageNet-1k   &$76.074$& ${76.434}$ & $\bm{77.058}$
\label{tab:result_in1k}
\end{tabular}
\end{table}

As shown in Table \ref{tab:result_in1k}, the performance improvement in ViT-B was also observed in ImageNet-1k.
Although the performance improvement is less than 1\%,
 this performance difference is considered a non-trivial result compared to the examples in previous studies on ViT improvement. 
As in previous experiments, this increase in performance is produced by adding only a small amount of computation without adding any training parameters.
This is an interesting result, which suggests that hidden states may help improve attention mechanisms.
In the next chapter, we will investigate both theoretically and experimentally regarding how hidden states produce these performance gains.

\subsubsection{Downstream Tasks}
To evaluate MHA's effectiveness across diverse tasks, we measured the transfer performance of a pre-trained ImageNet model using linear probing. Using a pre-trained ViT and its MHA counterpart as backbones, we conducted transfer learning experiments on four commonly used downstream datasets (Oxford Flowers 102 \cite{nilsback2008automated}, Food-101 \cite{bossard2014food}, Stanford Dogs \cite{khosla2011novel}, and Stanford Cars \cite{krause20133d}). Results in Table \ref{tab:linear_probe} show that the MHA variants achieve consistently good transfer performance.
\begin{table}[ht]
    \centering
    \begin{tabular}{c|cc}
        \hline
        dataset           & self-attention       & MHA      \\
        \hline\hline
        Flower102          & $81.15$      & $\bm{93.85}$  \\
        \hline
        Food101          & $74.51$      & $\bm{87.99}$  \\
        \hline
        Stanford dogs          & $\bm{95.00}$      & ${83.64}$  \\
        \hline
        Stanford cars          & $51.54 $      & $\bm{87.54}$  \\
    \end{tabular}
    \caption{Comparison of the transfer accuracy of ImageNet-1K pre-trained ViT and its MHA counterpart ($\alpha=0.7$) on various downstream datasets.}
    \label{tab:linear_probe}
\end{table}

\subsection{Effect of Combining $\alpha$ and $\alpha'$}

Our update rule (\ref{hsd1}) has two hyperparameters $\alpha$ and $\alpha'$, but for simplicity, we have so far restricted our discussion to the case where both values are equal $\alpha=\alpha'$. However, as shown in Figure \ref{fig:1}(a), these two quantities essentially work differently. 
$\alpha$ is a quantity that balances
the value after attention computation
and the strength of the skip connections in the attention module.
On the other hand, $\alpha'$ is the coefficient of the exponential moving average in accumulating the attention scores to hidden states.

A nontrivial result (\ref{hsd1}) derived from MCHN is that these two independent effects are simultaneously added to the Transformer. To see whether these two are really both necessary, or whether they work in concert, let us try an experiment in which $\alpha$ and $\alpha'$ are varied independently.

\begin{table}[h]
\centering
\caption{Performance change of ViT-T when two hyperparameters are changed independently}
\resizebox{0.95\textwidth}{!}{%
\begin{tabular}{c|ccccccccccc}
\multicolumn{12}{l}{CAIFAR100  MHA($\alpha=0.5)$} \\
\hline
$\alpha'$ & $0.0$ & $0.1$ & $0.2$ & $0.3$ & $0.4$ & $0.5$ & $0.6$ & $0.7$ & $0.8$ & $0.9$ & $1.0$ \\
\hline
score & $71.16$ & $71.13$ & $72.29$ & $70.77$ & $72.12$ & $72.13$ & $72.06$ & $72.02$ & $71.98$ & $70.72$ & $66.10$ \\
\hline
\hline

\multicolumn{12}{l}{CAIFAR100  MHA($\alpha'=0.5)$} \\
\hline
$\alpha$ & $0.0$ & $0.1$ & $0.2$ & $0.3$ & $0.4$ & $0.5$ & $0.6$ & $0.7$ & $0.8$ & $0.9$ & $1.0$ \\
\hline
score & $69.89$ & $71.20$ & $71.26$ & $71.64$ & $72.02$ & $72.13$ & $72.66$ & $70.52$ & $70.46$ & $67.70$ & $\phantom{0}1.00$ \\
\hline
\label{tab:alphachange}
\end{tabular}
}
\end{table}

Table \ref{tab:alphachange} shows the change in performance when one of the hyperparameters is fixed at 0.5 and the value of the other is varied. When only $\alpha$ is moved from the original $\alpha=0.5=\alpha'$ to $\alpha=1$, the performance drops to chance-level accuracy. This is evident from the fact that all values except for the skip connection are set to $0$. On the other hand, when $\alpha$ is set to $0$, the performance degrades to $69.89$. Thus, it can be seen that further performance improvement is realized by adding not only $\alpha'$ but also $\alpha$. Similarly, when $\alpha$ is fixed,
setting $\alpha'$ to $0$ also results in poorer performance.

\section{How MHA improves Transformers}
\label{theory}

\subsection{Problem and Improvement of Transformer Layers}
Next, let us examine why MHA leads to performance gains in various tasks. It is known that as the depth of Transformer increases, training becomes more difficult and performance tends to saturate rapidly. The phenomenon of rank collapse has been discussed as one cause of this problem. It is possible that our model mitigates the problem without explicit regularization or other means. Therefore, we provide below some theoretical and empirical results that support this hypothesis.

\subsubsection{Rank Collapse}

It has been observed that as the depth of the Vision Transformer increases, the patch tokens become extremely similar and rapidly lose diversity \cite{tang2021augmented}\cite{zhou2021deepvit}. This phenomenon is now understood as token uniformity, rank collapse, or oversmoothing \cite{dong2021attention, tang2021augmented, zhou2021deepvit, yan2022addressing, noci2022signal, shi2022revisiting, he2023deep, guo2023contranorm, park2022vision, wang2022anti, bai2022improving, dovonon2024setting}\footnote{The rank collapse in \cite{dong2021attention}  refers to the phenomenon where the tokens corresponding to each row of a feature become perfectly proportional vectors. This means perfect token uniformity. On the other hand, the phenomenon observed in actual Transformers is that many, if not all, tokens are perfectly aligned, forming a group of tokens with a mutual cosine similarity of $1$.}. Various innovations have been proposed to reduce this problematic phenomenon in order to improve Transformer performance \cite{tang2021augmented, zhou2021deepvit, bai2022improving}.

Rank collapse \cite{dong2021attention} is defined as a phenomenon in which Transformer feature rapidly collapses into a rank 1 matrix
with increasing depth.
Thus, for the feature $\boldsymbol{X}^{(L)}$ of the $L$-th layer, rank collapse is formulated as the property that the residual of deep Transformer feature rapidly converges to zero as follows 
\begin{align}
\|\textrm{Res}(\boldsymbol{X}^{(L)})\|\approx 0
\textrm{ for }L\gg 1
,
\end{align}
where $\textrm{Res}(\boldsymbol{X})$ is the residual
$\textrm{Res}(\boldsymbol{X})=
\boldsymbol{X}-\boldsymbol{1}\boldsymbol{x}^\top$
for
$\boldsymbol{x}=
\arg\min_{\boldsymbol{x}}
\|\boldsymbol{X}-\boldsymbol{1}\boldsymbol{x}^\top\|$.
This convergence means the feature is approximately a rank one matrix $\boldsymbol{X}^{(L)}\approx \boldsymbol{1}\boldsymbol{x}^\top$.

\subsection{Theoretical Implication}
For a clear theoretical analysis of the causes of rank collapse, we consider a deep network consisting of only the self-attention layers according to \cite{dong2021attention}.
It is also straightforward to extend the discussion to the actual Transformer architecture \cite{dong2021attention}.
Let us consider
a self-attention-only network consisting of $L$ layers without skip connection
\begin{align}
{\textrm{AttnNet}}(\boldsymbol{X})
=
\textrm{MHSA}\circ\cdots\circ
\textrm{MHSA}(
\boldsymbol{X}).
\end{align}
$\textrm{MHSA}(
\boldsymbol{X})$
is the multi-head self-attention module.
The number of heads and embedding dimension of each MHSA are $H$ and $d_k$.
In \cite{dong2021attention},
this attention-only network has been shown to cause very serious rank collapse:
\begin{theorem}[\cite{dong2021attention}]
\label{thm:double-exp}
The norm of the residual of attention-only network ${\textrm{AttnNet}}(\boldsymbol{X})$
decays as
\begin{align}
\label{eq:double-exp}
\|
\textrm{Res}\left(
{\textrm{AttnNet}}(\boldsymbol{X})\right)
\|_{1,\infty}
\leq
\left(rC\right)^{\frac{3^L-1}{2}}
\|
\textrm{Res}\left(
\boldsymbol{X}\right)
\|_{1,\infty}^{3^L},
\end{align}
where $r=\frac{8H}{\sqrt{d_k}}$ and $C$ is certain constant.
This suggests the double exponential decay of the rank.
\end{theorem}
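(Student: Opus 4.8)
The plan is to derive the double-exponential bound from a single-layer \emph{cubic} contraction that is then iterated. Concretely, I would first prove that one multi-head self-attention layer satisfies $\|\textrm{Res}(\textrm{MHSA}(\boldsymbol{X}))\|_{1,\infty} \le rC\,\|\textrm{Res}(\boldsymbol{X})\|_{1,\infty}^{3}$ with $r=\frac{8H}{\sqrt{d_k}}$, and then compose $L$ copies. Writing $R_\ell = \|\textrm{Res}(\boldsymbol{X}^{(\ell)})\|_{1,\infty}$ and $\beta = rC$, the per-layer estimate $R_{\ell+1}\le \beta R_\ell^3$ unrolls to $R_L \le \beta^{a_L} R_0^{3^L}$, where the $\beta$-exponent satisfies $a_\ell = 3a_{\ell-1}+1$ with $a_0=0$, i.e. $a_L=\sum_{j=0}^{L-1}3^{\,j}=\frac{3^L-1}{2}$. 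This reproduces (\ref{eq:double-exp}) exactly, so the entire problem reduces to the single-layer bound.

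The core of the proof is this cubic contraction for one head. I would decompose the input as $\boldsymbol{X} = \boldsymbol{1}\boldsymbol{x}^\top + \boldsymbol{R}$ with $\boldsymbol{R}=\textrm{Res}(\boldsymbol{X})$, and exploit that the attention matrix $\boldsymbol{P}=\textrm{softmax}\!\left(\tfrac{1}{\sqrt{d_k}}\boldsymbol{X}\boldsymbol{W}_{QK}\boldsymbol{X}^\top\right)$, with $\boldsymbol{W}_{QK}=\boldsymbol{W}_Q\boldsymbol{W}_K^\top$, is row-stochastic. Expanding the logits $\tfrac{1}{\sqrt{d_k}}(\boldsymbol{1}\boldsymbol{x}^\top+\boldsymbol{R})\boldsymbol{W}_{QK}(\boldsymbol{1}\boldsymbol{x}^\top+\boldsymbol{R})^\top$ into four terms, the global-constant term and the term of the form $(\,\cdot\,)\boldsymbol{1}^\top$ are constant within each row and so cancel under the row-wise softmax; the term $\boldsymbol{1}\boldsymbol{x}^\top\boldsymbol{W}_{QK}\boldsymbol{R}^\top$ is identical across rows and therefore produces a rank-one stochastic matrix $\boldsymbol{P}_0$ with all rows equal; only the quadratic term $\tfrac{1}{\sqrt{d_k}}\boldsymbol{R}\boldsymbol{W}_{QK}\boldsymbol{R}^\top$ breaks this structure. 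A Lipschitz estimate for the row-wise softmax then yields the crucial quadratic-in-residual bound $\|\boldsymbol{P}-\boldsymbol{P}_0\|_{1,\infty} \le \tfrac{c}{\sqrt{d_k}}\|\boldsymbol{W}_{QK}\|\,\|\boldsymbol{R}\|_{1,\infty}^{2}$.

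I would then assemble the cubic bound using two cancellations. Since $\boldsymbol{P}_0$ is rank one, $\boldsymbol{P}_0\boldsymbol{X}\boldsymbol{W}_V$ is rank one and is annihilated by $\textrm{Res}$, so $\textrm{Res}(\boldsymbol{P}\boldsymbol{X}\boldsymbol{W}_V)=\textrm{Res}\!\left((\boldsymbol{P}-\boldsymbol{P}_0)\boldsymbol{X}\boldsymbol{W}_V\right)$; and because both $\boldsymbol{P},\boldsymbol{P}_0$ are row-stochastic, $(\boldsymbol{P}-\boldsymbol{P}_0)\boldsymbol{1}=\boldsymbol{0}$, so the rank-one part $\boldsymbol{1}\boldsymbol{x}^\top$ drops out and $(\boldsymbol{P}-\boldsymbol{P}_0)\boldsymbol{X}\boldsymbol{W}_V=(\boldsymbol{P}-\boldsymbol{P}_0)\boldsymbol{R}\boldsymbol{W}_V$. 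Submultiplicativity of $\|\cdot\|_{1,\infty}$ gives $\|\textrm{Res}(\textrm{Attn}(\boldsymbol{X}))\|_{1,\infty}\le \|\boldsymbol{P}-\boldsymbol{P}_0\|_{1,\infty}\,\|\boldsymbol{R}\|_{1,\infty}\,\|\boldsymbol{W}_V\| \lesssim \tfrac{1}{\sqrt{d_k}}\|\boldsymbol{W}_{QK}\|\,\|\boldsymbol{W}_V\|\,\|\boldsymbol{R}\|_{1,\infty}^{3}$, i.e. two residual factors from the softmax perturbation and one from the value branch. Summing over the $H$ heads contributes the factor $H$, while folding the bounded weight norms into $C$ and the numerical softmax constants into the $8$ produces $r=\frac{8H}{\sqrt{d_k}}$.

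The main obstacle I anticipate is the quadratic softmax estimate $\|\boldsymbol{P}-\boldsymbol{P}_0\|_{1,\infty}\lesssim \tfrac{1}{\sqrt{d_k}}\|\boldsymbol{R}\|_{1,\infty}^{2}$: one must verify that after discarding the row-constant logit contributions the effective perturbation seen by softmax is genuinely quadratic in $\boldsymbol{R}$, and then control the row-wise softmax map in the mixed $(1,\infty)$ operator norm with the correct constant. Tracking how the induced $1$- and $\infty$-norms interact through the products $\boldsymbol{R}\boldsymbol{W}_{QK}\boldsymbol{R}^\top$ and $(\boldsymbol{P}-\boldsymbol{P}_0)\boldsymbol{R}\boldsymbol{W}_V$, so that the bounded weight norms are absorbed cleanly into $C$, is where the bookkeeping is most delicate; everything else is a routine chaining of submultiplicative inequalities and the recurrence solved above.
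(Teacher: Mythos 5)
Your overall strategy is the same as the one the paper relies on (the appendix proof, which extends \cite{dong2021attention} and reduces to this theorem at $\alpha'=0$): decompose $\boldsymbol{X}=\boldsymbol{1}\boldsymbol{x}^\top+\boldsymbol{R}$, observe that the row-constant logit terms are absorbed by the row-wise softmax so that only $\tfrac{1}{\sqrt{d_k}}\boldsymbol{R}\boldsymbol{W}_{QK}\boldsymbol{R}^\top$ perturbs a rank-one stochastic reference matrix, obtain a cubic single-layer contraction, and unroll via $a_\ell=3a_{\ell-1}+1$ to get the exponent $(3^L-1)/2$; that part, including the head-summation giving the factor $H$, is right. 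The one step that would fail as written is the intermediate bound $\|\boldsymbol{P}-\boldsymbol{P}_0\|_{1,\infty}\lesssim\tfrac{1}{\sqrt{d_k}}\|\boldsymbol{W}_{QK}\|\,\|\boldsymbol{R}\|_{1,\infty}^2$ followed by plain submultiplicativity: the element-wise estimate $\vert P_{ij}-(P_0)_{ij}\vert\leq 2d_i\,(P_0)_{ij}$ (with $d_i=\max_{j,k}\vert E_{ij}-E_{ik}\vert$) controls the $\infty$-norm of $\boldsymbol{P}-\boldsymbol{P}_0$ by $2\max_i d_i$, but its $1$-norm (maximum column sum) is only bounded by $2\sum_i d_i$, so the mixed norm of $\boldsymbol{P}-\boldsymbol{P}_0$ picks up a factor of order $\sqrt{T}$ in the sequence length, which would contaminate $C$. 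The paper avoids this by never taking an operator norm of $\boldsymbol{P}-\boldsymbol{P}_0$ alone: it bounds the \emph{product} element-wise, $\vert\boldsymbol{P}\boldsymbol{R}-\boldsymbol{1}(\cdots)\vert\preceq 2\tilde{\boldsymbol{D}}\boldsymbol{1}\,\textrm{softmax}(\boldsymbol{r}^\top)\boldsymbol{R}$, and then exploits the rank-one structure of the right-hand side together with H\"older's inequality $\|\boldsymbol{A}\boldsymbol{B}\|_1\leq\|\boldsymbol{A}\|_\infty\|\boldsymbol{B}\|_1$ and $\|\textrm{softmax}(\boldsymbol{r}^\top)\|_1\leq 1$, $\|\textrm{softmax}(\boldsymbol{r}^\top)\|_\infty=1$, so that only $\max_i d_i$ (bounded by $4\|\boldsymbol{E}\|_1$) survives. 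If you replace your operator-norm Lipschitz step with this element-wise/H\"older argument, the rest of your proposal goes through and recovers the stated bound with $r=8H/\sqrt{d_k}$.
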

The definition of the norm $\|\cdot\|_{1,\infty}$ in this paper is the  composite of operator norms
$
\|\boldsymbol{X}\|_{1,\infty}
=
\sqrt{
\|\boldsymbol{X}\|_{1}
\|\boldsymbol{X}\|_{\infty}
}$.

In \cite{dong2021attention}, it was shown that skip connection and the addition of an FFN layer are effective in reducing this serious collapse. 

Interestingly, however, even though our MHA is not specifically designed to prevent rank collapse, it is able to prevent the decay phenomenon in attention-only networks without any skip connection.
Even when removing skip connections completely by setting $\alpha=0$, a non-zero $\alpha'$ leads to the following mitigation of rank collapse in the attention-only network:
\begin{theorem}
\label{thm:no-double-exp}
By keeping non-zero $\alpha'$,
the upper-bound of inequality evaluation is improved as follows
\begin{align}
\label{eq:thm2}
\|
{\textrm{Res}}\big(
{\textrm{AttnNet}}(\boldsymbol{X})
\big)
\|_{1,\infty}
\leq
\max{}_{m=0}^{L}
\left(
{r(1-\alpha')C_1}
\right)^{\frac{3^m-1}{2}}
\left(
{r\alpha' C_2}
\right)^{3^m(L-m)}
\|
{\textrm{Res}}\big(
\boldsymbol{X}
\big)
\|_{1,\infty}^{3^m}
.
\end{align}
This suggests the avoidance of exponential decay.
\end{theorem}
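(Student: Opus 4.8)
The plan is to reduce the layer recursion to a pair of coupled scalar inequalities and then solve them. Write $R_n = \|\textrm{Res}(\boldsymbol{X}_n)\|_{1,\infty}$ for the residual norm at layer $n$, and introduce an auxiliary scalar $S_n$ measuring the size of the genuinely rank-one-breaking component of the accumulated score $\boldsymbol{h}_n$. The backbone is the two single-layer estimates implicit in the proof of Theorem \ref{thm:double-exp}: (i) the deviation $\boldsymbol{E}_n = \boldsymbol{A}_n - \boldsymbol{1}\boldsymbol{a}_n^\top$ of the attention matrix $\boldsymbol{A}_n = \textrm{softmax}(\boldsymbol{h}_n)$ from its rank-one part $\boldsymbol{1}\boldsymbol{a}_n^\top$ is controlled, through the Lipschitzness of softmax in $\|\cdot\|_{1,\infty}$, by the doubly-varying part of its score argument, so that $\|\boldsymbol{E}_n\|_{1,\infty}\le c\, S_n$; and (ii) because $\boldsymbol{A}_n\boldsymbol{1}=\boldsymbol{1}$, the output residual satisfies $R_{n+1}=\|\textrm{Res}(\boldsymbol{A}_n\boldsymbol{V}_n)\|_{1,\infty}\le \|\boldsymbol{E}_n\|_{1,\infty}\,\|\boldsymbol{W}_V\|\,R_n$, i.e. $R_{n+1}\le c_2 S_n R_n$.

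The new ingredient is the analysis of $S_n$ through the exponential moving average $\boldsymbol{h}_n = (1-\alpha')\boldsymbol{P}_n + \alpha'\boldsymbol{h}_{n-1}$ with $\boldsymbol{P}_n = \boldsymbol{Q}_n\boldsymbol{K}_n^\top/\sqrt{d_k}$. Decomposing $\boldsymbol{X}_n = \boldsymbol{1}\boldsymbol{x}_n^\top + \textrm{Res}(\boldsymbol{X}_n)$ and expanding the bilinear form $\boldsymbol{P}_n$, three of the four resulting terms are either constant across columns (hence invisible to the row-wise softmax) or constant across rows (hence absorbed into the rank-one reference $\boldsymbol{a}_n^\top$); only the term bilinear in $\textrm{Res}(\boldsymbol{X}_n)$ breaks the rank-one structure, and it has size $O(R_n^2)$. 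Since this splitting is linear it is preserved under the EMA sum, which yields the second recursion $S_n \le (1-\alpha')c_1 R_n^2 + \alpha' S_{n-1}$. This is exactly the mechanism by which the memory rescues the rank: even after $R_n$ has collapsed, $S_n$ retains a contribution $\alpha' S_{n-1}$ inherited from earlier, less-collapsed layers, so the attention matrix keeps breaking rank one.

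With the coupled system $R_{n+1}\le c_2 S_n R_n$ and $S_n\le (1-\alpha')c_1 R_n^2+\alpha' S_{n-1}$ in hand, I would unroll the EMA into $S_n\le (1-\alpha')c_1\sum_{k=0}^n(\alpha')^{n-k}R_k^2$ and prove (\ref{eq:thm2}) by induction on $L$. The maximum over $m$ is the worst-case choice of a single transition layer: for the first $m$ layers the fresh term $(1-\alpha')R_n^2$ dominates $S_n$, reproducing the usual cubing and hence the factor $(r(1-\alpha')C_1)^{(3^m-1)/2}R_0^{3^m}$; for the remaining $L-m$ layers the memory term $\alpha' S_{n-1}$ dominates, the recursion becomes effectively linear in $R_n$, and it contributes the factor $(r\alpha'C_2)^{3^m(L-m)}$. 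As consistency checks, setting $\alpha'=0$ kills every term except $m=L$ and recovers Theorem \ref{thm:double-exp}, while $\alpha'=1$ kills every term except $m=0$ and leaves the purely linear bound $R_L\le (r\alpha'C_2)^L R_0$, exhibiting the complete removal of the double-exponential dependence on $R_0$.

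The main obstacle is this last step: the recursion for $R_{n+1}$ is nonlinear and, through the convolution sum for $S_n$, non-local in the layer index, so the closed form does not follow by a one-line telescoping. I would handle it by an induction in which the max-expression is itself the hypothesis, checking that feeding the bounds for $R_n$ and $S_{n-1}$ into the two recursions reproduces the max-expression for $R_{n+1}$ — the delicate points being that the maximum commutes correctly with the cubing and linear updates and that the constants $r,C_1,C_2$ aggregate to exactly the stated form. A secondary technical point requiring explicit care is the legitimacy of the score decomposition across layers carrying different rank-one references $\boldsymbol{x}_n^\top$: one must verify that summing scores computed at different layers creates no rank-one-breaking cross terms beyond those already counted in $S_n$, which again follows from the linearity of the ``constant-across-columns / constant-across-rows / doubly-varying'' splitting but should be stated rather than left implicit.
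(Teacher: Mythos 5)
Your proposal follows essentially the same route as the paper's proof: you split the EMA-accumulated score into the part bilinear in the residual (size $O(R_n^2)$, giving the cubic branch) plus the score inherited from the previous layer (giving the linear branch with constant proportional to $\alpha'\|\boldsymbol{A}^{(\ell-1)}\|_1$), invoke the softmax rank-one-deviation lemma of Dong et al.\ to obtain the single-layer bound $R_{n+1}\le aR_n^3+bR_n$, and maximize over the $2^L$ branch choices across layers --- your auxiliary $S_n$ is exactly the paper's $\|\boldsymbol{A}^{(\ell-1)}\|_1$ kept as a recursion rather than absorbed into the constant $C_2$. The one narrative slip is the ordering: to produce the stated exponent $3^m(L-m)$ on the linear factor, the linear branch must be taken in the \emph{first} $L-m$ layers and the cubic branch in the last $m$, not cubic-first as you describe.
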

\begin{proof} 
See the supplemental material for detailed proof.
The sketch of the proof is as follows: by introducing the hidden state as $\alpha'\neq0$, the decaying effect of rank by single attention layer can be evaluated as follows 
\begin{align}
\|
{\textrm{Res}}\big(
{\textrm{MHSA}}(\boldsymbol{X}
\big)
\|
\leq
\max\left(
r_1(1-\alpha')
\|
{\textrm{Res}}\big(
\boldsymbol{X}
\big)
\|^3
,\,
r_2\alpha'
\|
{\textrm{Res}}\big(
\boldsymbol{X}
\big)
\|
\right)
,
\end{align}
where $r_{1,2}=rC_{1,2}$ and the norm here is $\|\cdot\|_{1,\infty}$.
Notice that the second argument in the max function significantly reduces the third-order decaying effect in \cite{dong2021attention}. By applying this inequality repeatedly over $L$ layers, we obtain the following inequality
\begin{align}
\|
{\textrm{Res}}\big(
{\textrm{AttnNet}}(\boldsymbol{X})
\big)
\|
\leq
\max\left(
\left(r_1(1-\alpha')\right)^{\frac{3^L-1}{2}}
\|
{\textrm{Res}}\big(
\boldsymbol{X}
\big)
\|^{3^L}
,
\cdots
,
\left(r_2\alpha'\right)^L
\|
{\textrm{Res}}\big(
\boldsymbol{X}
\big)
\|
\right)
,
\end{align}
where $\cdots$ means $(r_1(1-\alpha'))^{(3^m-1)/2}(r_2\alpha')^{3^m(L-m)}(\| \textrm{Res}(\boldsymbol{X})\|)^{3^m}$ for $m=1,\cdots,L-1$.
\end{proof} 
On the right hand side of this inequality (\ref{eq:thm2}), 
 the $m=L$ term is the very term that created the double exponential decay of the original self-attention mechanism \cite{dong2021attention}, but the $m=0$ term dominates in (\ref{eq:thm2}) and relaxes the rank decay to linear decay as 
$\left(
{r\alpha' C_2}
\right)^L
\|
{\textrm{Res}}\big(
\boldsymbol{X}
\big)
\|_{1,\infty}$ since
\begin{align}
(r_1(1-\alpha'))^{(3^m-1)/2}(r_2\alpha')^{3^m(L-m)}(\| \textrm{Res}(\boldsymbol{X})\|)^{3^m}
<
(r_2\alpha')^{L}(\| \textrm{Res}(\boldsymbol{X})\|)^L.
\end{align}
Note that we assume $r_{1,2}, \| \textrm{Res}(\boldsymbol{X})\|<1$ following the logic of \cite{dong2021attention}.
This decaying factor is controlled by the hidden states of the $h$-th head of the $\ell$-th layer
$
\boldsymbol{H}_{\ell,h}=\alpha'\boldsymbol{H}_{\ell-1,h}+(1-\alpha')\boldsymbol{Q}_{\ell,h}\boldsymbol{K}_{\ell,h}^\top
$
and the weight matrix $\boldsymbol{W}^{(\ell)}_{VO,h}$ for the value and output linear projection of attention module as
$C_2=\max_\ell\max_h \|\boldsymbol{W}^{(\ell)}_{VO,h}\|_{1,\infty}\|\boldsymbol{H}_{\ell,h}\|_1$.

In \cite{dong2021attention}, such an effect was created by introducing skip connection,
but in the MHA, the hidden state contribution already produces such an effect without using skip connection.
Also, setting $\alpha'=0$ reproduces the double exponential decay results of the original attention-only network (\ref{eq:double-exp}).

\subsection{Empirical Results} 
Using the theoretical analysis setup used in previous studies, we showed that MHA can effectively prevent rank collapse in the previous section.
However, since these setups are based on several theoretical simplifications, it is unclear whether the rank collapse reduction also occurs in actual Transformers. In particular, it is not clear whether the introduction of MHA has any further effect in usual architectures with skip connection to reduce rank collapse. 
In this section, we will confirm that MHA does indeed further reduce rank collapse in a few controlled experiments.

\begin{table}[ht]
    \centering
    \caption{Changes in performance as skip-free networks based on ViT-T are deepened.}
    \begin{tabular}{c|cc|cc}
        \hline
        \multirow{2}{*}{depth} & \multicolumn{2}{c}{self-attention} & \multicolumn{2}{|c}{MHA}                      \\
                               & CIFAR10& CIFAR100& CIFAR10 & CIFAR100 \\
        \hline\hline
        1 & $55.08\phantom{\uparrow}$        & $30.90\phantom{\uparrow}$  & $65.41\phantom{\uparrow}$ & $40.08\phantom{\uparrow}$          \\
        2 & $63.72\uparrow$                    & $40.06\uparrow$              & $79.75\uparrow$           & $56.94\uparrow$  \\
        4 & $57.38\downarrow$                  & $32.25\downarrow$      & $85.74\uparrow$           & $64.39\uparrow$        \\
        8 & $48.59\downarrow$                  & $17.19\downarrow$      & $80.34\downarrow$         & $49.90\downarrow$      \\
        12  & $10.00\downarrow$                 & $\phantom{0}1.00\downarrow$   & $10.00\downarrow$         & $\phantom{0}1.00\downarrow$   \\
        \hline
    \end{tabular}
    \label{tab:ablation_tiny}
\end{table}


Since the skip-free network was shown to suffer from rank collapse as it gets deeper, let's examine the effect of MHA on the actual performance degradation with depth.
Table \ref{tab:ablation_tiny} shows the results of trained models from depths 1 to 12 for the skip-free networks and their MHA versions, and evaluating their performance.
As can be seen from the results in the left column of the table, when the depth increases beyond 4 layers, the performance drops sharply due to multilayering. On the other hand, for the models in the right column using MHA, it can be seen that the degradation of the model due to multilayering is kept at a fairly mild level.
Thus, the MHA model can effectively utilize the depth of the model than the original model.

Therefore, it is highly likely that MHA can significantly improve rank collapse, which becomes more severe as the network becomes deeper, even in real networks.

\begin{figure}
    \centering
    \includegraphics[width=0.97\linewidth]{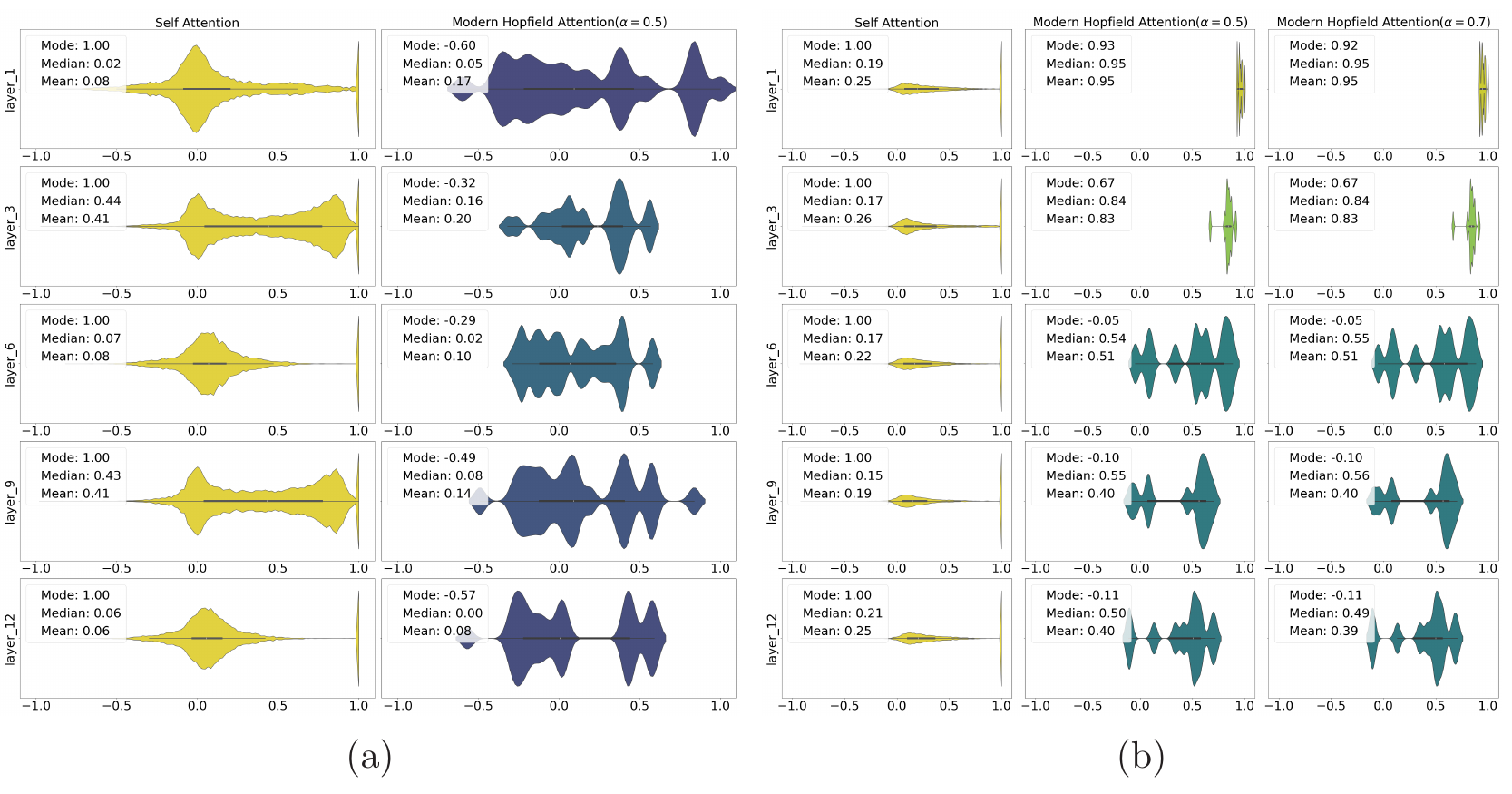}
    \caption{The violin plots of cosine similarity between tokens in several layers for (a) GPT-2 (Medium) trained on Wikitext103 and (b) ViT-B trained on CIFAR100.
    MHA layers with high average similarity of tokens exist, but tokens with a perfect similarity of 1, as in the case of self-attention, disappear, preventing their ranks from dropping.
    }
    \label{fig:enter-label}
\end{figure}

Next, let us examine cases of actual Transformer architectures with skip connections and FFN layers.
Figure \ref{fig:enter-label} shows the measured cosine similarity between tokens in several layers for GPT-2 (Medium) and ViT-B trained on CIFAR100, displayed as violin plots. See the supplementary material for more detailed plots. It is noteworthy that in the cases of normal  GPT-2 and ViT-B, the mode of similarity is $1.0$ for all layers, while the violin plot shows a sharp peak around $1.0$. This indicates that even with the addition of the skip connection and FFN layers, there is still a non-negligible token uniformity, or partial rank collapse.
On the other hand, the results for the GPT-2 and ViT models with MHA show that the peaks in the original models have disappeared and the mode values have been reduced to very small values. This indicates that MHA does indeed play a role in dramatically removing token uniformity in GPT-2 and ViT.

\section{Conclusion}
\label{discussions}

In this paper, we examine the question of whether new insights can be obtained from the modern Hopfield network for Transformer. The results showed that by introducing the hidden state of MCHN into Transformer, a new attention mechanism called MHA, which inherits attention scores from layer to layer, has been discovered and can be useful for improving ViT and GPT performance. 
MHA was also found to play a role in solving the rank collapse problem in deep Transformer. 
The MHA's mechanism to prevent the rank collapse may have contributed to Transformer's improved performance.
We hope that this research will open new possibilities for the systematic design of Transformer architectures using Hopfield networks.

\newpage

\bibliography{papers} 
\bibliographystyle{plain} 

 \appendix

\section{Acknowledgment}
MT was partially supported by JSPS KAKENHI (22H05116), JST CREST (JPMJCR22N4) and AMED under Grant Number JP25wm0625422.

\section{Limitations}

The limitation of this paper is that due to the constraints of computer resources, the experiments are limited to academic scale: At most, GPT-2 (Medium) trained on Wikitext103, ViT-L trained on CIFAR100, and ViT-B trained on ImageNet-1k. It will be an interesting future direction to see what results will be obtained with larger-scale pre-training. In addition, this paper mainly considered rank collapse and attention entropy as factors that contribute to the performance improvement of MHA, but it may be interesting to investigate whether there is any relationship with other factors. In this paper, the research was conducted based on the similarity of the mathematical structure between Hopfield networks and Transformer, but clarifying the essential deep relationship between associative memory and self-attention mechanisms is also a major future challenge.

\section{Preliminaries
}

Recall the definition of the operator $p$-norm:
\begin{align}
\|\boldsymbol{A}\|_p
=
\max_{\boldsymbol{x}\neq\boldsymbol{0}}
\frac{\|\boldsymbol{A}\boldsymbol{x}\|_p}{\|\boldsymbol{x}\|_p}.
\end{align}
The right-hand side is computed using the vector $p$-norm $\|\|_p$.

We will use some basic properties of the operator norm.
First, the $1$-norm and $\infty$-norm are given in the following simple form:
\begin{align}
&
\label{prelim1}
\|\boldsymbol{A}\|_1
=
\max_j
\sum_i
\vert A_{ij}\vert 
,\\
&
\label{prelim2}
\|\boldsymbol{A}\|_\infty
=
\max_i
\sum_j
\vert A_{ij}\vert.
\end{align}
The submultiplicativity of these two norms is
the following properties
\begin{align}
&
\|\boldsymbol{A}\boldsymbol{B}\|_1
\leq
\|\boldsymbol{A}\|_1\|\boldsymbol{B}\|_1,\\
&
\|\boldsymbol{A}\boldsymbol{B}\|_\infty
\leq
\|\boldsymbol{A}\|_\infty\|\boldsymbol{B}\|_\infty.
\end{align}
We have the same property for another choice of norm $\|\|_p$.

We also have the following H\"{o}lder's inequality for $1$- and $\infty$-norms
\begin{align}
\label{holder}
\|\boldsymbol{A}\boldsymbol{B}\|_1
\leq
\|\boldsymbol{A}\|_\infty
\|\boldsymbol{B}\|_1.
\end{align}
Proof is straightforward as
$
\|\boldsymbol{A}\boldsymbol{B}\|_1
=
\max_i\sum_{k}
\vert A_{ik}\vert 
\sum_{j}
\vert B_{kj}\vert 
\leq
\max_i\sum_{k}
\vert A_{ik}\vert 
\max_{k'}\sum_{j}\vert B_{k'j}\vert
$
$=
\|\boldsymbol{A}\|_\infty\|\boldsymbol{B}\|_1$.

Following the paper \cite{dong2021attention}, we also use the following composite of operator norms in this paper
\begin{align}
\|\boldsymbol{A}\|_{1,\infty}
=
\sqrt{
\|\boldsymbol{A}\|_{1}
\|\boldsymbol{A}\|_{\infty}
}.
\end{align}

\section{Proof of Theorem: Convergence of the Residual}

In this section, we present the proof of theorems on the phenomenon of rank collapse.
In the following, we extend the proof and main theorems of \cite{dong2021attention} to our model, filling in minor errors and gaps in the proof in the literature.

First, recall the defining equation of the residual in \cite{dong2021attention}:
\begin{align}
\boldsymbol{R}^{(\ell)}
&=\textrm{res}(\boldsymbol{X}^{(\ell)})
=\boldsymbol{X}^{(\ell)}-\boldsymbol{1}\boldsymbol{x}^{(\ell)}{}^\top,
\\
\boldsymbol{x}^{(\ell)}
&=
\arg\min{}_{\boldsymbol{x}}
\|
\boldsymbol{X}^{(\ell)}-\boldsymbol{1}\boldsymbol{x}^\top
\|_F,
\end{align}
where a row of rank one matrix for $\boldsymbol{X}$ is given by
\begin{align}
\boldsymbol{x}^{(\ell)}=\frac{1}{N}
\boldsymbol{X}^\top\boldsymbol{1}.
\end{align}

Let's start with the simplest case. 
The single head self-attention layer is
\begin{align}
{\textrm{SA}}(\boldsymbol{X}^{(\ell)})
=\boldsymbol{P}^{(\ell)}\boldsymbol{X}^{(\ell)}
\boldsymbol{W}_V^{(\ell)},
\end{align}
where the attention weight $\boldsymbol{P}^{(\ell)}$ is given by the attention score $\boldsymbol{A}^{(\ell)}$ as
\begin{align}
\boldsymbol{P}^{(\ell)}
=\textrm{softmax}_{\textrm{row}}
\left(\boldsymbol{A}^{(\ell)}\right).
\end{align}
$\textrm{softmax}_{\textrm{row}}$ is the row-wise softmax function.
For self-attention layer,
the residual of the layer is ${\textrm{Res}}\big(
{\textrm{SA}}(\boldsymbol{X}^{(\ell)}
\big)=\boldsymbol{P}^{(\ell)}\boldsymbol{X}^{(\ell)}
\boldsymbol{W}_V^{(\ell)}
-\boldsymbol{1}\boldsymbol{z}^\top$ for certain $\boldsymbol{z}$.
\cite{dong2021attention}
gave inequality for the norm of this residual.


\subsection{Single-Layer MHA}
First, to investigate the rank collapse caused by single MHA-layer, we prove the following inequality to evaluate the degree of convergence of the residual of MHA layer ${\textrm{Res}}\big(
{\textrm{MHA}}(\boldsymbol{X}^{(\ell)})
\big)$:
\begin{tcolorbox}
\begin{lemma}
\label{lem:single-head}
When $\alpha=0$, for a single-head MHA layer, the residual is bounded as follows:
\begin{align}
\label{eq:single-head}
\|
{\textrm{Res}}\big(
{\textrm{MHA}}(\boldsymbol{X}^{(\ell)})
\big)
\|_{1,\infty}
\leq
\frac{4(1-\alpha')C_1}{\sqrt{d_k}}
\|
{\textrm{Res}}\big(
\boldsymbol{X}^{(\ell)}
\big)
\|_{1,\infty}^3
+
\frac{4\alpha' C_2}{\sqrt{d_k}}
\|
{\textrm{Res}}\big(
\boldsymbol{X}^{(\ell)}
\big)
\|_{1,\infty}
.
\end{align}
\end{lemma}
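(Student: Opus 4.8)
The plan is to specialize the MHA update (\ref{hsd1}) to $\alpha=0$, so that a single layer acts as $\textrm{MHA}(\boldsymbol{X}^{(\ell)})=\boldsymbol{P}^{(\ell)}\boldsymbol{X}^{(\ell)}\boldsymbol{W}_V^{(\ell)}$ with attention weight $\boldsymbol{P}^{(\ell)}=\textrm{softmax}_{\textrm{row}}\!\big(\tfrac{1}{\sqrt{d_k}}\boldsymbol{H}^{(\ell)}\big)$ and hidden state $\boldsymbol{H}^{(\ell)}=(1-\alpha')\boldsymbol{Q}^{(\ell)}\boldsymbol{K}^{(\ell)\top}+\alpha'\boldsymbol{H}^{(\ell-1)}$. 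From here I would reproduce the single-layer residual estimate of \cite{dong2021attention} (the engine behind Theorem \ref{thm:double-exp}), but carrying the extra accumulated term $\alpha'\boldsymbol{H}^{(\ell-1)}$ inside the softmax. Throughout I would use only the submultiplicativity of $\|\cdot\|_1,\|\cdot\|_\infty$, H\"older's inequality (\ref{holder}), and the composite norm $\|\cdot\|_{1,\infty}$ recalled in the Preliminaries.

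The central algebraic step is the identity $\textrm{Res}(\boldsymbol{P}\boldsymbol{X}\boldsymbol{W}_V)=\boldsymbol{P}_0\,\textrm{Res}(\boldsymbol{X})\,\boldsymbol{W}_V$, where $\boldsymbol{P}_0=\boldsymbol{P}-\boldsymbol{1}\boldsymbol{u}^\top$ and $\boldsymbol{u}$ is chosen as the row-softmax of the part of $\tfrac{1}{\sqrt{d_k}}\boldsymbol{H}^{(\ell)}$ that is constant along each row. Since $\boldsymbol{u}$ is a probability vector, each row of $\boldsymbol{P}_0$ sums to zero, hence $\boldsymbol{P}_0\boldsymbol{1}=\boldsymbol{0}$ and the rank-one part $\boldsymbol{1}\boldsymbol{x}^\top$ of $\boldsymbol{X}$ is annihilated, leaving exactly $\boldsymbol{P}_0\,\textrm{Res}(\boldsymbol{X})$. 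Submultiplicativity and (\ref{holder}) then give $\|\textrm{Res}(\textrm{MHA}(\boldsymbol{X}))\|_{1,\infty}\le\|\boldsymbol{W}_V\|_{1,\infty}\,\|\boldsymbol{P}_0\|_{1,\infty}\,\|\textrm{Res}(\boldsymbol{X})\|_{1,\infty}$, which already isolates the one factor of $\textrm{Res}(\boldsymbol{X})$ coming from the value path.

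It remains to bound $\|\boldsymbol{P}_0\|_{1,\infty}$, and this is where the two terms of the lemma are born. Because $\boldsymbol{P}_0$ measures the deviation of the row-softmax from the reference $\boldsymbol{1}\boldsymbol{u}^\top$, a Lipschitz estimate for the softmax controls it by $\tfrac{1}{\sqrt{d_k}}$ times the genuinely column-varying part of $\boldsymbol{H}^{(\ell)}$, which splits additively as $(1-\alpha')$ times the varying part of $\boldsymbol{Q}^{(\ell)}\boldsymbol{K}^{(\ell)\top}$ plus $\alpha'$ times the varying part of $\boldsymbol{H}^{(\ell-1)}$. For the score term, writing $\boldsymbol{X}=\boldsymbol{1}\boldsymbol{x}^\top+\textrm{Res}(\boldsymbol{X})$ in $\boldsymbol{Q}^{(\ell)}\boldsymbol{K}^{(\ell)\top}=\boldsymbol{X}\boldsymbol{W}_{QK}\boldsymbol{X}^\top$ and discarding the row-constant pieces (to which the softmax is invariant) leaves the leading surviving contribution $\textrm{Res}(\boldsymbol{X})\boldsymbol{W}_{QK}\textrm{Res}(\boldsymbol{X})^\top$, quadratic in $\textrm{Res}(\boldsymbol{X})$; with the value-path factor this yields the cubic term with prefactor $\tfrac{4(1-\alpha')C_1}{\sqrt{d_k}}$, where $C_1$ absorbs $\|\boldsymbol{W}_{QK}\|$ and the softmax constant. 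The accumulated term $\alpha'\boldsymbol{H}^{(\ell-1)}$ does not depend on $\textrm{Res}(\boldsymbol{X}^{(\ell)})$, so it is treated as a fixed matrix whose norm is absorbed into $C_2=\max_\ell\max_h\|\boldsymbol{W}_{VO,h}^{(\ell)}\|_{1,\infty}\|\boldsymbol{H}_{\ell,h}\|_1$; combined with the single value-path factor it produces the linear term $\tfrac{4\alpha'C_2}{\sqrt{d_k}}\|\textrm{Res}(\boldsymbol{X})\|_{1,\infty}$. Adding the two bounds gives (\ref{eq:single-head}).

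The main obstacle is the softmax step: I must make the linearization around the tailored rank-one reference $\boldsymbol{1}\boldsymbol{u}^\top$ rigorous so that the current-score contribution genuinely scales as $\|\textrm{Res}(\boldsymbol{X})\|^2$ rather than merely linearly, while the hidden-state contribution stays at the benign linear order, and so that the numerical constant collapses to exactly $4$ with the $\tfrac{1}{\sqrt{d_k}}$ scaling shared by both terms. A secondary point is that, unlike \cite{dong2021attention}, the paper does not tie $\boldsymbol{W}_1,\boldsymbol{W}_2$, so $\boldsymbol{W}_{QK}$ need not be symmetric; I would verify that only the operator norm $\|\boldsymbol{W}_{QK}\|$ enters, making asymmetry harmless, and that the single-head constant $4$ here is consistent with the multi-head factor $r=8H/\sqrt{d_k}$ of Theorem \ref{thm:double-exp}, the remaining $2H$ arising when $H$ heads are summed and composed with the output projection.
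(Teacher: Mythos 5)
Your proposal is correct and follows essentially the same route as the paper's proof: split the accumulated score into a row-constant part (which supplies the softmax reference $\boldsymbol{u}=\textrm{softmax}(\boldsymbol{r}^{(\ell)\top})$), a part $\boldsymbol{E}^{(\ell)}$ quadratic in $\textrm{Res}(\boldsymbol{X})$, and the inherited term $\alpha'\boldsymbol{A}^{(\ell-1)}$, then control the softmax's deviation from $\boldsymbol{1}\boldsymbol{u}^\top$ by the row-wise oscillation of the non-constant parts --- the paper makes your ``softmax Lipschitz'' step rigorous by invoking Lemma A.3 of \cite{dong2021attention} under the assumption $\vert\tilde{E}_{ij}-\tilde{E}_{ik}\vert\le 1.256$, which yields exactly the factor $2\|\tilde{\boldsymbol{D}}\boldsymbol{1}\|_\infty$ and hence the constant $4$ after bounding $\max_{jk}\vert\tilde{E}_{ij}-\tilde{E}_{ik}\vert\le 2\|\boldsymbol{E}^{(\ell)}\|_1+2\tfrac{\alpha'}{\sqrt{d_k}}\|\boldsymbol{A}^{(\ell-1)}\|_1$. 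Two cosmetic points: your central ``identity'' should be stated as an inequality (since $\textrm{Res}$ is defined via an argmin, comparing against a fixed rank-one reference only gives $\le$, which is all that is needed and is how the paper phrases it via $\boldsymbol{res}^{(\ell)}$), and the passage from the single-head constant $4$ to $r=8H/\sqrt{d_k}$ in Theorem \ref{thm:double-exp} uses the factor $H$ from summing heads plus a factor $2$ from replacing the sum of the cubic and linear terms by twice their maximum, not anything from the output projection.
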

\end{tcolorbox}
In order to examine the contribution of MHA alone, in addition to not adding any shortcut paths, we also removed the shortcut paths within MHA by setting $\alpha=0$.
Extending such inequalities to the case where there are shortcut paths is straightforward by following the techniques in
\cite{dong2021attention}.
The coefficients here are
$C_1=\|
\boldsymbol{W}^{(\ell)}_V
\|_{1,\infty}
\|
\boldsymbol{W}_{QK}^{(\ell)}
\|_1$ and
$C_2=\|
\boldsymbol{W}^{(\ell)}_V
\|_{1,\infty}
\|
{\boldsymbol{A}}^{(\ell-1)}
\|_1$.

The attention weight of our MHA layer is the exponential moving average of query-key dot-product $\boldsymbol{Q}\boldsymbol{K}^\top$ across layers:
\begin{align}
\boldsymbol{P}^{(\ell)}
&=\textrm{softmax}_{\textrm{row}}
\left(\boldsymbol{A}^{(\ell)}\right),
\\
\boldsymbol{A}^{(\ell)}
&=
\frac{1}{\sqrt{d_k}}
\beta
\sum_{m=1}^\ell
\alpha'^{\ell-m}\boldsymbol{Q}^{(m)}\boldsymbol{K}^{(m)}{}^\top
=
\frac{1}{\sqrt{d_k}}
\beta \boldsymbol{Q}^{(\ell)}\boldsymbol{K}^{(\ell)}{}^\top
+
\frac{1}{\sqrt{d_k}}
\alpha'
\boldsymbol{A}^{(\ell-1)},
\end{align}
where $\beta=1-\alpha'$.
The query and key matrices here are
$\boldsymbol{Q}^{(\ell)}
=\boldsymbol{X}^{(\ell)}\boldsymbol{W}_Q^{(\ell)}
+\boldsymbol{1}\boldsymbol{b}_Q^{(\ell)}{}^\top$
and
$
\boldsymbol{K}^{(\ell)}
=\boldsymbol{X}^{(\ell)}\boldsymbol{W}_K^{(\ell)}
+\boldsymbol{1}\boldsymbol{b}_K^{(\ell)}{}^\top$.
Notice that the addition of the bias term 
$+\boldsymbol{1}\boldsymbol{b}_{Q,K}^{(\ell)}{}^\top$
is the broadcast addition of 
$\boldsymbol{b}_{Q,K}^{(\ell)}{}^\top$ along the token dimension.
Substituting this expression gives
\begin{align}
\boldsymbol{A}^{(\ell)}
&=
\frac{1}{\sqrt{d_k}}
\beta \left(\boldsymbol{X}^{(\ell)}\boldsymbol{W}_Q^{(\ell)}
+\boldsymbol{1}\boldsymbol{b}_Q^{(\ell)}{}^\top\right)
\left(
\boldsymbol{X}^{(\ell)}\boldsymbol{W}_K^{(\ell)}
+\boldsymbol{1}\boldsymbol{b}_K^{(\ell)}{}^\top\right)^\top
+
\frac{1}{\sqrt{d_k}}
\alpha'
\boldsymbol{A}^{(\ell-1)}\nonumber\\
&=
\frac{1}{\sqrt{d_k}}
\beta
\boldsymbol{X}^{(\ell)}\boldsymbol{W}_{QK}^{(\ell)}\boldsymbol{X}^{(\ell)}{}^\top
+
\frac{1}{\sqrt{d_k}}
\beta
\boldsymbol{1}\boldsymbol{b}_{QK}^{(\ell)}{}^\top\boldsymbol{X}^{(\ell)}{}^\top
+
\frac{1}{\sqrt{d_k}}
\alpha'
\boldsymbol{A}^{(\ell-1)}
+(\cdots)\boldsymbol{1}^\top
.
\end{align}
We use the following notation
\begin{align}
&\boldsymbol{W}_{QK}^{(\ell)}=
\boldsymbol{W}_Q^{(\ell)}\boldsymbol{W}_K^{(\ell)}{}^\top
,\\
&\boldsymbol{b}_{QK}^{(\ell)}=
\boldsymbol{W}_K^{(\ell)}{}
\boldsymbol{b}_Q^{(\ell)}.
\end{align}
Substituting the definition of the residual, we obtain
$\boldsymbol{X}^{(\ell)}=
\boldsymbol{R}^{(\ell)}+\boldsymbol{1}\boldsymbol{x}^{(\ell)}$

\begin{align}
\boldsymbol{A}^{(\ell)}
=
\frac{1}{\sqrt{d_k}}\left(
\beta
\boldsymbol{1}
\boldsymbol{x}^{(\ell)}{}^\top
\boldsymbol{W}_{QK}^{(\ell)}\boldsymbol{R}^{(\ell)}{}^\top
+
\beta
\boldsymbol{R}^{(\ell)}{}
\boldsymbol{W}_{QK}^{(\ell)}\boldsymbol{R}^{(\ell)}{}^\top
+
\beta
\boldsymbol{1}\boldsymbol{b}_{QK}^{(\ell)}{}^\top\boldsymbol{R}^{(\ell)}{}^\top
+
\alpha'
\boldsymbol{A}^{(\ell-1)}\right)\nonumber\\
+(\cdots)\boldsymbol{1}^\top
.
\end{align}
The point here is that
the last term is constant over row.
Such a constant shift of attention score $\boldsymbol{A}^{(\ell)}$ does not affect the attention weight $\boldsymbol{P}^{(\ell)}$ since the softmax function is row-wise.
Therefore, this term will be omitted below.

To simplify the equations, we introduce the following notation:
\begin{align}
\boldsymbol{r}^{(\ell)}{}^\top
&=
\frac{1}{\sqrt{d_k}}
\beta
\boldsymbol{x}^{(\ell)}{}^\top
\boldsymbol{W}_{QK}^{(\ell)}\boldsymbol{R}^{(\ell)}{}^\top
+
\frac{1}{\sqrt{d_k}}
\beta
\boldsymbol{b}_{QK}^{(\ell)}{}^\top\boldsymbol{R}^{(\ell)}{}^\top
,\\
&=
\frac{1}{\sqrt{d_k}}
\beta
\left(
\boldsymbol{R}^{(\ell)}
\boldsymbol{W}_{QK}^{(\ell)}{}^\top
\boldsymbol{x}^{(\ell)}
+
\boldsymbol{R}^{(\ell)}
\boldsymbol{b}_{QK}^{(\ell)}
\right)^\top,
\\
\label{appendix:defE}
\boldsymbol{E}^{(\ell)}
&=
\frac{1}{\sqrt{d_k}}
\beta
\boldsymbol{R}^{(\ell)}{}
\boldsymbol{W}_{QK}^{(\ell)}
\boldsymbol{R}^{(\ell)}{}^\top.
\end{align}
The attention score is then
\begin{align}
\boldsymbol{A}^{(\ell)}
=
\boldsymbol{E}^{(\ell)}
+
\boldsymbol{1}\boldsymbol{r}^{(\ell)}{}^\top
+
\frac{1}{\sqrt{d_k}}
\alpha'
\boldsymbol{A}^{(\ell-1)}
=
\tilde{\boldsymbol{E}}^{(\ell)}
+
\boldsymbol{1}\boldsymbol{r}^{(\ell)}{}^\top
,
\end{align}
where $\tilde{\boldsymbol{E}}^{(\ell)}
=
{\boldsymbol{E}}^{(\ell)}
+
\frac{1}{\sqrt{d_k}}
\alpha'
\boldsymbol{A}^{(\ell-1)}$.

Since $\boldsymbol{P}^{(\ell)}$ is row-stochastic matrix,
we have
$\boldsymbol{P}^{(\ell)}\boldsymbol{1}
=\boldsymbol{1}$.
Therefore,
$\boldsymbol{P}^{(\ell)}\boldsymbol{R}^{(\ell)}
=
\boldsymbol{P}^{(\ell)}(
\boldsymbol{X}^{(\ell)}
-\boldsymbol{1}\boldsymbol{x}^{(\ell)}{}^\top)$
gives
\begin{align}
\boldsymbol{P}^{(\ell)}\boldsymbol{R}^{(\ell)}
=
\boldsymbol{P}^{(\ell)}\boldsymbol{X}^{(\ell)}
-
\boldsymbol{1}\boldsymbol{x}^{(\ell)}{}^\top.
\end{align}
Using these relations, we can obtain an inequality for the norm of the ${\textrm{Res}}\big({\textrm{MHA}}(\boldsymbol{X}^{(\ell)})\big)$.

First, let us use the inequality used to prove the theorem for self-attention in \cite{dong2021attention}.
Lemma A.3 in \cite{dong2021attention} 
under assumption
$\vert\tilde{E}_{ij}-\tilde{E}_{ik}\vert\leq1.256$
lead to the following element-wise inequalities
\begin{align}
(\boldsymbol{I}-2\tilde{\boldsymbol{D}})
\boldsymbol{1}
\textrm{softmax}(\boldsymbol{r}^{(\ell)}{}^\top)
\boldsymbol{R}^{(\ell)}
\preceq
\boldsymbol{P}^{(\ell)}\boldsymbol{R}^{(\ell)}
\preceq
(\boldsymbol{I}+2\tilde{\boldsymbol{D}})
\boldsymbol{1}
\textrm{softmax}(\boldsymbol{r}^{(\ell)}{}^\top)
\boldsymbol{R}^{(\ell)},
\end{align}
where $\preceq$ is element-wise inequality and $\boldsymbol{I}$ is the identity matrix.
The matrix $\tilde{\boldsymbol{D}}$
is the following diagonal matrix
\begin{align}
\tilde{\boldsymbol{D}}
=
\textrm{diag}(\boldsymbol{d}),\quad
d_i
=
\max_{j,k}
\vert\tilde{E}_{ij}-\tilde{E}_{ik}\vert.
\end{align}
This gives us the following element-wise inequality:
\begin{align}
-2\tilde{\boldsymbol{D}}
\boldsymbol{1}
\textrm{softmax}(\boldsymbol{r}^{(\ell)}{}^\top)
\boldsymbol{R}^{(\ell)}
\preceq
\boldsymbol{P}^{(\ell)}\boldsymbol{R}^{(\ell)}
-
\boldsymbol{1}\left(\boldsymbol{x}^{(\ell)}{}^\top
+
\textrm{softmax}(\boldsymbol{r}^{(\ell)}{}^\top)
\boldsymbol{R}^{(\ell)}
\right)\nonumber\\
\preceq
2\tilde{\boldsymbol{D}}
\boldsymbol{1}
\textrm{softmax}(\boldsymbol{r}^{(\ell)}{}^\top)
\boldsymbol{R}^{(\ell)}.
\end{align}
We therefore have the element-wise inequality
\begin{align}
\label{appendix:elementwise-ineq}
\vert
\boldsymbol{P}^{(\ell)}\boldsymbol{R}^{(\ell)}
-
\boldsymbol{1}\left(\boldsymbol{x}^{(\ell)}{}^\top
+
\textrm{softmax}(\boldsymbol{r}^{(\ell)}{}^\top)
\boldsymbol{R}^{(\ell)}
\right)
\vert
\preceq
2
\vert
\tilde{\boldsymbol{D}}
\boldsymbol{1}
\textrm{softmax}(\boldsymbol{r}^{(\ell)}{}^\top)
\boldsymbol{R}^{(\ell)}
\vert.
\end{align}
Here, 
$\vert\cdot\vert$ is the element-wise absolute value function.
This elementwise inequality implies the following inequalities
for single-head MHA layer
${\textrm{MHA}}(\boldsymbol{X}^{(\ell)})
=
\boldsymbol{P}^{(\ell)}
\boldsymbol{X}^{(\ell)}
\boldsymbol{W}^{(\ell)}_V
$:
\begin{align}
\label{append:ieqL1}
&\|
{\textrm{MHA}}(\boldsymbol{X}^{(\ell)})
-
\boldsymbol{res}^{(\ell)}
\|_{1}
\leq
2
\|
\tilde{\boldsymbol{D}}\,
\boldsymbol{1}\,
\textrm{softmax}(\boldsymbol{r}^{(\ell)}{}^\top)
\boldsymbol{R}^{(\ell)}
\boldsymbol{W}^{(\ell)}_V
\|_{1},\\
\label{append:ieqLinfty}
&\|
{\textrm{MHA}}(\boldsymbol{X}^{(\ell)})
-
\boldsymbol{res}^{(\ell)}
\|_{\infty}
\leq
2
\|
\tilde{\boldsymbol{D}}\,
\boldsymbol{1}\,
\textrm{softmax}(\boldsymbol{r}^{(\ell)}{}^\top)
\boldsymbol{R}^{(\ell)}
\boldsymbol{W}^{(\ell)}_V
\|_{\infty}.
\end{align}
Using the norm properties of (\ref{prelim1}) and (\ref{prelim2}), this inequality can be immediately proven from (\ref{appendix:elementwise-ineq}).
The shorthand notation
$
\boldsymbol{res}^{(\ell)}
=
\boldsymbol{x}^{(\ell)}{}^\top
+
\textrm{softmax}(\boldsymbol{r}^{(\ell)}{}^\top)
\boldsymbol{R}^{(\ell)}
\boldsymbol{W}_V
$
is used in the following.

Using H\"{o}lder's inequality (\ref{holder})
and the submultiplicativity of the 1-norm ,
the right-hand side of
(\ref{append:ieqL1})
becomes
\begin{align}
\|
{\textrm{MHA}}(\boldsymbol{X}^{(\ell)})
-
\boldsymbol{res}^{(\ell)}
\|_{1}
&\leq
2
\|
\tilde{\boldsymbol{D}}\,
\boldsymbol{1}
\|_{\infty}
\|
\textrm{softmax}(\boldsymbol{r}^{(\ell)}{}^\top)
\boldsymbol{R}^{(\ell)}
\boldsymbol{W}^{(\ell)}_V
\|_{1}\nonumber\\
&\leq
2
\|
\tilde{\boldsymbol{D}}\,
\boldsymbol{1}
\|_{\infty}
\|
\boldsymbol{R}^{(\ell)}
\|_{1}
\|
\boldsymbol{W}^{(\ell)}_V
\|_{1}.
\end{align}
The inequality 
$\|\textrm{softmax}(\boldsymbol{r}^{(\ell)}{}^\top)
\|_{1}\leq1$
used to show the second inequality is due to the fact that the magnitude of each element of the row vector on the right side is less than 1.

Similarly, by using submultiplicativity and $\|\textrm{softmax}(\boldsymbol{r}^{(\ell)}{}^\top)
\|_{\infty}=1$, we obtain the following from inequality (\ref{append:ieqLinfty})
\begin{align}
\|
{\textrm{MHA}}(\boldsymbol{X}^{(\ell)})
-
\boldsymbol{res}^{(\ell)}
\|_{\infty}
&\leq
2
\|
\tilde{\boldsymbol{D}}\,
\boldsymbol{1}
\|_{\infty}
\|
\textrm{softmax}(\boldsymbol{r}^{(\ell)}{}^\top)
\|_{\infty}
\|
\boldsymbol{R}^{(\ell)}
\|_{\infty}
\|
\boldsymbol{W}^{(\ell)}_V
\|_{\infty}\nonumber\\
&=
2
\|
\tilde{\boldsymbol{D}}\,
\boldsymbol{1}
\|_{\infty}
\|
\boldsymbol{R}^{(\ell)}
\|_{\infty}
\|
\boldsymbol{W}^{(\ell)}_V
\|_{\infty}.
\end{align}
Multiplying these two inequalities, we obtain
the inequality for the composite of norms $\|\cdot\|_{1,\infty}$ as follows
\begin{align}
\label{appendix:formula1}
\|
{\textrm{MHA}}(\boldsymbol{X}^{(\ell)})
-
\boldsymbol{res}^{(\ell)}
\|_{1,\infty}
\leq
2
\|
\tilde{\boldsymbol{D}}\,
\boldsymbol{1}
\|_{\infty}
\|
\boldsymbol{W}^{(\ell)}_V
\|_{1,\infty}
\|
\boldsymbol{R}^{(\ell)}
\|_{1,\infty}.
\end{align}

Since $\tilde{\boldsymbol{D}}$ depends on the input ${\boldsymbol{X}}^{(\ell)}$, let us evaluate the value of the norm $\|
\tilde{\boldsymbol{D}}\,
\boldsymbol{1}
\|_{\infty}$
in terms of specific inequalities.
From the definitions of $\tilde{\boldsymbol{D}}$, $\tilde{\boldsymbol{E}}$ and the $\infty$-norm, we obtain the following inequality
\begin{align}
\|
\tilde{\boldsymbol{D}}\,
\boldsymbol{1}
\|_{\infty}
&=
\max_{i}
\max_{jk}
\vert
E_{ij}^{(\ell)}-E_{ik}^{(\ell)}
+\alpha' A_{ij}^{(\ell-1)}
-\alpha' A_{ik}^{(\ell-1)}
\vert\nonumber\\
&\leq
2\max_{ij}
\vert
E_{ij}^{(\ell)}
\vert
+
2\alpha'
\max_{ij}
\vert
A_{ij}^{(\ell-1)}
\vert\nonumber\\
&
\label{appendix:formula2}
\leq
2\|
{\boldsymbol{E}}^{(\ell)}
\|_1
+
2
\frac{\alpha'}{\sqrt{d_k}}
\|
{\boldsymbol{A}}^{(\ell-1)}
\|_1.
\end{align}
Using (\ref{appendix:defE}), the norm $\|
{\boldsymbol{E}}^{(\ell)}
\|_1$ becomes
\begin{align}
\|
{\boldsymbol{E}}^{(\ell)}
\|_1
&=
\frac{\beta}{\sqrt{d_k}}
\|
\boldsymbol{R}^{(\ell)}{}
\boldsymbol{W}_{QK}^{(\ell)}
\boldsymbol{R}^{(\ell)}{}^\top\|_1
\nonumber\\
&\leq
\frac{\beta}{\sqrt{d_k}}
\|
\boldsymbol{R}^{(\ell)}{}
\|_1
\|
\boldsymbol{W}_{QK}^{(\ell)}
\|_1
\|
\boldsymbol{R}^{(\ell)}{}^\top\|_1
\nonumber\\
&=
\label{appendix:formula3}
\frac{\beta}{\sqrt{d_k}}
\|
\boldsymbol{W}_{QK}^{(\ell)}
\|_1
\|
\boldsymbol{R}^{(\ell)}{}
\|_{1,\infty}^2
,
\end{align}
where we use
$\|
\boldsymbol{R}^{(\ell)}{}^\top\|_1=\|
\boldsymbol{R}^{(\ell)}\|_\infty$ for the last equality.

Then, by combining (\ref{appendix:formula1}), (\ref{appendix:formula2}), and (\ref{appendix:formula3}), we obtain the following inequality
\begin{align}
\|
{\textrm{MHA}}(\boldsymbol{X}^{(\ell)})
-
\boldsymbol{res}^{(\ell)}
\|_{1,\infty}
\leq
\frac{4\beta}{\sqrt{d_k}}
\|
\boldsymbol{W}^{(\ell)}_V
\|_{1,\infty}
\|
\boldsymbol{W}_{QK}^{(\ell)}
\|_1
\|
\boldsymbol{R}^{(\ell)}{}
\|_{1,\infty}^3\nonumber\\
+
\frac{4\alpha'}{\sqrt{d_k}}
\|
\boldsymbol{W}^{(\ell)}_V
\|_{1,\infty}
\|
{\boldsymbol{A}}^{(\ell-1)}
\|_1
\|
\boldsymbol{R}^{(\ell)}
\|_{1,\infty}
.
\end{align}
Since $\boldsymbol{R}^{(\ell)}={\textrm{Res}}\big(
\boldsymbol{X}^{(\ell)}
\big)$, we obtain Lemma \ref{lem:single-head}:
\begin{tcolorbox}
\begin{align}
\label{eq:shmha}
\|
{\textrm{Res}}\big(
{\textrm{SA}}(\boldsymbol{X}^{(\ell)})
\big)
\|_{1,\infty}
\leq
&\frac{4(1-\alpha')}{\sqrt{d_k}}
\|
\boldsymbol{W}^{(\ell)}_V
\|_{1,\infty}
\|
\boldsymbol{W}_{QK}^{(\ell)}
\|_1
\|
{\textrm{Res}}\big(
\boldsymbol{X}^{(\ell)}
\big)
\|_{1,\infty}^3\nonumber\\
&+
\frac{4\alpha'}{\sqrt{d_k}}
\|
\boldsymbol{W}^{(\ell)}_V
\|_{1,\infty}
\|
{\boldsymbol{A}}^{(\ell-1)}
\|_1
\|
{\textrm{Res}}\big(
\boldsymbol{X}^{(\ell)}
\big)
\|_{1,\infty}
.
\end{align}
\end{tcolorbox}
This inequality is precisely (\ref{eq:single-head}).

\subsection{Single-Layer Multi-Head MHA}

Next, we extend the results of the previous section to the case of multi-headed MHA. The lemma we prove here is as follows:
\begin{tcolorbox}
\begin{lemma}
\label{lem:multi-head}
When $\alpha=0$, for a multi-head MHA layer, the residual is bounded as follows:
\begin{align}
\label{lem-eq:mhmha}
\|
{\textrm{Res}}\big(
{\textrm{MHMHA}}(\boldsymbol{X}^{(\ell)})
\big)
\|_{1,\infty}
\leq
\frac{4H(1-\alpha')C_1}{\sqrt{d_k}}
\|
{\textrm{Res}}\big(
\boldsymbol{X}^{(\ell)}
\big)
\|_{1,\infty}^3
+
\frac{4H\alpha' C_2}{\sqrt{d_k}}
\|
{\textrm{Res}}\big(
\boldsymbol{X}^{(\ell)}
\big)
\|_{1,\infty}
.
\end{align}
\end{lemma}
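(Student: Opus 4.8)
The plan is to reduce the multi-head estimate to $H$ copies of the single-head bound of Lemma \ref{lem:single-head}, while taking care that the composite norm $\|\cdot\|_{1,\infty}$ is not itself subadditive. First I would write the multi-head layer as a sum over its $H$ heads,
\begin{align}
\textrm{MHMHA}(\boldsymbol{X}^{(\ell)})
=
\sum_{h=1}^{H}
\boldsymbol{P}^{(\ell)}_h
\boldsymbol{X}^{(\ell)}
\boldsymbol{W}^{(\ell)}_{V,h},
\end{align}
where each summand is exactly a single-head MHA output with its own attention weight $\boldsymbol{P}^{(\ell)}_h$ (built from the per-head accumulated score) and its own value-output projection, and I would define $C_1,C_2$ as the maxima over heads of the corresponding per-head constants appearing in Lemma \ref{lem:single-head}. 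The two structural facts I would exploit are that the centering map $\textrm{Res}(\boldsymbol{Y})=(\boldsymbol{I}-\tfrac{1}{N}\boldsymbol{1}\boldsymbol{1}^\top)\boldsymbol{Y}$ is linear and annihilates any constant-over-rows matrix $\boldsymbol{1}\boldsymbol{z}^\top$, so that the residual of the sum is the sum of the per-head residuals.

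Next, for each head $h$ the single-head argument already produces a constant-over-rows center $\boldsymbol{1}\boldsymbol{z}_h^\top$ together with two \emph{separate} operator-norm estimates $\|\textrm{SA}_h(\boldsymbol{X}^{(\ell)})-\boldsymbol{1}\boldsymbol{z}_h^\top\|_1 \le A$ and $\|\textrm{SA}_h(\boldsymbol{X}^{(\ell)})-\boldsymbol{1}\boldsymbol{z}_h^\top\|_\infty \le B$, whose geometric mean $\sqrt{AB}$ reproduces the right-hand side of (\ref{eq:single-head}) once the max-over-heads constants $C_1,C_2$ are inserted. Setting $\boldsymbol{z}=\sum_h \boldsymbol{z}_h$, the combined center $\boldsymbol{1}\boldsymbol{z}^\top$ is again constant over rows, and $\textrm{MHMHA}(\boldsymbol{X}^{(\ell)})-\boldsymbol{1}\boldsymbol{z}^\top=\sum_h(\textrm{SA}_h(\boldsymbol{X}^{(\ell)})-\boldsymbol{1}\boldsymbol{z}_h^\top)$. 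I would then apply the genuine triangle inequality in the $1$-norm and in the $\infty$-norm separately to this sum, giving $\|\textrm{MHMHA}(\boldsymbol{X}^{(\ell)})-\boldsymbol{1}\boldsymbol{z}^\top\|_1 \le HA$ and $\|\textrm{MHMHA}(\boldsymbol{X}^{(\ell)})-\boldsymbol{1}\boldsymbol{z}^\top\|_\infty \le HB$, and recombine only at the very end via $\|\cdot\|_{1,\infty}=\sqrt{\|\cdot\|_1\,\|\cdot\|_\infty}\le \sqrt{(HA)(HB)}=H\sqrt{AB}$, which is exactly $H$ times the single-head bound. The lemma then follows by the same minimality step used for Lemma \ref{lem:single-head}, namely $\|\textrm{Res}(\textrm{MHMHA}(\boldsymbol{X}^{(\ell)}))\|_{1,\infty}\le\|\textrm{MHMHA}(\boldsymbol{X}^{(\ell)})-\boldsymbol{1}\boldsymbol{z}^\top\|_{1,\infty}$.

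The step I expect to be the main obstacle is precisely this recombination, because $\|\cdot\|_{1,\infty}=\sqrt{\|\cdot\|_1\,\|\cdot\|_\infty}$ fails the triangle inequality (one checks on small examples that it is in fact super-additive), so I cannot apply any triangle inequality directly to the composite norm of $\sum_h\textrm{Res}(\textrm{SA}_h)$. The resolution is to descend to the two constituent operator norms, in which the triangle inequality does hold, sum there, and only then take the geometric mean; the point to verify is that this yields exactly the linear factor $H$ rather than something larger, which is guaranteed by the elementary inequality $\sqrt{(\sum_h a_h)(\sum_h b_h)}\le H\sqrt{AB}$ whenever $a_h\le A$ and $b_h\le B$ uniformly in $h$. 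A secondary technical point, shared with the single-head case, is that the same center $\boldsymbol{z}_h$ must serve both norms; this is automatic here, since the per-head center is the constant-over-rows matrix produced by the softmax linearization in the single-head proof, and summing the centers preserves this structure.
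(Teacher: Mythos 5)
Your proposal is correct and follows essentially the same route as the paper: decompose $\textrm{MHMHA}$ into its $H$ single-head summands (plus a constant-over-rows output bias that $\textrm{Res}$ annihilates), invoke Lemma \ref{lem:single-head} per head, and absorb the sum over heads into the factor $H$ with max-over-heads constants $C_1,C_2$. You are in fact more careful than the paper on the one delicate point --- the paper simply sums the per-head composite-norm bounds, whereas you correctly observe that $\|\cdot\|_{1,\infty}=\sqrt{\|\cdot\|_1\|\cdot\|_\infty}$ is not subadditive and therefore apply the triangle inequality in the $1$- and $\infty$-norms separately before taking the geometric mean, which yields the same final factor of $H$.
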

\end{tcolorbox}
The coefficients used in the above inequality for the multi-head MHA are given by 
$C_1=
\max_h
\|
\boldsymbol{W}^{(\ell)}_{VO,h}
\|_{1,\infty}
\|
\boldsymbol{W}_{QK,h}^{(\ell)}
\|_1$ and
$C_2=
\max_h
\|
\boldsymbol{W}^{(\ell)}_{VO,h}
\|_{1,\infty}
\|
{\boldsymbol{A}}^{(\ell-1)}_h
\|_1$.

Let us generalize (\ref{eq:shmha}) to the case of multi-head MHA.
Multi-head version of MHA layer is
\begin{align}
{\textrm{MHMHA}}(\boldsymbol{X}^{(\ell)})
=
\sum_{h=1}^H
{\boldsymbol{P}}^{(\ell)}_h
\boldsymbol{X}^{(\ell)}
{\boldsymbol{W}}_{VO,h}
+
{\boldsymbol{1}}
{\boldsymbol{b}}_O^\top,
\end{align}
where
${\boldsymbol{W}}_{VO,h}={\boldsymbol{W}}_{V,h}{\boldsymbol{W}}_{O,h}^\top$.
Since this is essentially the sum of single-head contributions,
(\ref{eq:shmha}) gives the following inequality
\begin{align}
\|
{\textrm{Res}}\big(
{\textrm{MHMHA}}(\boldsymbol{X}^{(\ell)})
\big)
\|_{1,\infty}
\leq
&\frac{4(1-\alpha')}{\sqrt{d_k}}
\|
{\textrm{Res}}\big(
\boldsymbol{X}^{(\ell)}
\big)
\|_{1,\infty}^3
\sum_h
\|
\boldsymbol{W}^{(\ell)}_V
\|_{1,\infty}
\|
\boldsymbol{W}_{QK}^{(\ell)}
\|_1\nonumber\\
&+
\frac{4\alpha'}{\sqrt{d_k}}
\|
{\textrm{Res}}\big(
\boldsymbol{X}^{(\ell)}
\big)
\|_{1,\infty}
\sum_h
\|
\boldsymbol{W}^{(\ell)}_V
\|_{1,\infty}
\|
{\boldsymbol{A}}^{(\ell-1)}
\|_1
.
\end{align}
This result immediately gives (\ref{lem-eq:mhmha}).

\subsection{Multi-Layer Multi-Head MHA}

Finally, consider an attention-only network consisting of only multi-head MHA layers
\begin{align}
{\textrm{AttnNet}}(\boldsymbol{X})
={\textrm{MHMHA}}
\circ
{\textrm{MHMHA}}
\circ
\cdots
\circ
{\textrm{MHMHA}}
(\boldsymbol{X}).
\end{align}
The residuals of the network obey the following theorem:
\begin{tcolorbox}
\begin{theorem}
\label{appendix:main-thm}
When $\alpha=0$, for a multi-head MHA-only network, the residual is bounded as follows:
\begin{align}
\|
{\textrm{Res}}&\big(
{\textrm{AttnNet}}(\boldsymbol{X})
\big)
\|_{1,\infty}\nonumber\\
&\leq
\max{}_{m=0}^{L}
\left(
\frac{4H(1-\alpha')C_1}{\sqrt{d_k}}
\right)^{\frac{3^m-1}{2}}
\left(
\frac{4H\alpha' C_2}{\sqrt{d_k}}
\right)^{3^m(L-m)}
\|
{\textrm{Res}}\big(
\boldsymbol{X}
\big)
\|_{1,\infty}^{3^m}
.
\end{align}
\end{theorem}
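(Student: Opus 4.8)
The plan is to obtain the multi-layer bound purely by iterating the single-layer estimate of Lemma~\ref{lem:multi-head}, so that no new analytic input is needed beyond tracking how the cubic and linear contributions compound across depth. Writing $\rho_\ell=\|\textrm{Res}(\boldsymbol{X}^{(\ell)})\|_{1,\infty}$ and abbreviating $A=\frac{4H(1-\alpha')C_1}{\sqrt{d_k}}$ and $B=\frac{4H\alpha' C_2}{\sqrt{d_k}}$, Lemma~\ref{lem:multi-head} reads $\rho_{\ell+1}\le A\rho_\ell^3+B\rho_\ell$. First I would pass to the ``max'' form $\rho_{\ell+1}\le\max(A\rho_\ell^3,\,B\rho_\ell)$ using the elementary bound $u+v\le 2\max(u,v)$ and absorbing the factor $2$ into the constant (this is exactly the passage from $4H$ to the $r=8H/\sqrt{d_k}$ appearing in Theorem~\ref{thm:no-double-exp}). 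Two facts make this recursion usable: both $x\mapsto Ax^3$ and $x\mapsto Bx$ are nondecreasing on $[0,\infty)$, so any valid upper bound on $\rho_\ell$ propagates through one layer while preserving the inequality; and, under the standing hypotheses $A,B,\rho_0<1$ inherited from \cite{dong2021attention}, every iterate remains in $[0,1)$, which is what ultimately lets one term control the maximum.

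Next I would unroll the recursion through all $L$ layers. Expanding the nested maxima yields one monomial in $A$, $B$, $\rho_0$ for each of the $2^L$ binary choices of branch (``cubic'' versus ``linear'') at the successive layers, and by the monotonicity just noted the true iterate $\rho_L$ is bounded by the maximum over these $2^L$ monomials. The crucial simplification is that such a monomial depends on the branch pattern almost entirely through the number $m$ of cubic steps taken. Indeed, each cubic step triples the current exponent of $\rho_0$, so a pattern with $m$ cubic steps always carries $\rho_0^{3^m}$; likewise the accumulated power of $A$ is the order-independent geometric sum $\sum_{j=0}^{m-1}3^j=(3^m-1)/2$, since the $A$ introduced by a cubic step is tripled by every later cubic step. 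Only the power of $B$ depends on the ordering: a linear step that is followed by $k$ cubic steps contributes $B^{3^{k}}$, because its factor of $B$ is tripled once per subsequent cubic step.

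Collecting terms by $m$ then reduces the $2^L$-fold maximum to a maximum over the $L+1$ representative monomials $A^{(3^m-1)/2}B^{3^m(L-m)}\rho_0^{3^m}$, $m=0,\dots,L$, which is the bound asserted in Theorem~\ref{appendix:main-thm}. Here the $B$-exponent $3^m(L-m)$ arises from the ordering in which all $L-m$ linear steps precede all $m$ cubic steps, so that each linear factor of $B$ is amplified by the full $3^m$. The two endpoints are the informative ones and agree with the main text: $m=L$ recovers the purely cubic monomial $A^{(3^L-1)/2}\rho_0^{3^L}$, i.e.\ the double-exponential decay of \cite{dong2021attention}, whereas $m=0$ gives the purely linear $B^L\rho_0$, the term that dominates once $\alpha'\neq0$ and relaxes the decay to geometric order.

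I expect the combinatorial accounting in the collection step to be the main obstacle. Concretely, one must verify precisely how the power of $B$ attached to each linear step is amplified by the cubic steps that follow it, and then decide, for each fixed $m$, which interleaving of the $m$ cubic and $L-m$ linear steps is the governing one in the maximum; this is where the sub-unit hypotheses $A,B,\rho_0<1$ are needed to fix the direction of the relevant comparisons between monomials. Once the governing ordering is identified the residual estimates are routine, reusing only submultiplicativity, H\"{o}lder's inequality, and the $\|\cdot\|_{1,\infty}$ facts recalled in the Preliminaries. A small separate point is to confirm that the per-layer quantity $\|\boldsymbol{A}^{(\ell-1)}\|_1$ hidden inside $C_2$ in Lemma~\ref{lem:multi-head} may legitimately be replaced by the depth-uniform constant $C_2=\max_\ell\max_h\|\boldsymbol{W}^{(\ell)}_{VO,h}\|_{1,\infty}\|\boldsymbol{H}_{\ell,h}\|_1$ used in the statement, so that $A$ and $B$ are genuine constants throughout the iteration.
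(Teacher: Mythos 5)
Your proposal follows the paper's own proof essentially verbatim: start from Lemma~\ref{lem:multi-head}, pass to the max form via $u+v\le 2\max(u,v)$, unroll the resulting binary recursion over $L$ layers into $2^L$ leaf monomials, and collect them by the number $m$ of cubic branches. Your accounting of the exponents of $A$ and $\rho_0$, and the observation that a linear step followed by $k$ cubic steps contributes $B^{3^k}$, are correct and match the paper's (very terse) argument.

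However, the step you yourself flag as the main obstacle --- deciding, for fixed $m$, which interleaving of cubic and linear steps governs the maximum --- is resolved in the wrong direction, and this is a genuine gap (one the paper's own proof shares). Under the standing hypothesis $B=4H\alpha' C_2/\sqrt{d_k}<1$, a \emph{larger} exponent of $B$ makes a monomial \emph{smaller}; hence the linear-steps-first ordering, which carries $B^{3^m(L-m)}$, is the minimal monomial in its class, while the maximal one is the cubic-steps-first ordering carrying only $B^{L-m}$. Consequently $\max_m A^{(3^m-1)/2}B^{3^m(L-m)}\rho_0^{3^m}$ need not dominate all $2^L$ leaves. Concretely, for $L=2$, $A=\rho_0=1/2$, $B=1/20$, the leaf obtained by taking the cubic branch first and the linear branch second is $AB\rho_0^3=3.125\times10^{-3}$, which exceeds each of the three terms $A^4\rho_0^9$, $AB^3\rho_0^3$, $B^2\rho_0$ appearing in the asserted maximum. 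What the unrolling actually establishes is the weaker bound with $B^{L-m}$ in place of $B^{3^m(L-m)}$; this still contains the $m=0$ term $B^L\rho_0$ and therefore still supports the qualitative conclusion that double-exponential decay is avoided, but it is not the inequality stated in Theorem~\ref{appendix:main-thm}. To close the gap you would need either a different argument that does not proceed through the maximum of leaf monomials, or a restatement of the theorem with the corrected exponent. A secondary bookkeeping point: once the factor $2$ from $u+v\le2\max(u,v)$ is introduced, the unrolled constants should be $8H$ rather than $4H$, so the prefactors in the final statement also need to be made consistent with the max-form lemma you iterate.
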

\end{tcolorbox}
To prove this theorem, we use the following inequality for single-layer MHA, which follows immediately from
(\ref{lem-eq:mhmha}):
\begin{align}
\|
{\textrm{Res}}\big(
{\textrm{MHMHA}}(\boldsymbol{X}^{(\ell)})
\big)
\|_{1,\infty}
\leq
\max\left(
\frac{8H(1-\alpha')C_1}{\sqrt{d_k}}
\|
{\textrm{Res}}\big(
\boldsymbol{X}^{(\ell)}
\big)
\|_{1,\infty}^3
,
\frac{8H\alpha' C_2}{\sqrt{d_k}}
\|
{\textrm{Res}}\big(
\boldsymbol{X}^{(\ell)}
\big)
\|_{1,\infty}
\right)
.
\end{align}
This inequality for MHA has exactly the same structure as the inequality derived in \cite{dong2021attention} when skip connections are introduced to the self-attention layer, so the method of proving the inequality in our AttnNet is the same as in \cite{dong2021attention}.
The point here is that even though our MHA completely eliminates skip connections by setting $\alpha=0$, we can obtain the same improvement in the upper bound of the inequality as when skip connections are added to the normal attention layer.

The proof is as follows: For each layer, expand ${\textrm{Res}}\big(
{\textrm{AttnNet}}(\boldsymbol{X})
\big)$ using (1).
Since the choice of upper bound at each layer is either $
\frac{8H(1-\alpha')C_1}{\sqrt{d_k}}
\|
{\textrm{Res}}\big(
\boldsymbol{X}^{(\ell)}
\big)
\|_{1,\infty}^3$
or
$\frac{8H\alpha' C_2}{\sqrt{d_k}}
\|
{\textrm{Res}}\big(
\boldsymbol{X}^{(\ell)}
\big)
\|_{1,\infty}$,
the calculation of the upper bound of ${\textrm{Res}}\big(
{\textrm{AttnNet}}(\boldsymbol{X})
\big)$ is a repetition of these two choices. Therefore, in the tree obtained by repeated binary expansion over $L$ layers, we need to consider the maximum value of the leaf term, which is $\max
\left(
\frac{4H(1-\alpha')C_1}{\sqrt{d_k}}
\right)^{\frac{3^m-1}{2}}
\left(
\frac{4H\alpha' C_2}{\sqrt{d_k}}
\right)^{3^m(L-m)}
\|
{\textrm{Res}}\big(
\boldsymbol{X}^{(0)}
\big)
\|_{1,\infty}^{3^m}$.

\section{Empirical Check of Theorem}

We experimentally demonstrate that the upper bound inequality derived in this paper accurately explains the decaying behavior observed in actual networks.
The effect of the doubly exponential decay in norms becomes particularly prominent in deep models.
To provide a clear and direct setting for comparison with the theoretical analysis, we therefore conduct evaluations in multi-layer configurations.
Following \cite{ramsauerhopfield}, we evaluate the following metric at initialization:
\begin{align}
\frac{\|\textrm{Res}(\textrm{AttnNet}(\boldsymbol{X}))\|_{1,\infty}}{\|\textrm{AttnNet}(\boldsymbol{X})\|_{1,\infty}}.
\end{align}

Table \ref{tab:attn-only} shows the layer-wise variation of this normalized norm (metric).
\begin{table}[ht]
    \centering
    \begin{tabular}{c|cc}
        \hline
        depth of network           & self-attention       & MHA($\alpha=0.5$)      \\
        \hline\hline
        1          & $0.48887348$      & $0.8797747$  \\
        \hline
        2          & $0.07732825$      & $0.8309065$  \\
        \hline
        3          & $0.00081001734$      & $0.80018705$  \\
        \hline
        4          & $1.7725031*10^{-6} $      & $0.7630522$  \\
        \hline
        5          & $1.809994*10^{-6}$      & $0.71199465 $  \\
        \hline
        6          & $1.7835139*10^{-6}$      & $ 0.67957425$  \\
        \hline
        7          & $1.7920005*10^{-6}$      & $0.6251911 $  \\
        \hline
        8          & $1.7568021*10^{-6}$      & $ 0.5860811$  \\
        \hline
        9          & $1.75397*10^{-6}$      & $ 0.52244353$  \\
        \hline
        10          & $1.7860738*10^{-6}$      & $ 0.46130562$  \\
        \hline
        11         & $1.82849*10^{-6}$      & $0.432442 $  \\
        \hline
        12         & $1.7818496*10^{-6}$      & $0.39708787 $  \\
    \end{tabular}
    \caption{Normalized norm across layers in Attention-Only (ViT-T) and MHA networks evaluated on CIFAR-10 at initialization.}
    \label{tab:attn-only}
\end{table}
We evaluated this norm on the CIFAR-10 dataset for Attention-Only Networks with 1 to 12 layers (ViT-T configuration) and their MHA counterparts.
The left column shows the results with standard self-attention. As observed in previous studies, we confirm a sharp norm decay across layers, corresponding to rank collapse. In contrast, the right column shows that introducing MHA significantly mitigates this norm decay and suppresses the rank collapse through layers.

While the inequality we derive in this paper provides a theoretical upper bound on the norm decay, it closely aligns with the empirical behavior, and captures the rank-collapse suppression effect of MHA.

\section{Choices of Discretization}

\begin{figure}[ht]
\begin{center}
\centerline{\includegraphics[width=0.6\columnwidth]{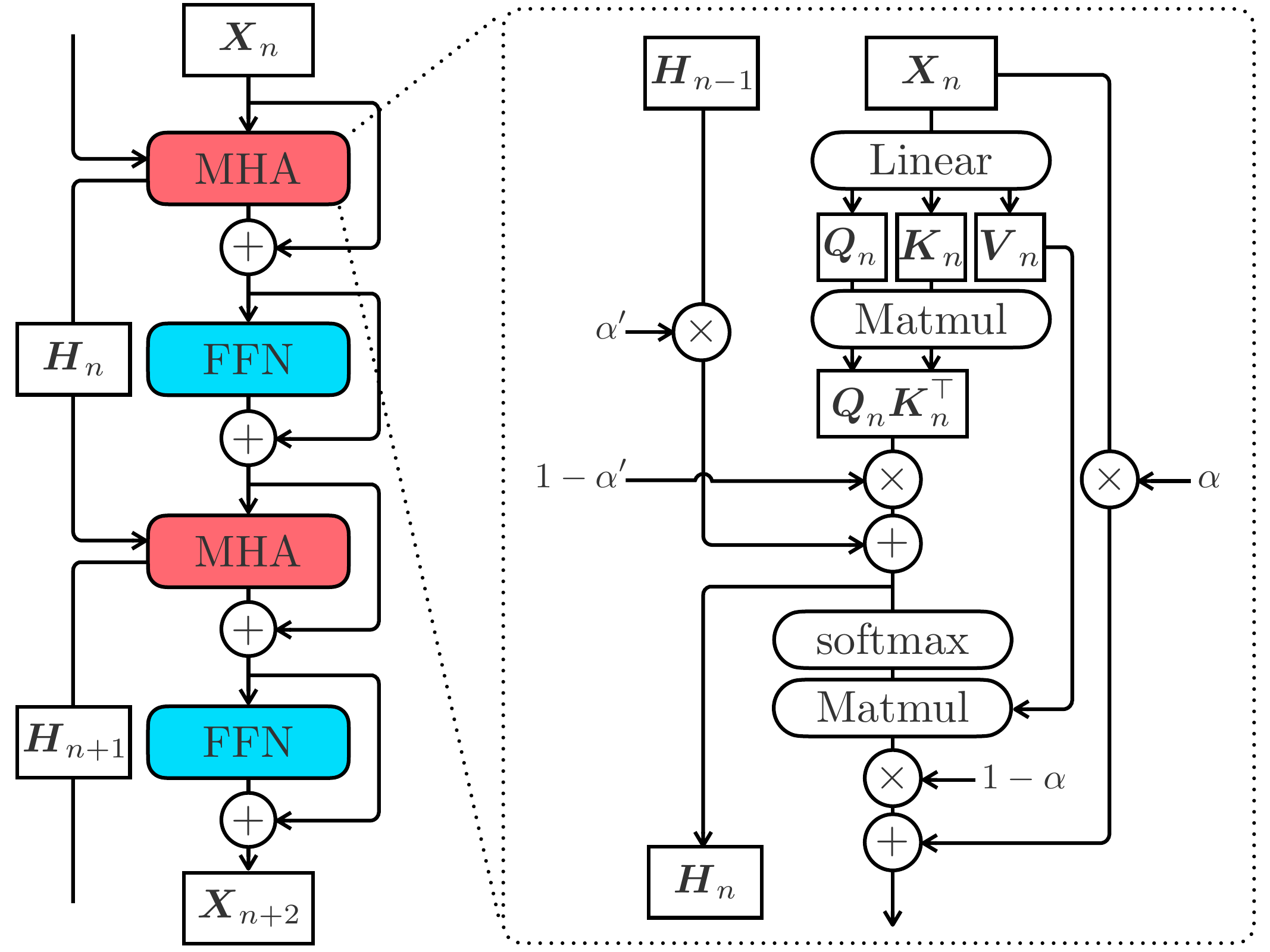}}
\caption{The architecture of the MHA examined in detail in this paper. This model corresponds to the case where the forward derivative is used for the visible state and the backward derivative for the hidden state. Simply setting $\alpha=\alpha'=0$ reproduces normal self-attention.}
\label{fig-orig-archi}
\end{center}
\end{figure}

In the paper, we employ the MHA architecture illustrated in Figure \ref{fig-orig-archi}.
However, different types of architectures can be introduced depending on the choice of discretization. 
In this section, we explain what results are obtained when the time evolution differential equations for visible and hidden states are discretized using forward or backward differentiation. 
We then comment on the reasons why we specifically address the Figure \ref{fig-orig-archi} case in this paper.

First, recall the state evolution equation of MCHN for Model B:
\begin{align}
\label{eq:mchnMB1}
&\tau_v\frac{d\boldsymbol{x}}{dt}
=\textrm{softmax}\left(\boldsymbol{h}\right)\boldsymbol{V}-\boldsymbol{x},\\
&
\label{eq:mchnMB2}
\tau_h\frac{d\boldsymbol{h}}{dt}
=\boldsymbol{q}\boldsymbol{K}^\top-\boldsymbol{h},
\end{align}
Here, we have rewritten the expression with key $\boldsymbol{K}$, query $\boldsymbol{q}$, and value $\boldsymbol{V}$ using the correspondence with Transformer \cite{ramsauerhopfield}.
We will use forward or backward differentiation for the discretization of the left hand side of these equations.

\subsection{The Case of (Forward, Forward)}

First, let us consider the case where both visible and hidden states are discretized by forward differentiation. In this case, the differential equations (\ref{eq:mchnMB1},\ref{eq:mchnMB2}) becomes the following difference equations
\begin{align}
\label{eq:ff1}
&\frac{1}{\rho}
\left(
\boldsymbol{x}_{n+1}-\boldsymbol{x}_{n}
\right)
=\textrm{softmax}\left(\boldsymbol{h}_n\right)\boldsymbol{V}_n-\boldsymbol{x}_n,\\
&
\label{eq:ff2}
\frac{1}{\rho'}
\left(
\boldsymbol{h}_{n+1}-\boldsymbol{h}_{n}
\right)
=\boldsymbol{q}_n\boldsymbol{K}_n^\top-\boldsymbol{h}_n,
\end{align}
where we use the following notation
\begin{align}
\frac{\tau_v}{\Delta t}=\frac{1}{\rho},\quad
\frac{\tau_h}{\Delta t}=\frac{1}{\rho'}.
\end{align}

These equations are organized in a form that is easy to compare with self-attention as follows:
\begin{align}
\label{eq:ff1-2}
&
\boldsymbol{x}_{n+1}=
(1-\rho)\boldsymbol{x}_{n}
+
\rho\,
\textrm{softmax}\left(\boldsymbol{h}_n\right)\boldsymbol{V}_n,\\
&
\label{eq:ff2-2}
\boldsymbol{h}_{n}
=(1-\rho')\boldsymbol{h}_{n-1}
+\rho'\boldsymbol{q}_{n-1}\boldsymbol{K}_{n-1}^\top.
\end{align}
This unfamiliar form of this attention-like model calculates the attention weights
$\textrm{softmax}(\boldsymbol{h}_n)$
using information from the attention scores $\boldsymbol{q}_{n-1}\boldsymbol{K}_{n-1}^\top$
up to one layer prior through the hidden state, as shown in Figure \ref{fig-ff-archi}. Thus, the attention weights applied to values $\boldsymbol{V}_n$
do not include information about queries and keys in the same layer. Therefore, even if we consider the case $\rho'=1$ in (\ref{eq:ff2-2}), we do not return to the usual attention, but only to an attention score of $0$. 

Mathematically, the usual self-attention is obtained using the equation
$0=\boldsymbol{h}_{n-1}+\boldsymbol{q}_{n-1}\boldsymbol{K}_{n-1}^\top$, which is obtained by taking the limit  $\rho'\to\infty$ in (\ref{eq:ff2-2}).
This is precisely the adiabatic limit of \cite{dong2021attention}.
However, here we consider these state update equations as forward propagation equations of neural network.
In actual network implementations, it is difficult to naturally perform operations equivalent to this adiabatic limit for the model. Therefore, since this model does not naturally give Transformer, we do not examine it in detail in this paper.

\begin{figure}[ht]
\begin{center}
\centerline{\includegraphics[width=0.6\columnwidth]{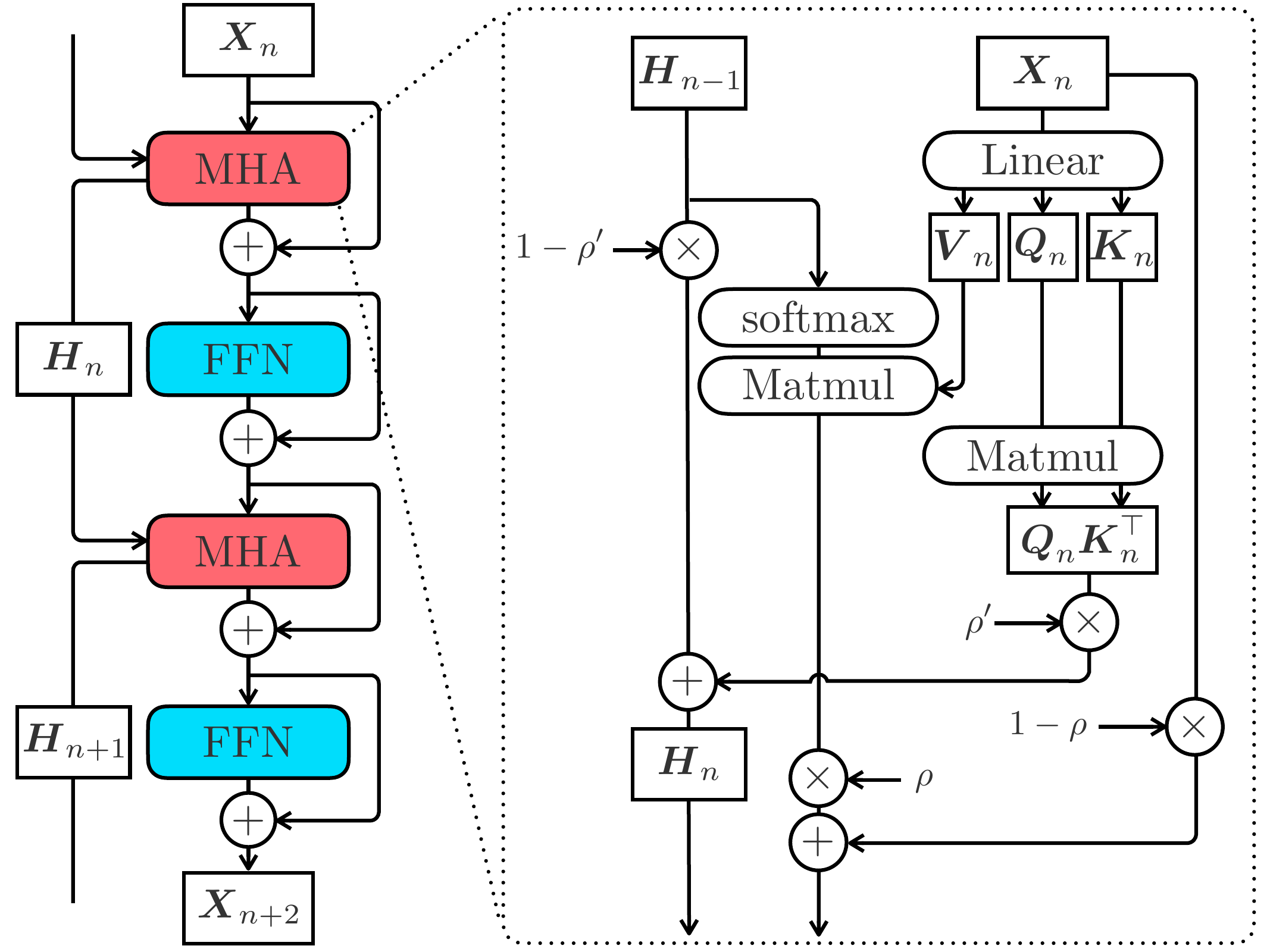}}
\caption{The architecture corresponds to the case where the forward derivative is used for the visible state and the hidden state.}
\label{fig-ff-archi}
\end{center}
\end{figure}

\subsection{The Case of (Forward, Backward)}

Next, let us consider the case in which the forward derivative is used for the visible state while the backward derivative is used for the hidden state. This is the case examined in detail in this paper. In this case, the evolution equation for the hidden state is as follows
\begin{align}
\label{eq:fb2}
\frac{1}{\rho'}
\left(
\boldsymbol{h}_{n+1}-\boldsymbol{h}_{n}
\right)
=\boldsymbol{q}_{n+1}\boldsymbol{K}_{n+1}^\top-\boldsymbol{h}_{n+1}.
\end{align}
These equations can be organized in a form that is easy to compare with self-attention as follows:
\begin{align}
\label{eq:fb1-2}
&
\boldsymbol{x}_{n+1}=
(1-\rho)\boldsymbol{x}_{n}
+
\rho\,
\textrm{softmax}\left(\boldsymbol{h}_n\right)\boldsymbol{V}_n,\\
&
\label{eq:fb2-2}
\boldsymbol{h}_{n}
=\frac{1}{1+\rho'}\boldsymbol{h}_{n-1}
+\frac{\rho'}{1+\rho'}
\boldsymbol{q}_{n}\boldsymbol{K}_{n}^\top.
\end{align}
By setting $\alpha=1-\rho$ and $\alpha'=\frac{1}{1+\rho'}$, we see that these equations give Figure \ref{fig-orig-archi}.
The important point in this case is that this architecture includes usual self-attention in a natural way.
It is easy to see that $\alpha=\alpha'=0$ reproduces the original self-attention, so this can be seen as a natural extension of the Transformer.
Therefore, this paper, whose purpose is to examine the implications of MCHN for Transformer, has studied this case in detail.

\subsection{The Case of (Backward, Forward)}

Next, let us consider the case in which the backward derivative is used for the visible state while the forward derivative is used for the hidden state.
In this case, the evolution equation for the visible state is as follows
\begin{align}
\label{eq:bf1}
\frac{1}{\rho}
\left(
\boldsymbol{x}_{n+1}-\boldsymbol{x}_{n}
\right)
=\textrm{softmax}\left(\boldsymbol{h}_{n+1}\right)\boldsymbol{V}_{n+1}-\boldsymbol{x}_{n+1}.
\end{align}
We can organize these equations in the following form, which is easy to compare with self-attention:
\begin{align}
\label{eq:bf1-2}
&
\boldsymbol{x}_{n+1}=
\frac{1}{1+\rho}
\boldsymbol{x}_{n}
+
\frac{\rho}{1+\rho}\,
\textrm{softmax}\left(\boldsymbol{h}_{n+1}\right)\boldsymbol{V}_{n+1},\\
&
\label{eq:bf2-2}
\boldsymbol{h}_{n+1}
=(1-\rho')\boldsymbol{h}_{n}
+\rho'\boldsymbol{q}_{n}\boldsymbol{K}_{n}^\top.
\end{align}
However, a problem arises when we try to consider the state update equation for the visible state as a forward propagation equation.
This is because $\boldsymbol{V}_{n+1}$ is needed to calculate $\boldsymbol{x}_{n+1}$, and $\boldsymbol{V}_{n+1}$ is given by the linear projection of $\boldsymbol{x}_{n+1}$. 
Therefore, this equation cannot be naturally interpreted as a feedforward neural network as it is. 
We mention the possibility of interpreting it as a recurrent network in the next example, but we will not consider this case in depth in this paper.

\subsection{The Case of (Backward, Backward)}

\begin{figure}[ht]
\begin{center}
\centerline{\includegraphics[width=0.6\columnwidth]{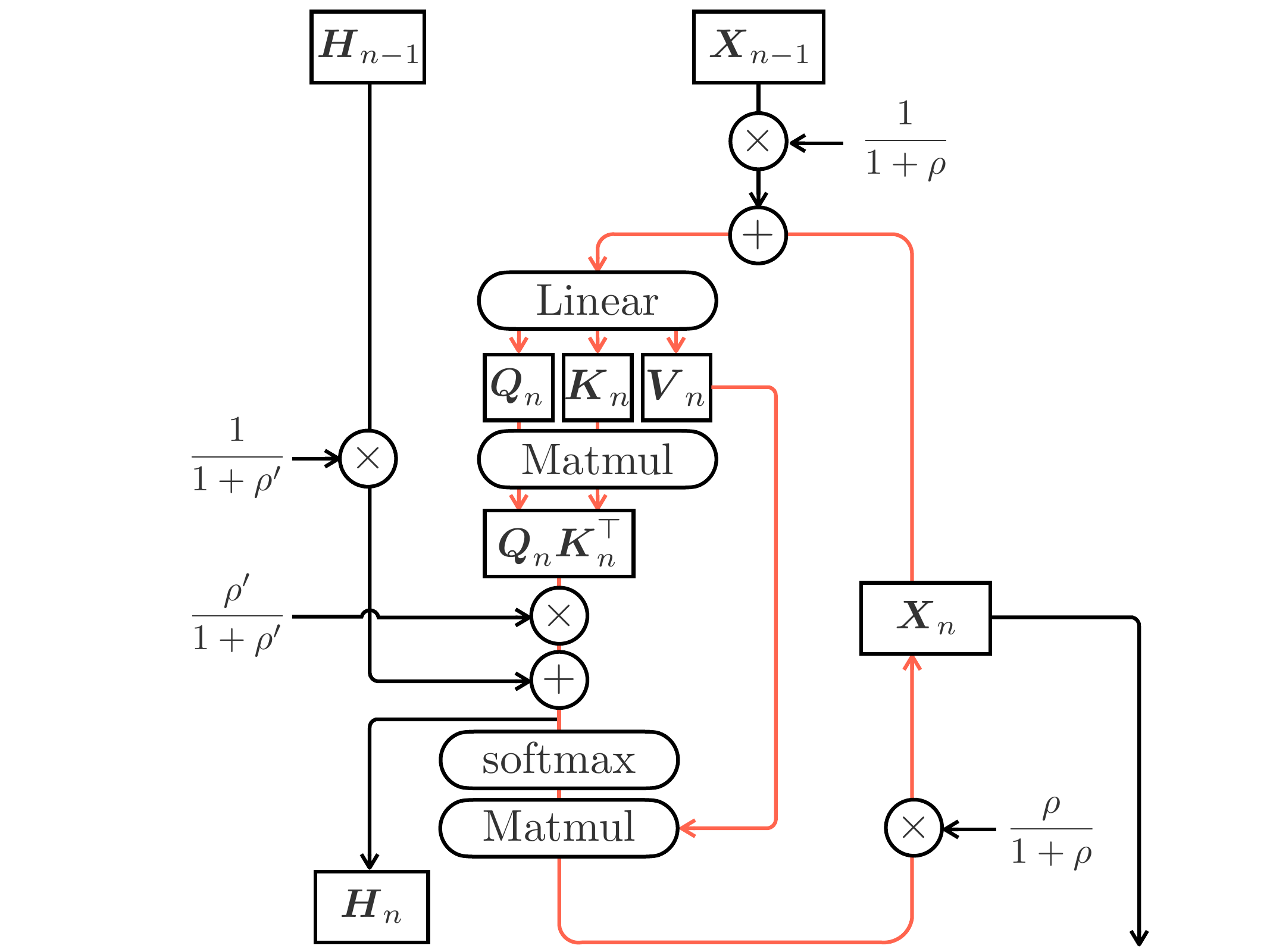}}
\caption{Possible interpretations as a recurrent neural net when both visible and hidden states use backward differentiation.}
\label{fig-bb-archi}
\end{center}
\end{figure}

Finally, consider the case where both visible and hidden states are discretized by backward differentiation. In this case, the time evolution equations for these two states are as follows
\begin{align}
\label{eq:bb1-2}
&
\boldsymbol{x}_{n+1}=
\frac{1}{1+\rho}
\boldsymbol{x}_{n}
+
\frac{\rho}{1+\rho}\,
\textrm{softmax}\left(\boldsymbol{h}_{n+1}\right)\boldsymbol{V}_{n+1},,\\
&
\label{eq:bb2-2}
\boldsymbol{h}_{n+1}
=\frac{1}{1+\rho'}\boldsymbol{h}_{n}
+\frac{\rho'}{1+\rho'}
\boldsymbol{q}_{n+1}\boldsymbol{K}_{n+1}^\top.
\end{align}
This equation (\ref{eq:bb1-2}) cannot be naturally interpreted as a feedforward neural network as in the previous case. 
If we try to understand it as a neural network, it may be possible to interpret it as a recurrent neural network, as shown in Figure \ref{fig-bb-archi}. In any case, such complicated cases are beyond the scope of this paper.
Therefore, we will not examine it in depth in this paper.

\subsection{The Cases of Central Differentiation}

In addition to the previous examples, there is also the option of using central derivatives.
Using the central derivative for the visible state yields the following update rule
\begin{align}
\boldsymbol{x}_{n+1}=
2{\rho}\,
\textrm{softmax}\left(\boldsymbol{h}_{n}\right)\boldsymbol{V}_{n}
+
2{\rho}\,\boldsymbol{x}_{n}
+
\boldsymbol{x}_{n-1}.
\end{align}
In addition to $2{\rho}\,\boldsymbol{x}_{n}$, there is also a shortcut path that originates from $\boldsymbol{x}_{n-1}$ in this forward propagation formula.
This implies DenseNet-like generalization of shortcut path of Transformer and MHA.

On the other hand, using the central derivative for the hidden state yields the following equation
\begin{align}
\boldsymbol{h}_{n+1}
=
2\rho'\boldsymbol{q}_{n}\boldsymbol{K}_{n}
+
2\rho'\boldsymbol{h}_{n}
+\boldsymbol{h}_{n-1}.
\end{align}
This forward propagation formula complicates the computation of the attention weights because even information from two previous layers contributes to the calculation of the hidden state.
Such overly complex architectures are outside the scope of this paper's investigation.

\section{Experimental Setups}

In this section, we describe the detailed training settings used in the experiments in this paper.

We used two sizes of GPT-2 models for text generation task in this paper, and their detailed information and training settings are summarized in Table \ref{tab:gpt_config}.

 \begin{table}[]
     \centering
     \begin{tabular}{c|c}
         \hline
         parameter                 & Small/Medium \\
         \hline\hline
         num heads                 &  12/16         \\
         depth                     &   12/24        \\
         optimizer                 & AdamW                 \\
         base lerning rate         & 6e-4               \\
         batch size                & 12/4        \\
         num tokens & 1024 \\
         embedding dimention       & 768/1024      \\
         learning rate schedule    & cosine decay          \\
         lower learning rate bound & 1e-6                  \\
         warmup epochs             & 20                    \\
         warmup schedule           & linear                \\
         warmup learning rate      & 1e-6                  \\
     \end{tabular}
     \vskip 0.1in
     \caption{Detailed information on the GPT2 models and the settings used to train them.}
     \label{tab:gpt_config}
 \end{table}

We used four sizes of ViT models for image recognition task in this paper, and their detailed information and training settings are summarized in Table \ref{tab:cv_config}.

 \begin{table}[hbt]
     \centering
     \begin{tabular}{c|c}
         \hline
         parameter                 & Tiny/Small/Base/Large \\
         \hline\hline
         num heads                 & 3/6/12/16             \\
         depth                     & 12/12/12/24           \\
         droppath rate             & 0.1/0.2/0.2/0.3       \\
         optimizer                 & AdamW                 \\
         optimizer $\epsilon$      & 1e-8                  \\
         training epochs           & 300    \\
         base lerning rate         & 6.25e-5               \\
         batch size                & 256/256/128/32        \\
         patch size                & 16                    \\
         embedding dimention       & 192/384/768/1024      \\
         resized image size        & (3,224,224)           \\
         learning rate schedule    & cosine decay          \\
         lower learning rate bound & 1e-6                  \\
         warmup epochs             & 20                    \\
         warmup schedule           & linear                \\
         warmup learning rate      & 1e-6                  \\
         cooldown epochs           & 10                    \\
         random erasing            & 0.25                  \\
         mixup $\alpha$            & 0.8                   \\
         cutmix $\alpha$           & 1.0                   \\
         label smoothing           & 0.1                   \\
         RandAugment               & (9,0.5)               \\
     \end{tabular}
     \vskip 0.1in
     
     \caption{Detailed information on the ViT models and the settings used to train them.}
     \label{tab:cv_config}
 \end{table}
 
 
\section{Rank Collapse}

In this section, we provide detailed information about the token similarity at each layer of GPT and ViT, and also present experimental results that could not be fully presented in the main text.

The rank collapse in \cite{dong2021attention} refers to the phenomenon in which the tokens corresponding to each row become perfectly proportional vectors in the attention network features, and the features degenerate into a matrix of rank $1$. This means perfect token uniformity, where the cosine similarity of all token pairs is $1$. On the other hand, the phenomenon observed in the actual Transformer is that although not all tokens are perfectly aligned, many tokens are perfectly aligned, resulting in the formation of a token population with a mutual cosine similarity of $1$. In terms of rank, this is partial rank collapse, where the features degenerate into a matrix with a lower but non-$1$ rank. Here, we observe this phenomenon through detailed experimental results.

Figure \ref{fig:cossim_wiki_medium} are plots that compare the token cosine similarity of GPT-2 (Medium) in each layer. In original GPT-2, there exists a population of tokens with a cosine similarity of $1$. On the other hand, in the model using MHA, such token uniformity disappears. This result shows that MHA has a strong function of suppressing rank collapse.

\begin{figure}[ht]
\begin{center}
\centerline{\includegraphics[width=0.72\columnwidth]{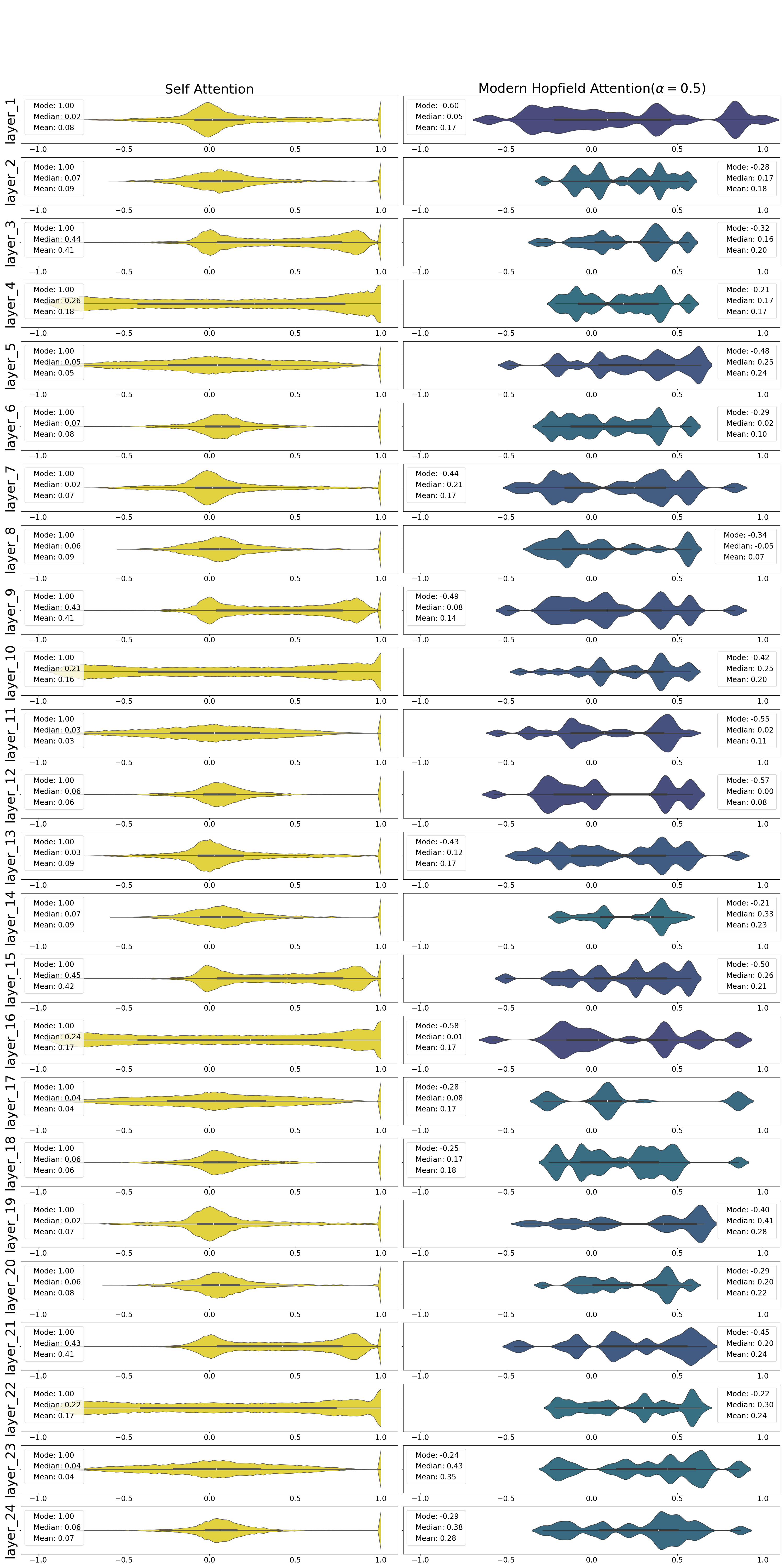}}
\caption{The violin plots of the cosine similarity of GPT-2 (Medium) with usual self-attention
and  MHA for $\alpha=0.5$.
The model is trained with Wikitext103.
We can see that the group of perfectly aligned tokens that exists at a peak around a similarity of $1$ in self-attention disappears in the MHA cases.}
\label{fig:cossim_wiki_medium}
\end{center}
\end{figure}

Figures \ref{fig:cossim_cifar10_tiny}-\ref{fig:cossim_cifar100_large} are detailed plots comparing the token cosine similarity of ViT in each layer. In each case, in the normal ViT, there exists a population of tokens with a cosine similarity of 1 even after training. On the other hand, in the model using MHA, such token uniformity disappears. This result shows that MHA has a strong function of suppressing rank collapse.


\begin{figure}[ht]
\begin{center}
\centerline{\includegraphics[width=0.95\columnwidth]{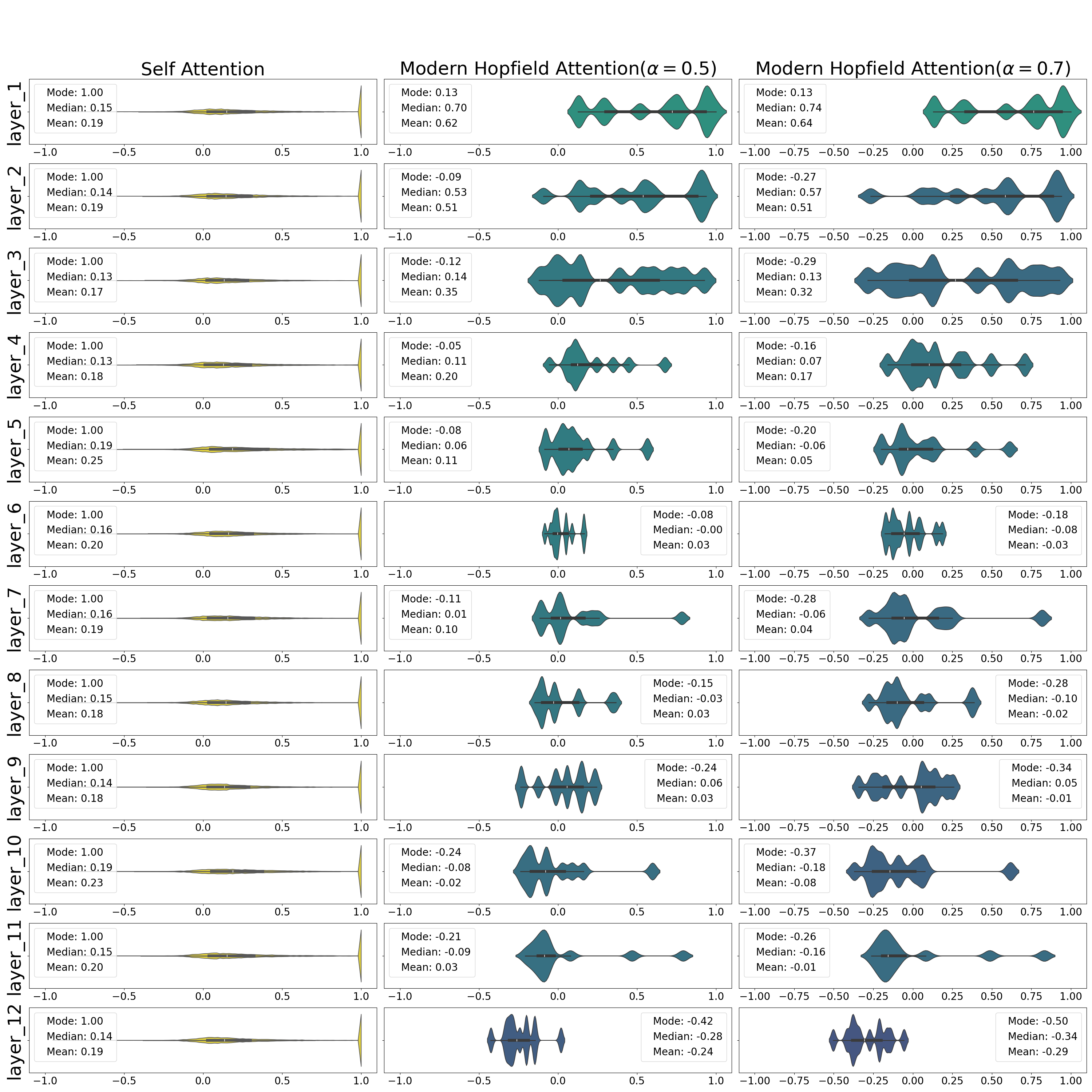}}
\caption{The violin plots of the cosine similarity of ViT-T with usual self-attention, MHA for $\alpha=0.5$
and  MHA for $\alpha=0.7$.
The model is trained with CIFAR10.
We can see that the group of perfectly aligned tokens that exists at a peak around a similarity of $1$ in self-attention disappears in the MHA cases.}
\label{fig:cossim_cifar10_tiny}
\end{center}
\end{figure}

\begin{figure}[ht]
\begin{center}
\centerline{\includegraphics[width=0.95\columnwidth]{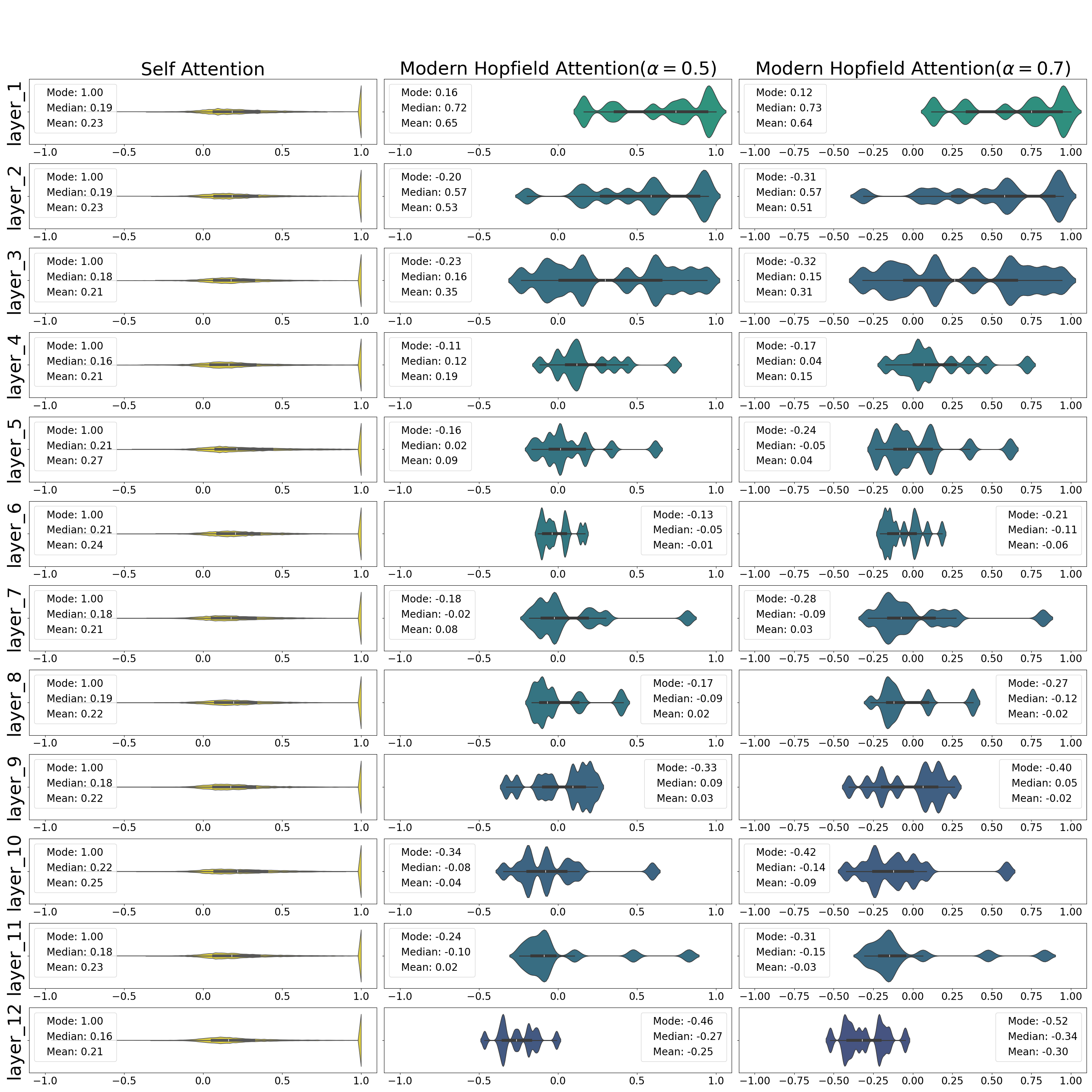}}
\caption{The violin plots of the cosine similarity of ViT-S with usual self-attention, MHA for $\alpha=0.5$
and  MHA for $\alpha=0.7$.
The model is trained with CIFAR10.}
\label{fig:cossim_cifar10_small}
\end{center}
\end{figure}

\begin{figure}[ht]
\begin{center}
\centerline{\includegraphics[width=0.95\columnwidth]{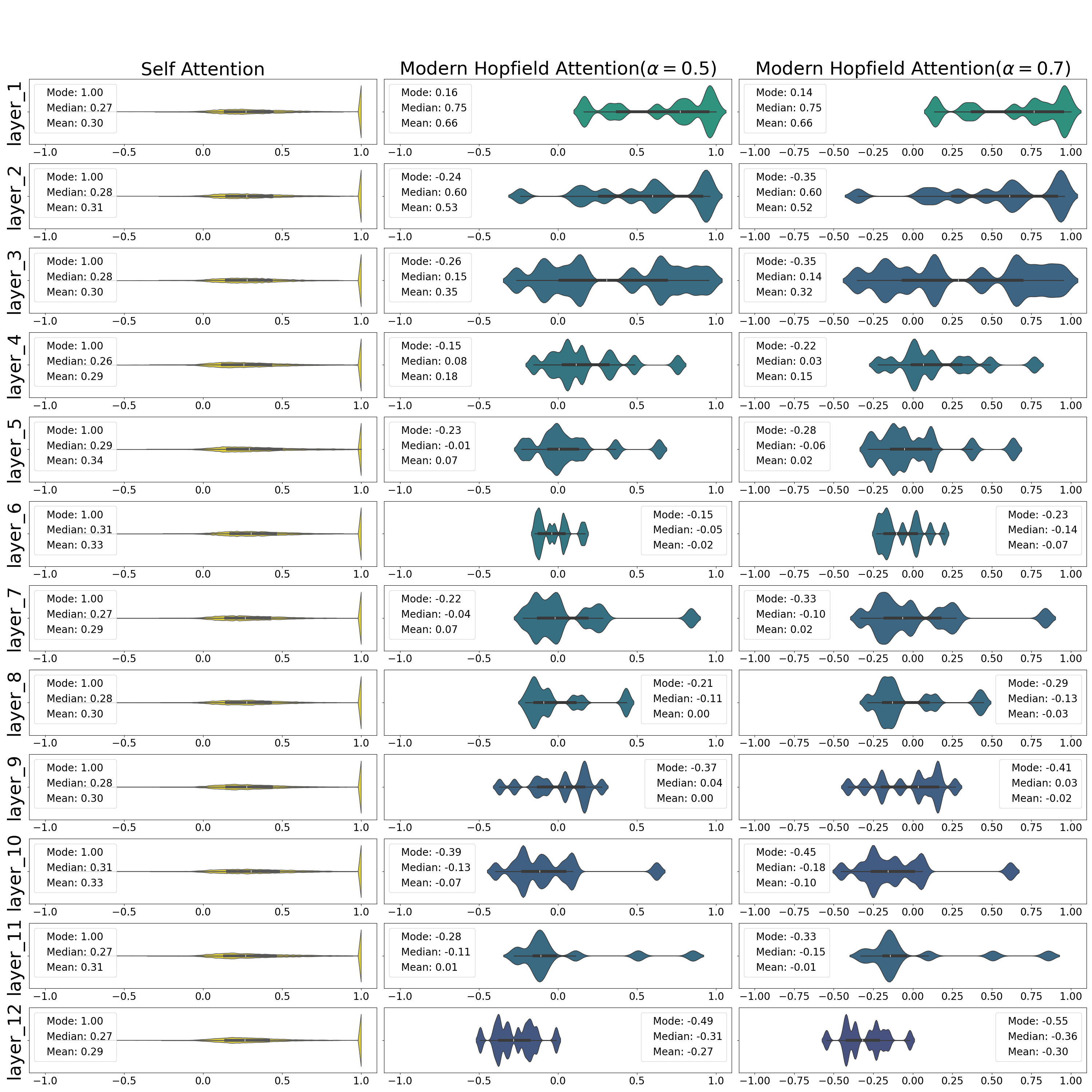}}
\caption{The violin plots of the cosine similarity of ViT-B with usual self-attention, MHA for $\alpha=0.5$
and  MHA for $\alpha=0.7$.
The model is trained with CIFAR10.}
\label{fig:cossim_cifar10_base}
\end{center}
\end{figure}

\begin{figure}[ht]
\begin{center}
\centerline{\includegraphics[width=0.75\columnwidth]{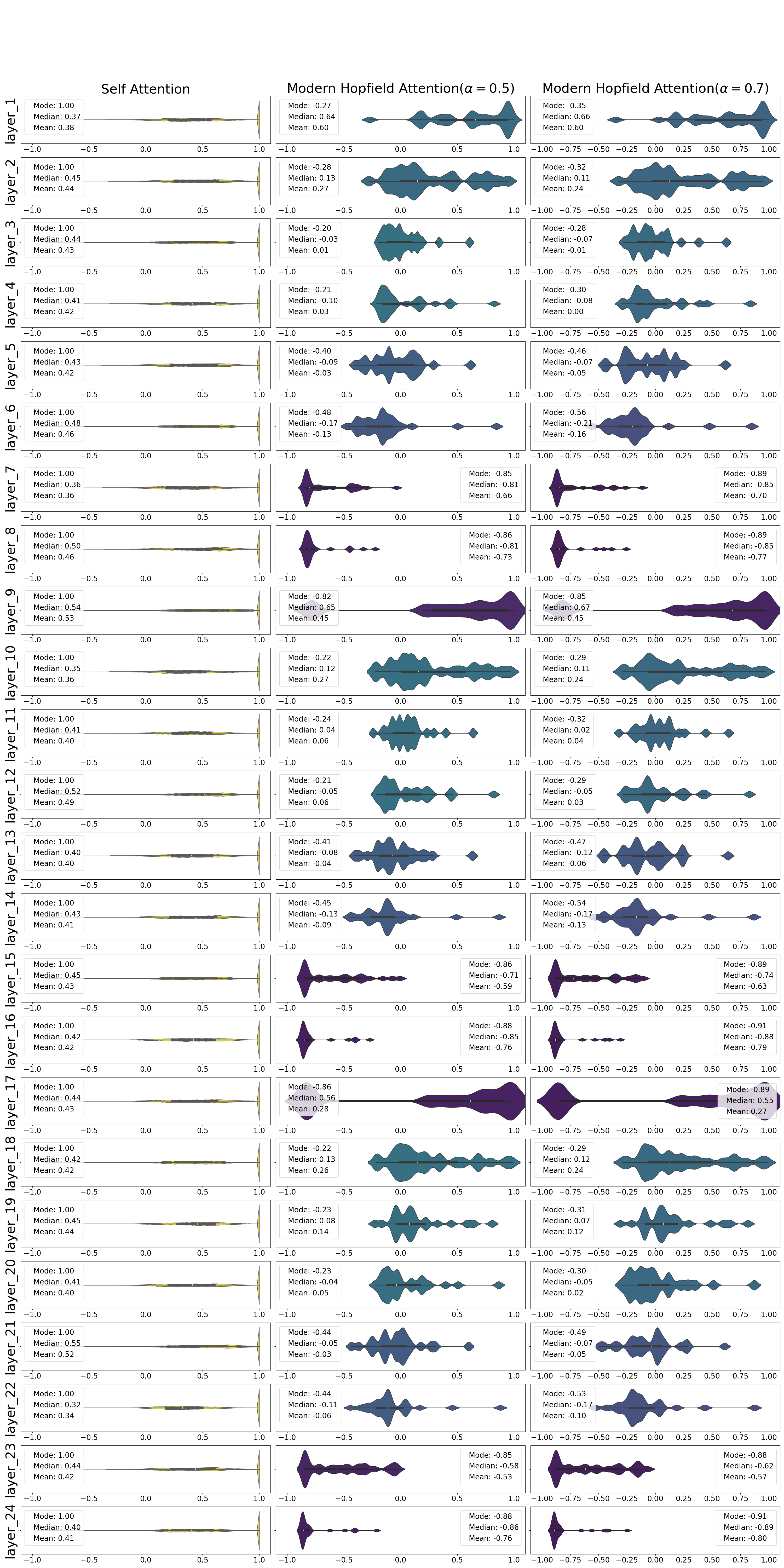}}
\caption{The violin plots of the cosine similarity of ViT-L with usual self-attention, MHA for $\alpha=0.5$
and  MHA for $\alpha=0.7$.
The model is trained with CIFAR10.}
\label{fig:cossim_cifar10_large}
\end{center}
\end{figure}

\begin{figure}[ht]
\begin{center}
\centerline{\includegraphics[width=0.95\columnwidth]{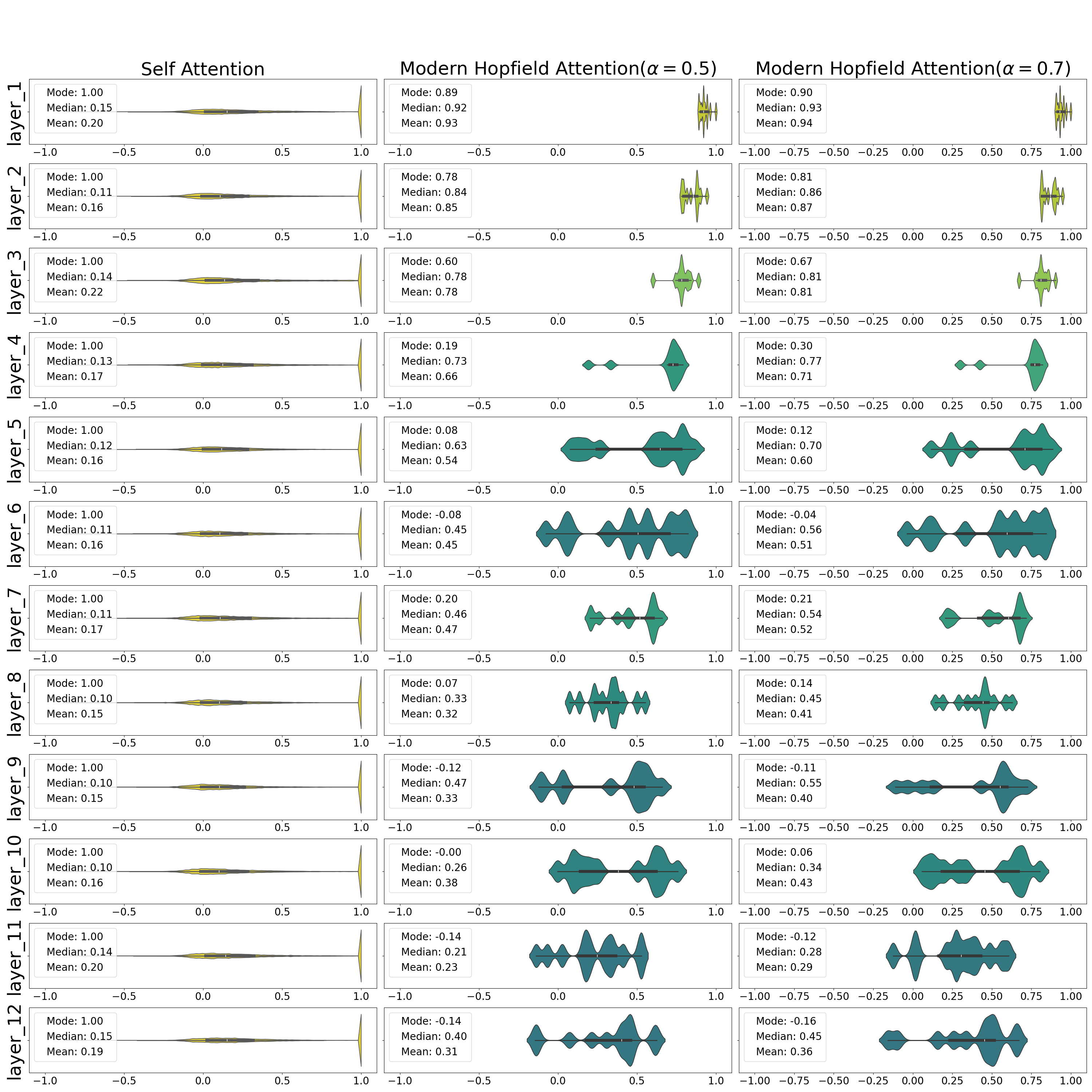}}
\caption{The violin plots of the cosine similarity of ViT-T with usual self-attention, MHA for $\alpha=0.5$
and  MHA for $\alpha=0.7$.
The model is trained with CIFAR100.
We can see that the group of perfectly aligned tokens that exists at a peak around a similarity of $1$ in self-attention disappears in the MHA cases.}
\label{fig:cossim_cifar100_tiny}
\end{center}
\end{figure}

\begin{figure}[ht]
\begin{center}
\centerline{\includegraphics[width=0.95\columnwidth]{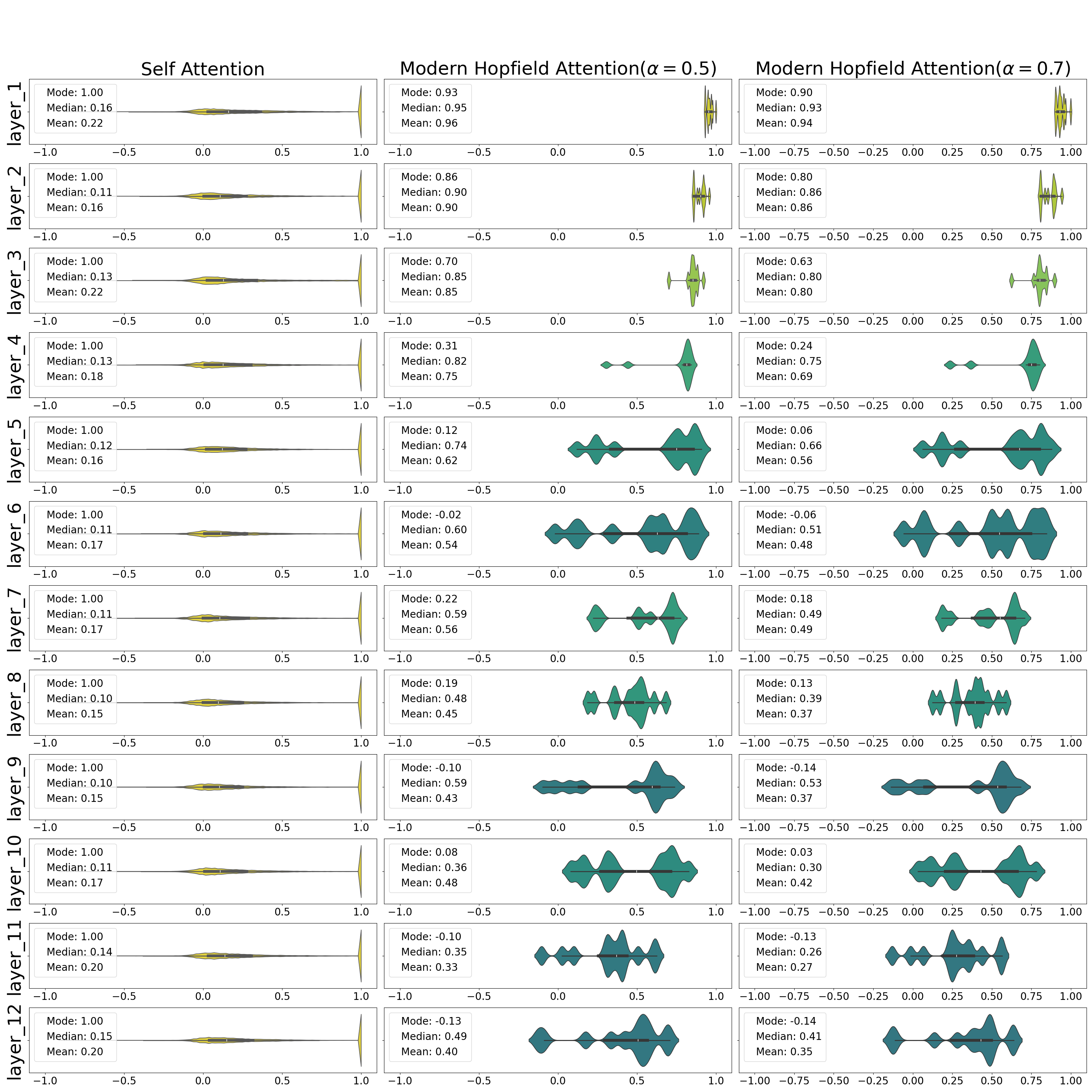}}
\caption{The violin plots of the cosine similarity of ViT-S with usual self-attention, MHA for $\alpha=0.5$
and  MHA for $\alpha=0.7$.
The model is trained with CIFAR100.}
\label{fig:cossim_cifar100_small}
\end{center}
\end{figure}

\begin{figure}[ht]
\begin{center}
\centerline{\includegraphics[width=0.95\columnwidth]{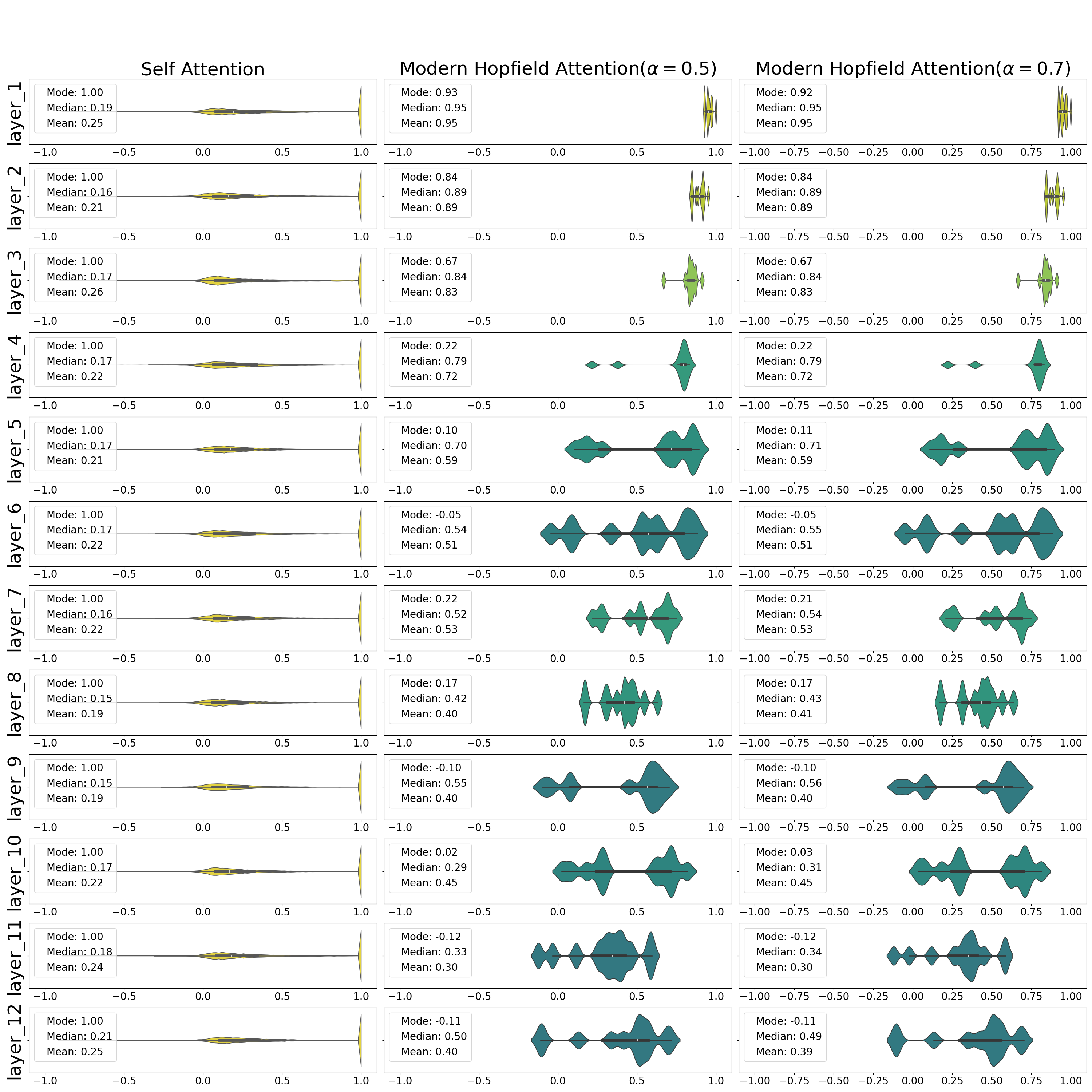}}
\caption{The violin plots of the cosine similarity of ViT-B with usual self-attention, MHA for $\alpha=0.5$
and  MHA for $\alpha=0.7$.
The model is trained with CIFAR100.}
\label{fig:cossim_cifar100_base}
\end{center}
\end{figure}

\begin{figure}[ht]
\begin{center}
\centerline{\includegraphics[width=0.75\columnwidth]{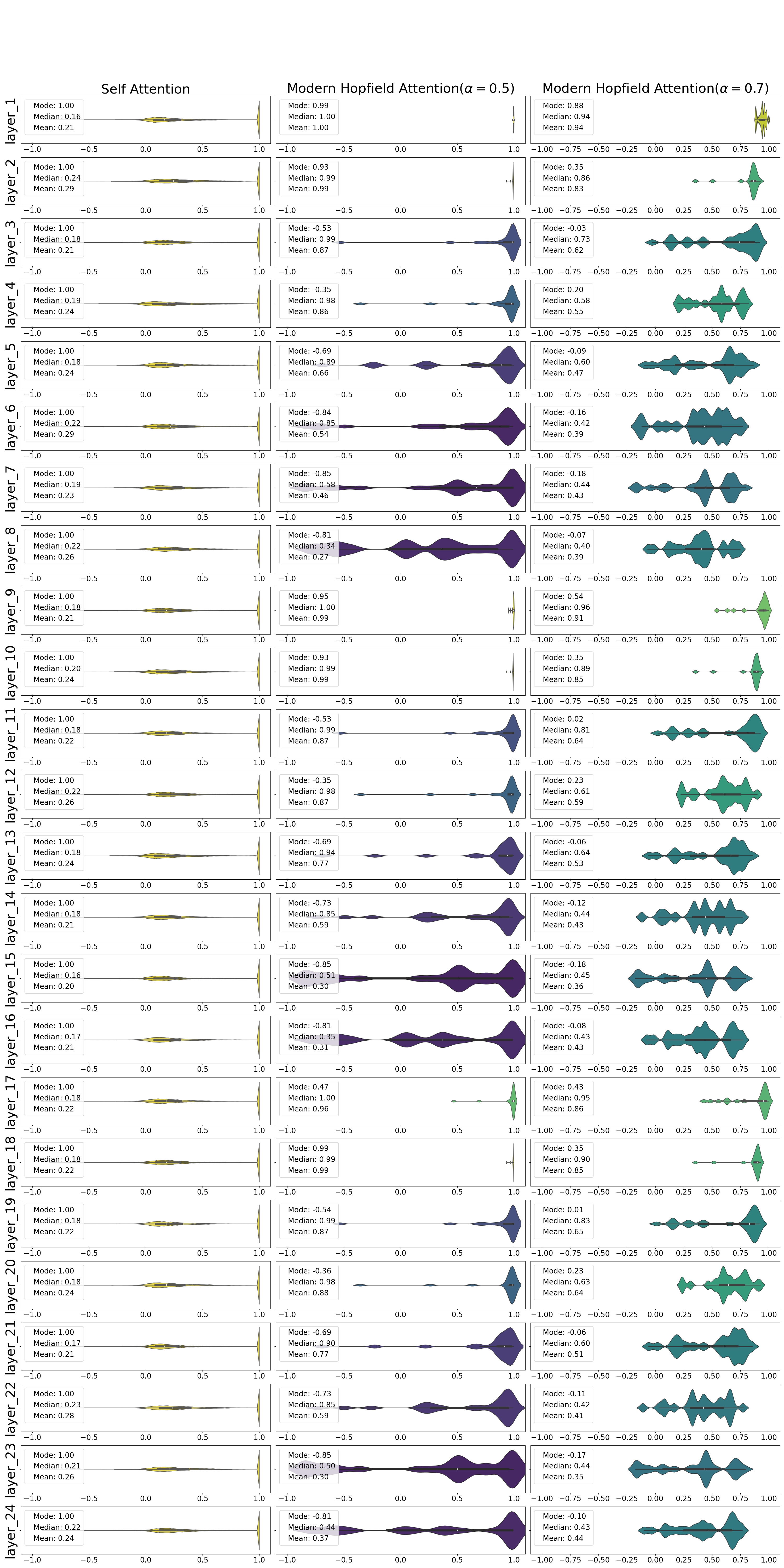}}
\caption{The violin plots of the cosine similarity of ViT-L with usual self-attention, MHA for $\alpha=0.5$
and  MHA for $\alpha=0.7$.
The model is trained with CIFAR100.}
\label{fig:cossim_cifar100_large}
\end{center}
\end{figure}

\section{Entropy Collapse}

Let's also look at other problems that deep Transformers have besides rank collapse.
Attention entropy collapse \cite{ghader2017does} is a measure of how much attention is focused on a small or large number of tokens, i.e., the degree of concentration of the attention distribution. A high attention entropy means that attention is directed to a large number of tokens, and the Transformer is distributing attention over a wide context. This is expected to allow each layer to construct a well contextualized embedding vector. On the other hand, when entropy is low, attention is directed to a small number of tokens.

When entropy is particularly low, it causes the problem of attention entropy collapse \cite{zhai2023stabilizing}, which is a source of instability in Transformer training. This problem tends to occur when hyperparameters are not carefully set. We also know that concentration of attentions can lead to overfitting to some undesirable vocabulary, which can greatly impair the generalization and fairness of the language model \cite{attanasio2022entropy,zayed2023should}.

Figure \ref{fig:cifar10_tiny_standard}-\ref{fig:cifar10_large_05}
compares the attention entropy of the vanila Transformers and our model. 
As can be seen from these figures, attention entropy tends to be small in some layers, but this property does not change even if attention weights are improved by MHA. In other words, MHA does not improve the performance of Transformer by improving entropy collapse.

Rank collapse means that diverse token vectors cannot be achieved, while entropy collapse means that uniform token mixing cannot be achieved. Therefore, this result is not surprising, as these two phenomena seem to be mutually exclusive. On the other hand, recent theoretical analysis \cite{bao2024self} shows that rank collapse and entropy collapse are compatible phenomena, and further research in these contexts is an interesting future direction.


\begin{figure}[h]
\begin{center}
\centerline{\includegraphics[width=0.95\columnwidth]{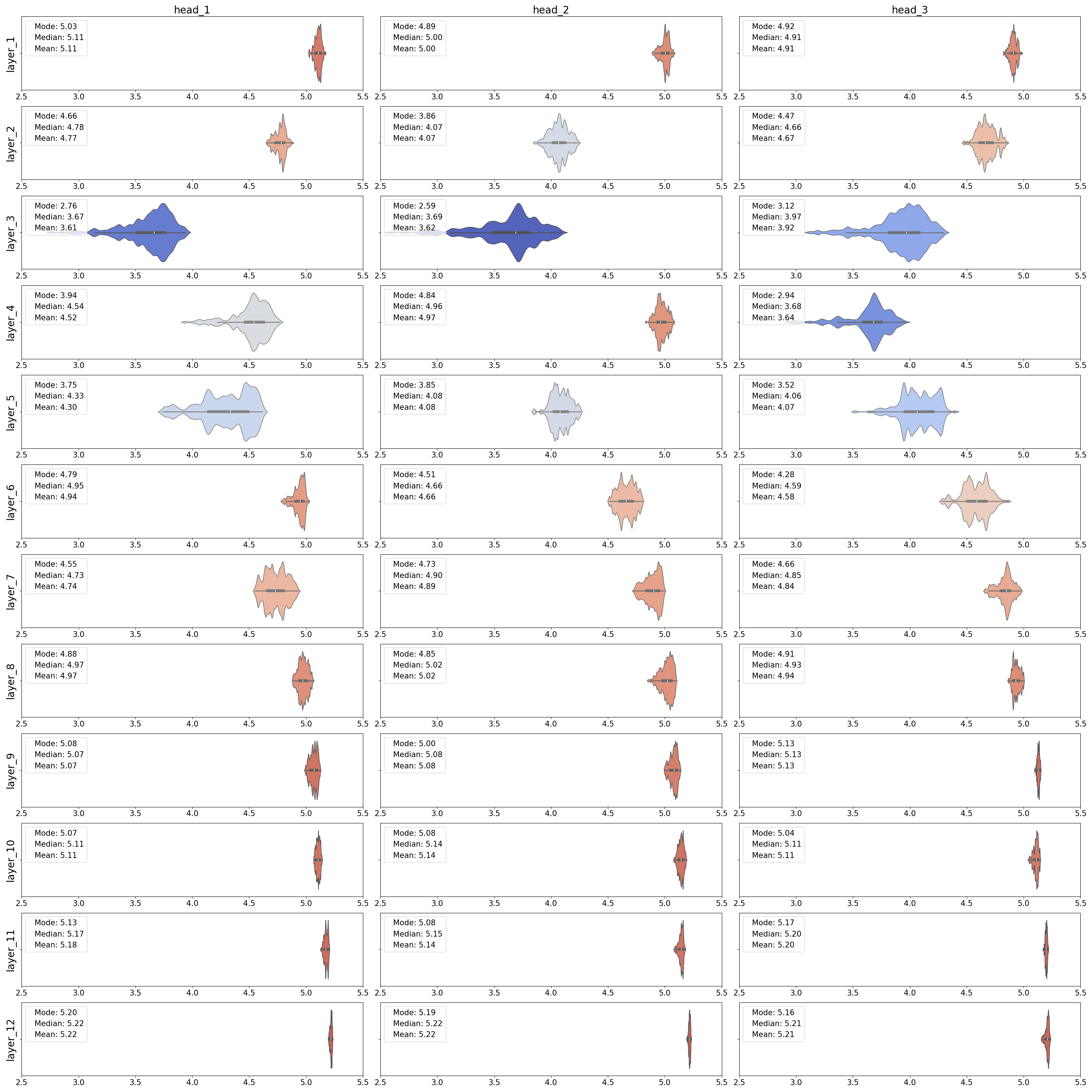}}
\caption{The violin plot of the attentional entropy for each layer and each head of ViT-T trained with CIFAR10 is shown.}
\label{fig:cifar10_tiny_standard}
\end{center}
\end{figure}

\begin{figure}[h]
\begin{center}
\centerline{\includegraphics[width=0.95\columnwidth]{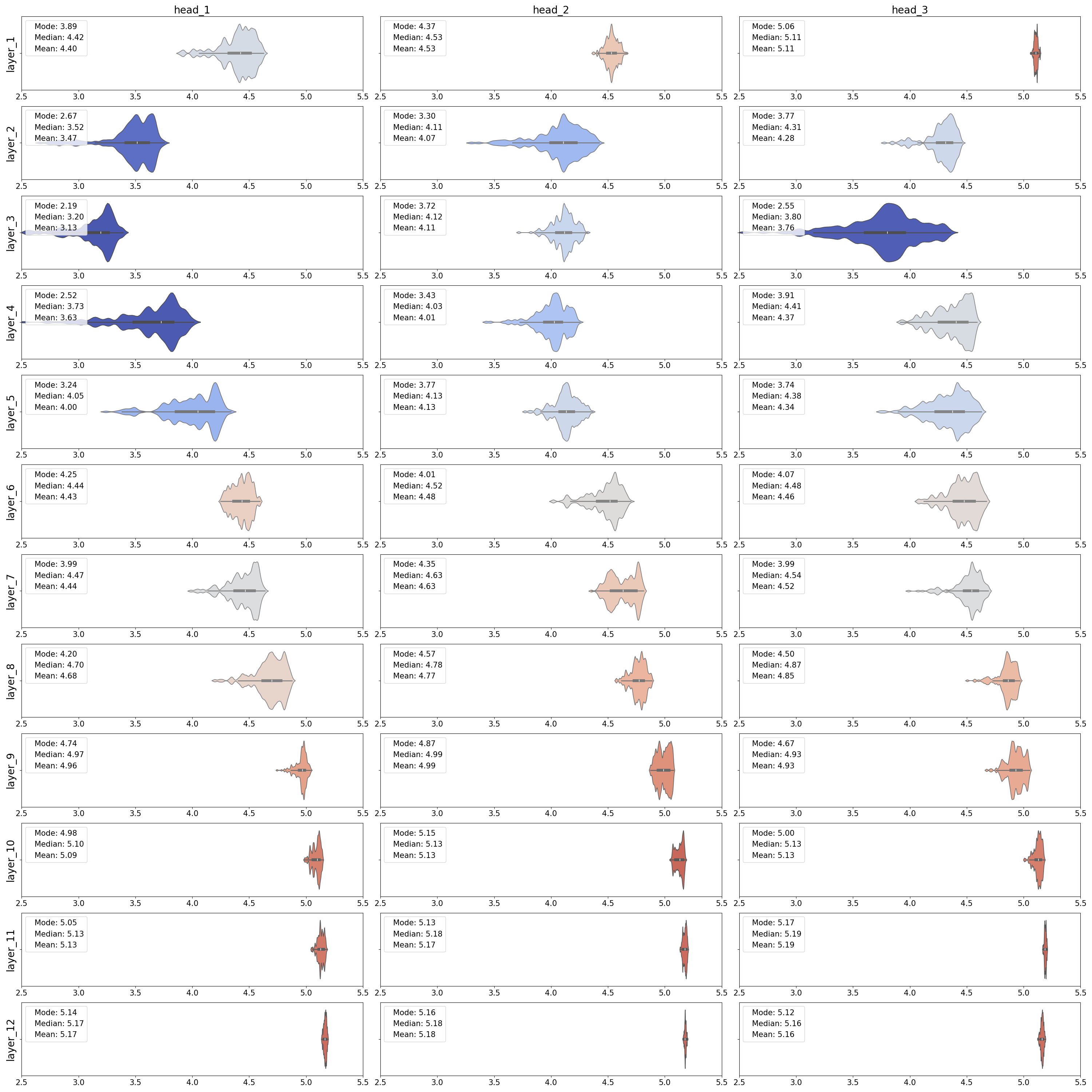}}
\caption{The violin plot of the attentional entropy for each layer and each head of MHA version of ViT-T ($\alpha=0.5$) trained with CIFAR10 is shown.}
\label{fig:cifar10_tiny_05}
\end{center}
\end{figure}



\begin{figure}[h]
\begin{center}
\centerline{\includegraphics[width=0.95\columnwidth]{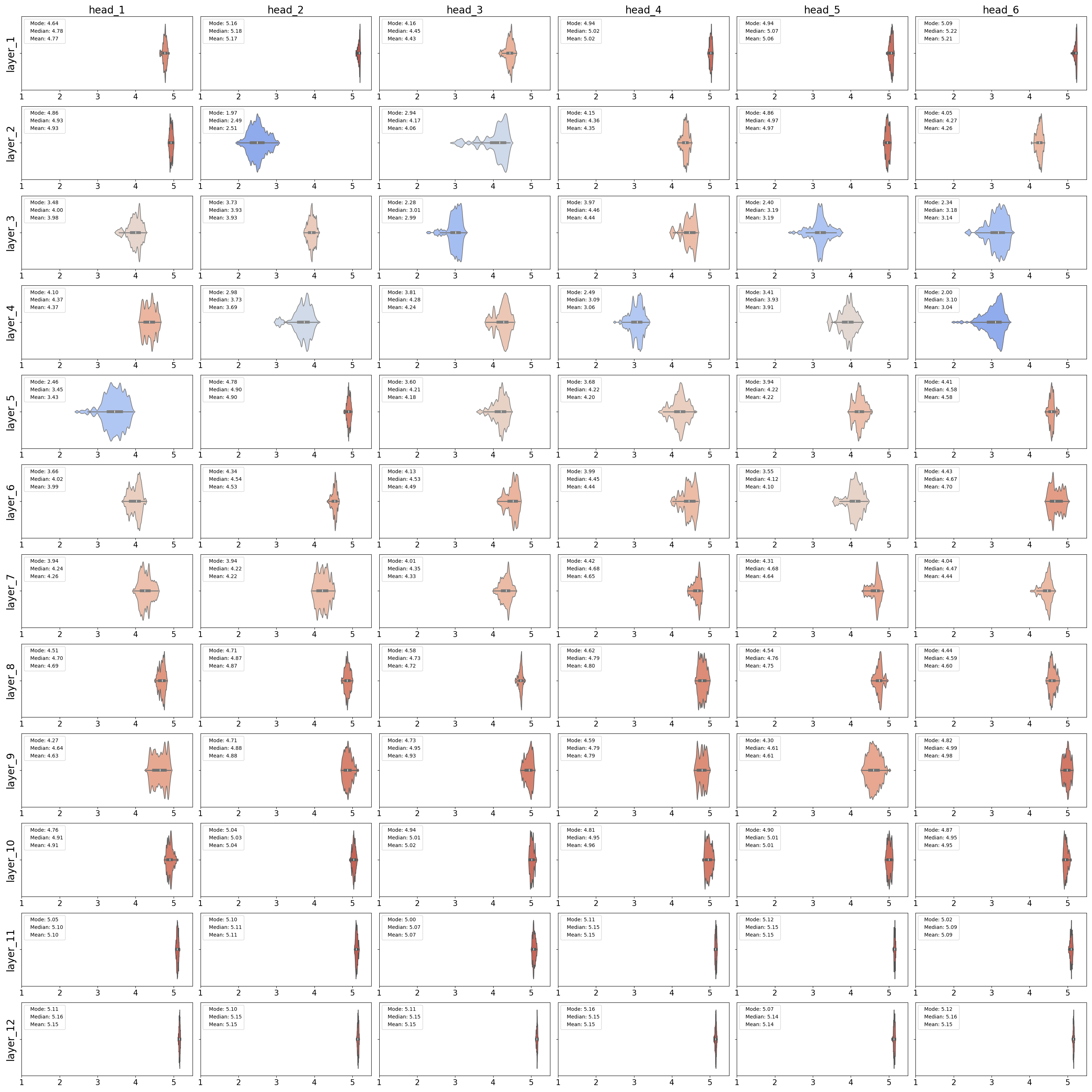}}
\caption{The violin plot of the attentional entropy for each layer and each head of of ViT-S trained with CIFAR10 is shown.}
\label{fig:cifar10_small_standard}
\end{center}
\end{figure}

\begin{figure}[h]
\begin{center}
\centerline{\includegraphics[width=0.95\columnwidth]{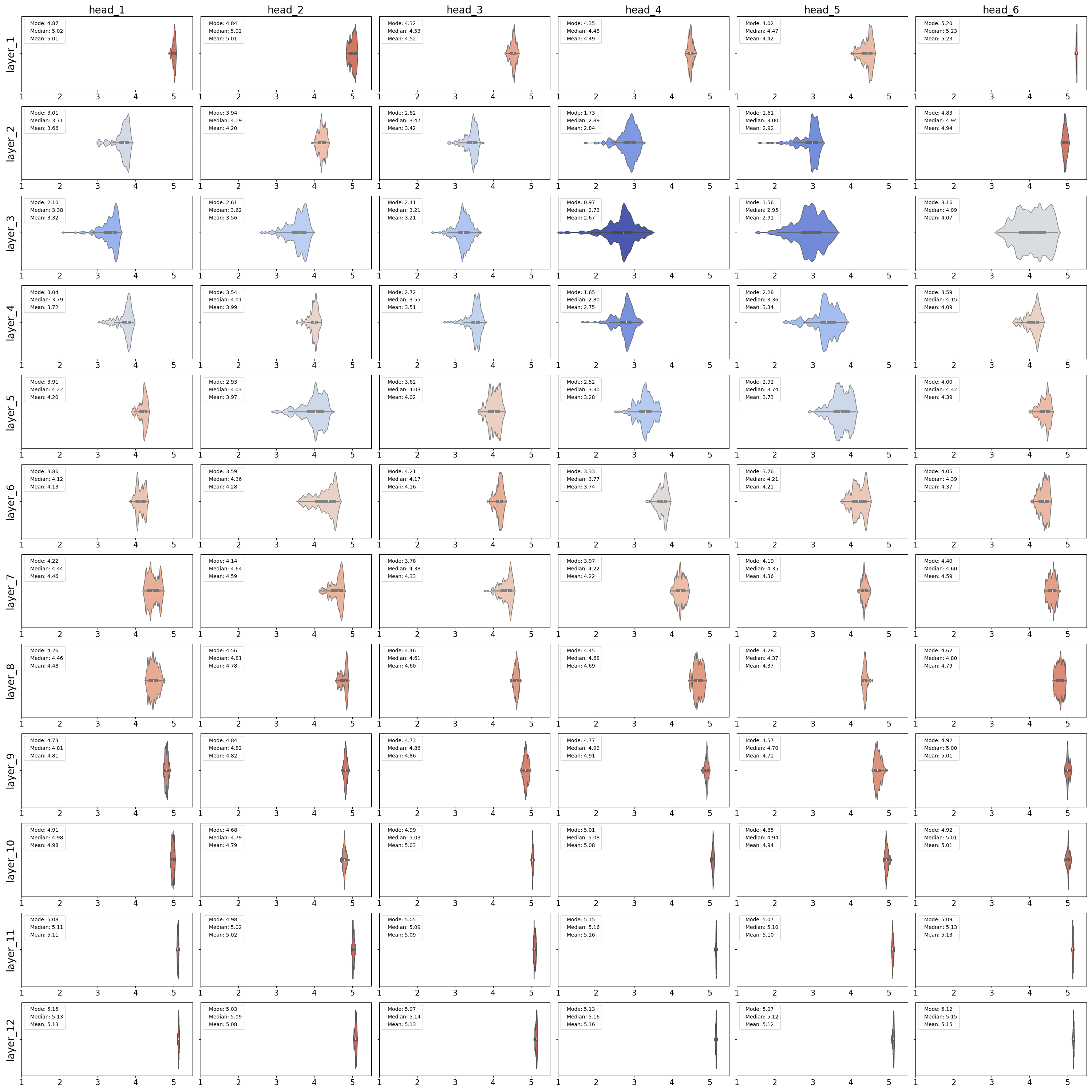}}
\caption{The violin plot of the attentional entropy for each layer and each head of MHA version of ViT-S ($\alpha=0.5$) trained with CIFAR10 is shown.}
\label{fig:cifar10_small_05}
\end{center}
\end{figure}


\begin{figure}[h]
\begin{center}
\centerline{\includegraphics[width=0.95\columnwidth]{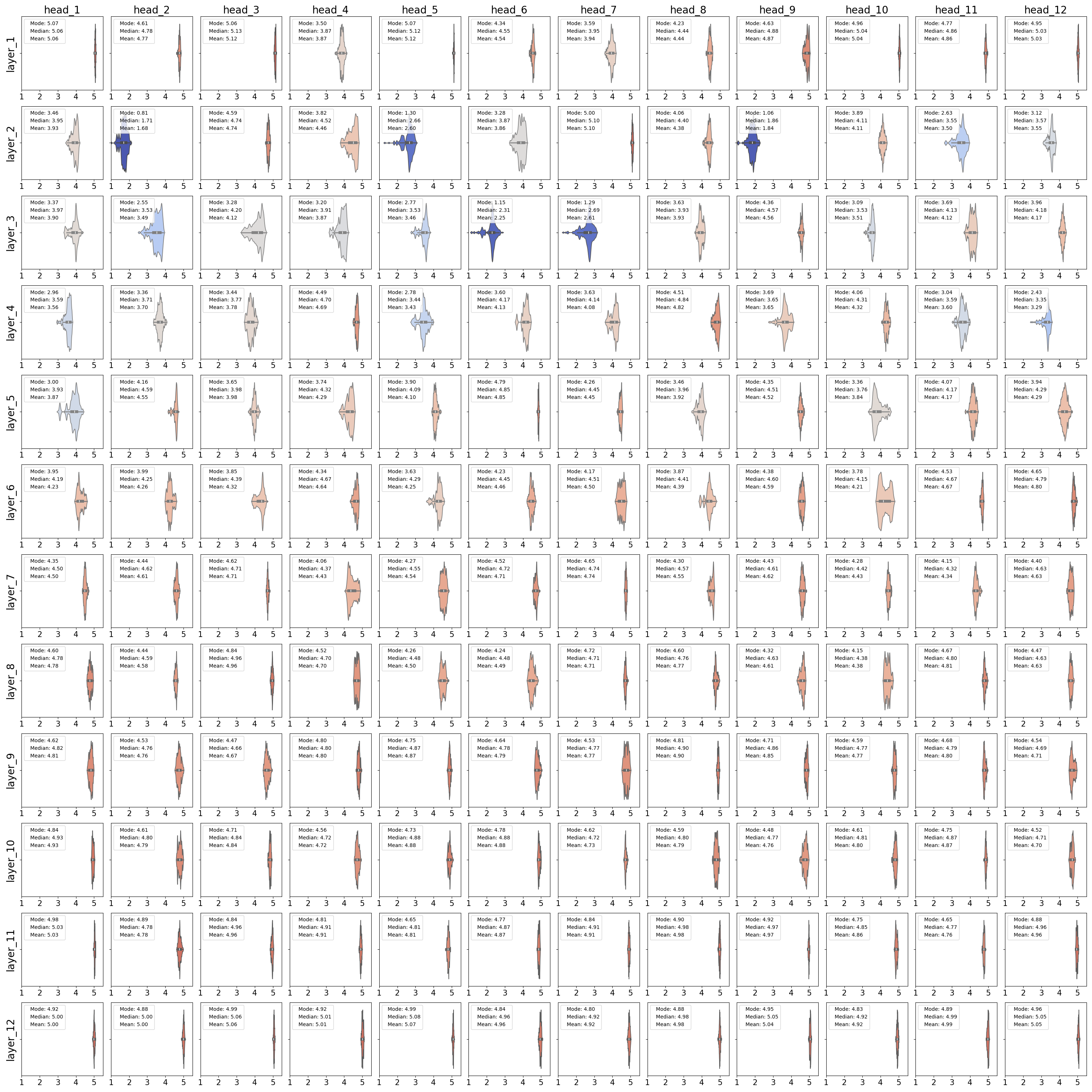}}
\caption{The violin plot of the attentional entropy for each layer and each head of ViT-B trained with CIFAR10 is shown.}
\label{fig:cifar10_base_standard}
\end{center}
\end{figure}

\begin{figure}[h]
\begin{center}
\centerline{\includegraphics[width=0.95\columnwidth]{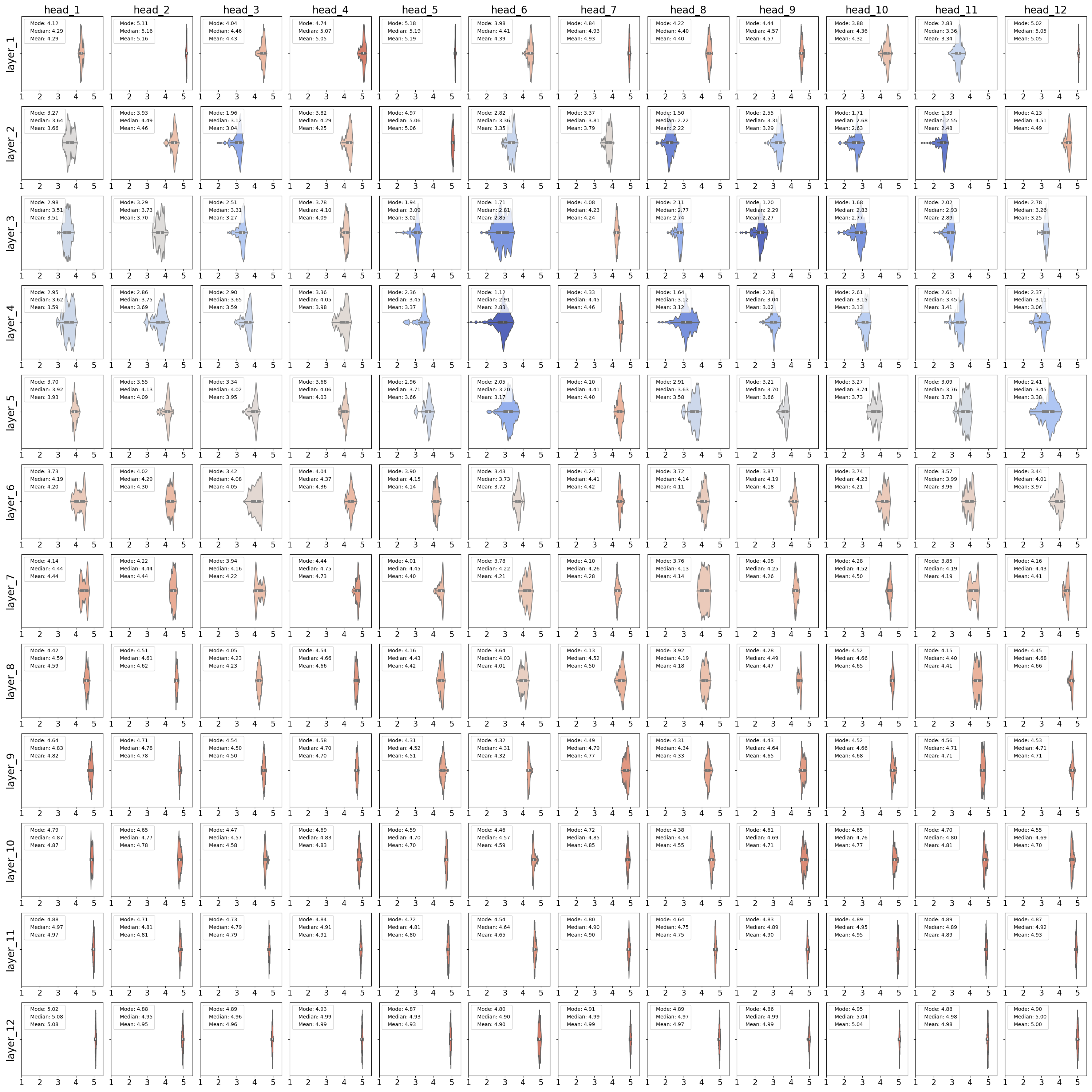}}
\caption{The violin plot of the attentional entropy for each layer and each head of MHA version of ViT-B ($\alpha=0.5$) trained with CIFAR10 is shown.}
\label{fig:cifar10_base_05}
\end{center}
\end{figure}


\begin{figure}[h]
\begin{center}
\centerline{\includegraphics[width=0.95\columnwidth]{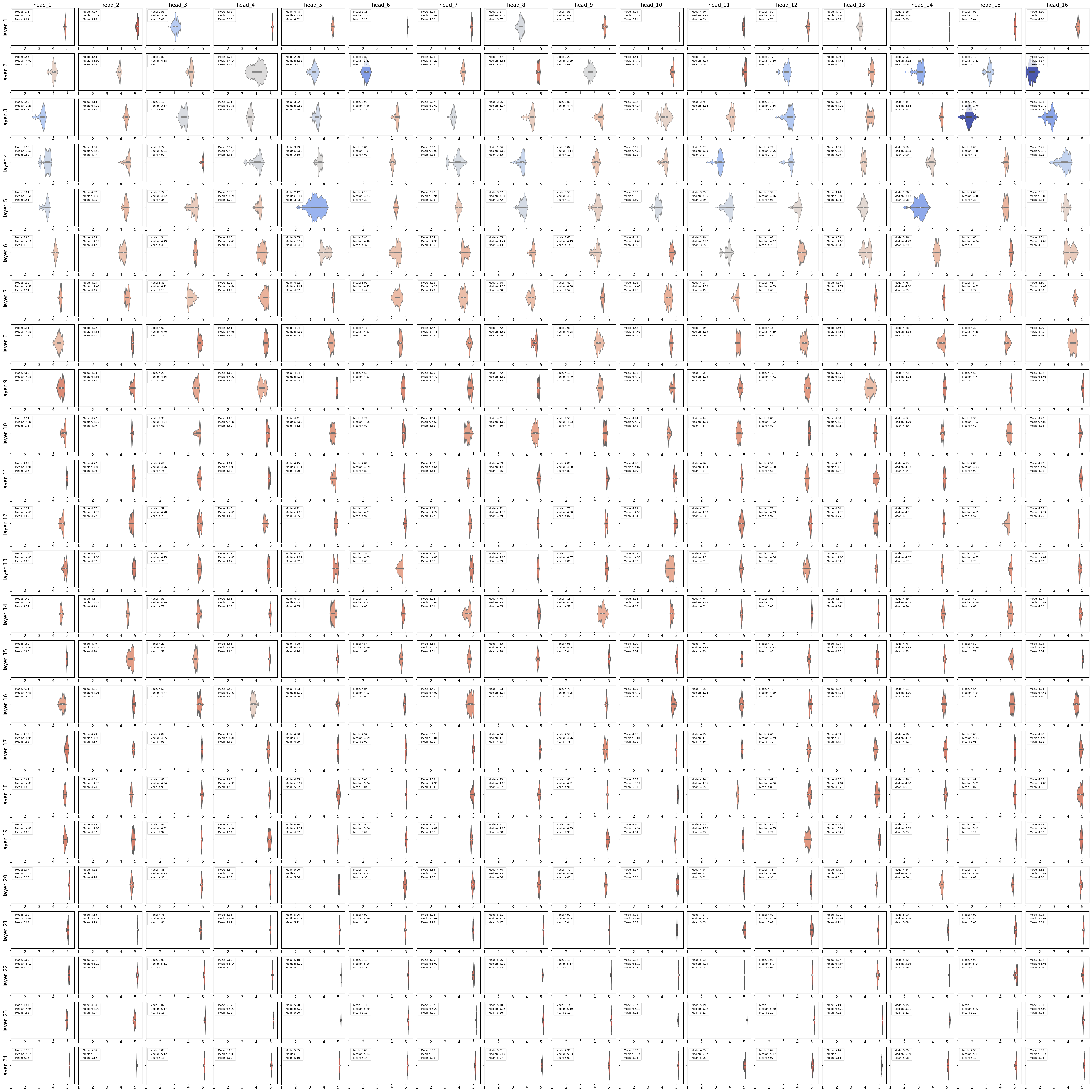}}
\caption{The violin plot of the attentional entropy for each layer and each head of ViT-L trained with CIFAR10 is shown.}
\label{fig:cifar10_large_standard}
\end{center}
\end{figure}

\begin{figure}[h]
\begin{center}
\centerline{\includegraphics[width=0.95\columnwidth]{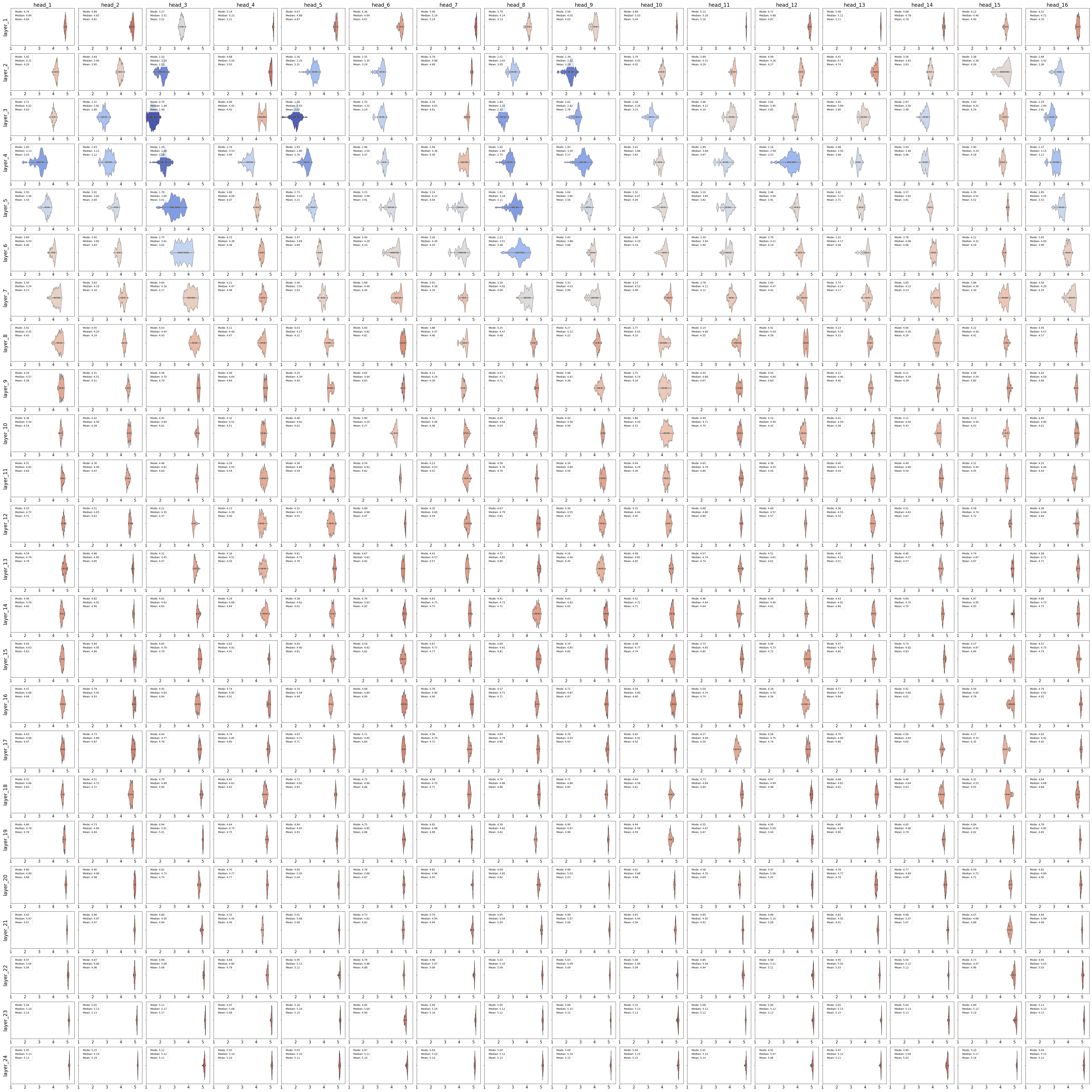}}
\caption{The violin plot of the attentional entropy for each layer and each head of MHA version of ViT-L ($\alpha=0.5$) trained with CIFAR10 is shown.}
\label{fig:cifar10_large_05}
\end{center}
\end{figure}

\end{document}